\documentclass[letterpaper,journal]{IEEEtran}
\usepackage[url=false]{biblatex}
\usepackage{algorithmic}
\usepackage{algorithm}
\usepackage{array}
\usepackage[caption=false,font=footnotesize]{subfig}
\usepackage{textcomp}
\usepackage{stfloats}
\usepackage{url}
\usepackage{verbatim}
\usepackage{graphicx}
\usepackage{capt-of}
\usepackage{amsmath,amsthm,amssymb, amsfonts}
\usepackage{bm}
\usepackage{xcolor}
\usepackage{booktabs,tabularx}
\usepackage{enumitem}
\usepackage{multirow}
\usepackage[most]{tcolorbox}
\usepackage[hidelinks]{hyperref}
\newcolumntype{L}{>{\raggedright\arraybackslash}p{0.16\textwidth}}
\newcolumntype{C}{>{\raggedright\arraybackslash}X}
\newcolumntype{R}{>{\raggedright\arraybackslash}p{0.16\textwidth}}

\newtheorem{theorem}{Theorem}[section]
\newtheorem{corollary}{Corollary}[section]
\newtheorem{lemma}[theorem]{Lemma}
\theoremstyle{definition}
\newtheorem{definition}{Definition}[section]
\theoremstyle{remark}
\newtheorem{remark}{Remark}
\theoremstyle{definition}
\newtheorem{example}{Example}
\newcommand{\examplecont}{\addtocounter{example}{-1}}
\tcolorboxenvironment{example}{
    enhanced,
    breakable,
    colback=blue!4,
    colframe=blue!55!black,
    boxrule=0pt,
    leftrule=2pt,
    sharp corners,
    boxsep=1pt,
    left=6pt,
    right=6pt,
    top=2pt,
    bottom=2pt,
    before skip=4pt,
    after skip=4pt,
}

\makeatletter
\renewcommand\subsection{\@startsection{subsection}{2}{\z@}%
  {1.5ex plus 0.5ex minus 0.5ex}%
  {0.4ex plus 0.2ex minus 0.1ex}%
  {\normalfont\normalsize\itshape}}
\makeatother
\DeclareMathOperator*{\argmin}{arg\,min}

\newtheorem{assumption}{Assumption}
\newtheorem{proposition}{Proposition}

\newcommand{\methodname}{\texttt{SafePBDS}}
\newcommand{\methodnamelong}{Safe Pullback Bundle Dynamical Systems}

\begin{document}

\title{Safe and Steerable Geometric Motion Policies for\\ Robotic Dexterous Manipulation}

\author{Albert Wu$^{1}$, Riccardo Bonalli$^{2}$, Thomas Lew$^{3}$, C. Karen Liu$^{1}$%
\thanks{\raggedright$^{1}$Computer Science Department, Stanford University, Stanford, CA 94305, USA.
\texttt{\{amhwu,~karenliu\}@cs.stanford.edu}}%
\thanks{\raggedright$^{2}$Laboratory of Signals and Systems, University of Paris-Saclay,
CNRS, CentraleSupélec, France. \texttt{riccardo.bonalli@l2s.centralesupelec.fr}}%
\thanks{\raggedright$^{3}$Toyota Research Institute, Los Altos, CA 94022, USA. \texttt{thomas.lew@tri.global}}%
}
\maketitle

\begin{abstract}
Robotic dexterous manipulation requires continuously reconciling objectives and constraints defined on heterogeneous geometric spaces: a robot controlled on a $\mathbb{R}^{7}$ configuration manifold may need to track end effector poses on $\mathrm{SE}(3)$ while satisfying obstacle avoidance margins in $\mathbb{R}$. We present \methodnamelong{} (\methodname{}), a geometrically consistent motion generation framework that computes optimal, certifiably safe configuration manifold accelerations from objectives and safety requirements defined on arbitrary task manifolds. \methodname{} builds on prior work that combines predefined task manifold dynamical systems to produce autonomous motion. Its first innovation is a pullback control barrier function construction, which converts task manifold safety conditions into linear constraints on configuration manifold accelerations. The second innovation is a task manifold action interface that allows a high-level policy to inject low dimensional residual motions; zero input recovers the autonomous behavior, while safety is preserved under arbitrary inputs. This enables high-level policies to efficiently steer exploration while leaving precise motion generation to the autonomous behavior. We validate \methodname{} in simulation and on a 23-DOF Franka Panda--Allegro Hand platform. On dexterous grasping, \methodname{} achieves a $92.5\%$ success rate across 20 household objects and 120 trials. By leveraging the action interface, the method can exclude any one of the four fingers during grasping using a one-dimensional action, achieving $94.4\%$ 3-finger grasp success across 3 objects and 36 trials. The efficient planning and safety guarantee of \methodname{} also enables the first model-based, fully actuated palm-down in-hand reorientation, exceeding $360^\circ$ of yaw rotation in both directions under varying object weight and wrist motion. A demo video and additional details are available at \href{https://tml.stanford.edu/safe-pbds}{\texttt{tml.stanford.edu/safe-pbds}}.
\end{abstract}

\begin{IEEEkeywords}
        Constrained motion planning, dexterous manipulation, grasping, robot safety, in-hand manipulation, multifingered hands.
\end{IEEEkeywords}

\section{Introduction}

Robotic dexterous manipulation requires reasoning over quantities defined on different geometric spaces, including positions in Cartesian space, orientations on $\mathrm{SO}(3)$, and joint limits in generalized coordinates. Some of these quantities correspond to soft objectives, such as null-space redundancy resolution and task prioritization, while others impose hard constraints that must never be violated, such as collision avoidance, closed-loop kinematic constraints, and force closure. Moreover, real-world perturbations and modeling errors require the system to respond quickly via feedback during execution. Successful manipulation therefore demands continuously resolving objectives and constraints defined on heterogeneous geometric spaces in real time.

\begin{figure}[!t]
    \centering
    \includegraphics[width=\columnwidth]{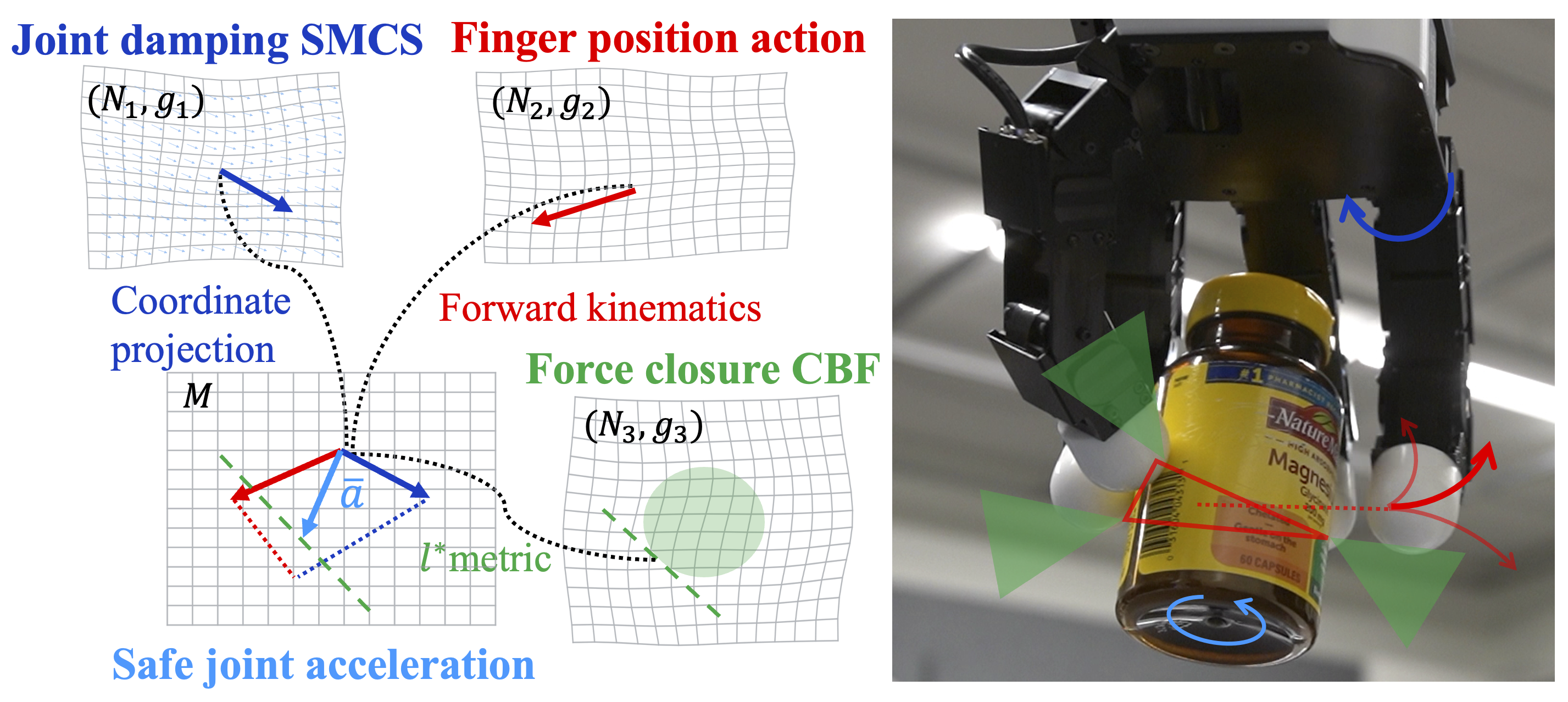}
    \caption{Overview of \methodname{}. At each control step, tasks on heterogeneous manifolds $(N_i,g_i)$ are pulled back to the configuration manifold~$M$ and composed into a safe acceleration~$\bar{a}$ via quadratic programs. (Left) Representative tasks include an autonomous joint-damping SMCS (blue), a high-level finger-position action (red), and a force-closure CBF safety constraint (green), each connected to~$M$ by a task map. (Right) On a 16-DOF Allegro Hand, these components combine to achieve palm-down in-hand reorientation (Section~\ref{sec:in_hand_reorientation}).}
    \label{fig:headline}
\end{figure}

Contemporary data driven methods attempt to sidestep these challenges by learning end-to-end policies that implicitly encode all objectives and constraints. Deep reinforcement learning (DRL)~\cite{andrychowicz2020learning,handa2023dextreme,chen2023visual,qi2023general} and behavior cloning~\cite{zhu2023viola,chi2023diffusion,zhao2023learning} can capture globally coordinated behaviors and are often straightforward to deploy. However, the resulting policies often remain specialized to a single task or embodiment~\cite{weinberg2024survey}. Vision-language-action models (VLAs)~\cite{geminirobotics2025,physicalintelligence2026pi07,tri2025lbm} have shown impressive generalization on arm-gripper manipulation, but their extension to multi-fingered hands remains limited by the complexity of dexterous motion and the difficulty of collecting multi-fingered manipulation data. 
Additionally, all aforementioned black-box policies can fail catastrophically outside the training distribution, motivating the need to incorporate certifiable structure for safety guarantees.

At the other end of the spectrum, trajectory optimization~\cite{posa2014direct,shirai2022robust,pang2023global,morgan2022complex} can explicitly reason about constraints and dynamics to produce certifiable motion plans. However, solving these optimizations is computationally expensive, prone to getting stuck in local minima, require specific engineering tricks such as fixed contact pairs, and the resulting plans are essentially one-shot. Adapting them to online perturbations requires expensive replanning at rates that are typically incompatible with high-frequency dexterous manipulation control. 

A third family of approaches is the \emph{vector field policy}, which maps each state to a desired velocity or acceleration through a globally defined vector field~\cite{ratliff2018riemannian,bylard2021composable}. Because these policies are defined analytically, they enable fast reactive behavior, and properties such as stability can often be certified by construction. Several frameworks compose per-task vector fields using Riemannian geometry~\cite{ratliff2018riemannian,cheng2021rmpflow,ratliff2020optimization,van2022geometric,bylard2021composable}, but existing formulations either lack geometric consistency or treat safety as a soft property~\cite{ratliff2020optimization,van2022geometric}. Moreover, because most vector field policies are inherently local, they can become trapped in local minima. A recent example, DextrAH-G~\cite{lum2024dextrahg}, combines vector field policies with learning-based methods to mitigate this lack of global planning, but the resulting policies remain limited by the vector field formulation.

Among vector field policies, Pullback Bundle Dynamical Systems (PBDS)~\cite{bylard2021composable} stands out for achieving geometric consistency. By building on the coordinate invariant simple mechanical control system (SMCS), PBDS provides a fully geometric formulation that works on arbitrary Riemannian manifolds. However, PBDS has two important limitations. First, \emph{safety constraints are soft}: metric-based constraints enter the optimization as additional weighted objectives and can therefore be violated when they conflict with other tasks; moreover, these objectives may become ill-defined in unsafe regions. Second, \emph{there is no action interface}: the system provides no mechanism for a higher-level policy to inject task manifold commands, limiting its use to autonomous policies fixed at design time.

We present \methodname{} (\methodnamelong{}), a framework that addresses both limitations of PBDS while preserving the geometric and compositional properties. \methodname{} extends PBDS with hard safety guarantees and external controllability while retaining its geometric consistency. At each control step, a constrained quadratic program takes in real-time observations and composes desired autonomous behaviors, safety constraints, and action inputs into a single acceleration target. This is enabled by the following:
\begin{itemize}[leftmargin=*, noitemsep, topsep=2pt]
    \item \emph{Pullback control barrier functions (CBFs)} that enforce task manifold safety as hard constraints on the configuration manifold acceleration by pulling back the constraints through smooth task maps. We derive formulations for two prominent higher-order CBF variants, exponential (ECBF) and backstepping (BCBF), and show their respective dependencies on task manifold geometries.
    \item A \emph{task manifold action interface} that lets a higher-level policy inject inputs on selected task manifolds. Our formulation guarantees that zero input recovers the autonomous behavior and that safety is preserved under arbitrary inputs.
\end{itemize}
We also provide extensive evaluations of \methodname{}:
\begin{itemize}[leftmargin=*, noitemsep, topsep=2pt]
    \item \emph{Simulation experiments} on an $\mathbb{S}^2$ double integrator and a 7-DOF robot arm validate the theoretical properties of our framework: chart invariance, task manifold metric effects, recovery from unsafe states, and the action interface.
    \item \emph{Hardware experiments} on a 23-DOF arm--hand system (Franka Panda + Allegro Hand) demonstrate
        (a)~autonomous dexterous grasping of 20 household objects at a 92.5\% success rate (111/120 trials), including a 3-finger ablation at 94.4\% (34/36); and
        (b)~robust in-hand reorientation under palm-down and variable wrist orientations, achieving over $360^\circ$ rotation in both directions.
\end{itemize}

The remainder of this paper is organized as follows. Section~\ref{sec:related_work} reviews related literature. Section~\ref{sec:preliminaries} introduces the necessary theoretical background. Section~\ref{sec:task_manifold_cbf} derives the pullback ECBF and BCBF formulations. Section~\ref{sec:task_manifold_actions} presents the task manifold action interface. Sections~\ref{sec:7dof_simulation} and~\ref{sec:hardware_experiments} present simulation and hardware experiments, respectively.
\section{Related work}
\label{sec:related_work}
Given the breadth of the motion planning and robotic manipulation literature, we restrict our attention to topics directly related to this work. For the adjacent literature, we refer the reader to recent surveys on optimization-based task and motion planning~\cite{zhao2024optimization}, robot manipulation in contact~\cite{suomalainen2022survey}, motion planning for manipulators in dynamic environments~\cite{liu2024review}, deep reinforcement learning for robotics~\cite{tang2025deep}, and imitation learning for contact-rich manipulation~\cite{tsuji2026imitation}.

\subsection{Multi-task motion planning on manifolds}
\label{sec:rw_ds}
Reactive multi-objective control in Euclidean task manifolds is classically addressed by operational space control~\cite{khatib1987unified,sentis2005synthesis,siciliano1991general}, which composes Cartesian objectives through null space projection. This line of work establishes key ingredients but does not provide a geometrically consistent or coordinate free framework for composing manifold valued tasks.

A major step toward manifold-valued policy composition is the introduction of \emph{Riemannian Motion Policies} (RMPs)~\cite{ratliff2018riemannian}, which compose acceleration-level policies through metric-weighted pullback; RMPflow~\cite{cheng2021rmpflow} later extends this construction to tree-structured task hierarchies. However, the original RMP formulation is not chart-invariant, so the resulting policy depends on the choice of local coordinates~\cite{bylard2021composable}. A related line of work is the \emph{Optimization Fabrics} and \emph{Geometric Fabrics} framework~\cite{ratliff2020optimization,van2022geometric}, which provides a Finsler-geometric foundation for stable reactive policy design. Across this family of methods, however, safety-related behaviors are typically encoded in the reactive policy itself rather than enforced as hard constraints at runtime. Unresolved geometric consistency issues also leave them short of a complete solution to composing tasks defined on heterogeneous manifolds.

\emph{Pullback Bundle Dynamical Systems} (PBDS)~\cite{bylard2021composable} resolves the geometric consistency issue by realizing simple mechanical control systems (SMCSs)~\cite{bullo2004geometric} on task manifolds and combining them through a metric-weighted least-squares problem. In doing so, PBDS establishes a principled foundation for geometrically consistent control synthesis on manifolds. However, PBDS inherits the broader limitation above: safety-related tasks enter the least-squares objective as soft costs and may therefore be violated under competing task pressures. PBDS also remains purely autonomous, with no mechanism for a higher-level planner or learned policy to steer the system at runtime. Our work addresses both limitations.

\subsection{Control barrier functions on manifolds}
\label{sec:rw_cbf}

Control barrier functions (CBFs) certify forward invariance of a safe set by imposing an affine inequality on the control input at runtime~\cite{ames2019controlbarrierfunctionstheory}. The earliest formulations assume the safety function has relative degree one with respect to the input. This assumption fails for systems where position-level safety must be enforced through acceleration-level control. Two families of methods address this higher-relative-degree setting. \emph{Exponential CBFs} (ECBFs)~\cite{nguyen2016exponential} apply pole-placement design to the chain of Lie derivatives of the safety function, collapsing the higher-order condition into a single linear constraint on the input. A related generalization, high-order CBFs (HOCBFs)~\cite{xiao2021high}, relaxes the linear pole-placement structure of ECBFs to a sequence of class-$\mathcal{K}$ comparison functions. Backstepping CBFs (BCBFs) instead build a CBF from a safe virtual controller for the lower-order subsystem and lift it to the full system~\cite{taylor2022safe}; for second-order systems with position constraints, this construction reduces to lifting a safe velocity field to an acceleration-level constraint.

Recent work has generalized the CBF methodology beyond Euclidean spaces to manifold-valued states. Wu and Sreenath~\cite{wu2015safety} extend CBF synthesis to mechanical systems evolving on Riemannian configuration manifolds, with demonstrations on the spherical pendulum ($\mathbb{S}^2$) and the 3D pendulum ($\mathrm{SO}(3)$). More recently, De Sa et al.~\cite{de2025bundles} develop a general theory of geometric CBFs on control systems defined over bundles, and use it to generalize kinetic-energy CBF backstepping to SMCSs. These works establish geometric CBFs on individual \textit{configuration} manifolds; \methodname{} instead defines safety on \emph{task} manifolds so that hard constraints naturally expressed in different geometric spaces (e.g. end effector poses and joint angle limits) can be composed with the PBDS task structure.
\subsection{Dexterous manipulation}
\label{sec:rw_manipulation}
We focus on two dexterous manipulation problems targeted in our hardware evaluation: multi-fingered \emph{grasping} and \emph{in-hand reorientation}. Both require coordinating objectives and constraints defined on heterogeneous spaces, including force closure and friction-cone conditions in contact space, fingertip and link clearance in Cartesian space, joint limits in the configuration manifold, and reachability in $\mathrm{SE}(3)$.

\subsubsection{Grasping}
The theory of static multi-fingered grasping is well established~\cite{murray1994mathematical}. We refer the reader to~\cite{bohg2013data,newbury2023deep,leen2025deep} for broader surveys of grasp synthesis and focus on methods that provide explicit physical-feasibility guarantees, as in our work. A central challenge in certifiable grasp synthesis is that precise local constraints must be satisfied while searching over multiple global grasp modalities (see~\cite{feix2016grasp} for a grasp taxonomy). This challenge has motivated optimization-based pipelines that incorporate analytical grasp-quality objectives, sometimes used to refine learned grasp predictions~\cite{liu2021synthesizing,turpin2023fastgraspd,zimmermann2020multi}. In most such methods, however, the relevant physical conditions enter as relaxed scalar penalties and therefore do not provide strict guarantees on the final grasp. Wu et al.~\cite{wu2022learning} address this limitation by formulating grasp refinement as a bilevel optimization in which force closure is imposed as an exact inner constraint, yielding grasps with certified physical feasibility at the solution. Li et al.~\cite{li2023frogger} further introduce the min-weight metric as a surrogate objective for force closure certification. However, both~\cite{wu2022learning} and~\cite{li2023frogger} certify grasp feasibility offline and do not maintain it online under execution-time perturbations. Lum et al.~\cite{lum2024dextrahg} combine reinforcement learning with a Geometric Fabric to enforce arm joint limits and collision avoidance during grasp execution, but do not certify grasp conditions such as force closure. Shaw Cortez et al.~\cite{cortez2021control} use a CBF-based safety filter to preserve grasp validity (e.g., preventing slip and singularities) during in-hand adjustments, but do not discuss how the grasp is initially achieved.

\subsubsection{In-hand reorientation}
In-hand reorientation can be broadly divided by palm orientation. In the \emph{palm-up} setting, gravity helps retain the object, allowing brief release-and-regrasp maneuvers. In the \emph{palm-down} setting, the object must remain securely grasped throughout the motion. Both have proven challenging, and most prior work has relied on deep reinforcement learning with sim-to-real transfer, beginning with palm-up~\cite{andrychowicz2020learning,handa2023dextreme,qi2023general,yin2023rotating} and extending to palm-down more recently~\cite{chen2022system,sievers2022learning,chen2023visual,pitz2024learning,yang2024anyrotate}; see~\cite{weinberg2024survey} for a broader survey. While these learned methods demonstrate impressive dexterity, they do not provide formal safety guarantees and typically require substantial tuning and reward engineering to work on hardware. Analytical approaches are much less common; existing work often exploits mechanical compliance and underactuation to reduce the planning burden~\cite{morgan2022complex,bhatt2022surprisingly}. Suh et al.~\cite{suh2026contact} propose a contact-trust-region formulation for contact-rich MPC and demonstrate palm-up cube reorientation with a fully actuated Allegro hand. To the best of our knowledge, our work is the first to achieve palm-down and variable-wrist-pose in-hand reorientation on a fully actuated general-purpose dexterous hand without relying on either machine learning or mechanical compliance.
\section{Preliminaries}
\label{sec:preliminaries}
Our method draws on differential geometry, pullback bundle dynamical systems, and control barrier functions, which we introduce in this section. Before formalizing each construct, we anchor the exposition with the following running example.

\begin{example}[Reaching with obstacle avoidance]
    \label{ex:7dof}
    A 7-DOF Franka Emika Panda arm is to reach a goal end effector pose while avoiding a fixed workspace obstacle. The configuration is given by seven joint angles $\sigma = (\sigma^1,\ldots,\sigma^7)$ with joint limits $\sigma^j_- \leq \sigma^j \leq \sigma^j_+$. Forward kinematics yields the end effector pose and the pose of each link for collision checking. The motion policy must reconcile objectives on heterogeneous spaces:
    \begin{itemize}[noitemsep, topsep=2pt]
        \item \emph{End effector tracking} in end effector pose.
        \item \emph{Obstacle avoidance} in link-obstacle distances.
        \item \emph{Joint limits} in the configuration manifold.
    \end{itemize}
    This system is implemented in simulation in Section~\ref{sec:7dof_simulation}.
\end{example}

\begin{table}
    \caption{Notation Summary.}
    \label{tab:pbds_notation_core}
    \centering
    \footnotesize
    \renewcommand{\arraystretch}{0.95}
    \begin{tabularx}{\columnwidth}{L C}
        \toprule
        \textbf{Symbol}                                                                          & \textbf{Meaning / Definition}                                                                                        \\
        \midrule
        \multicolumn{2}{l}{\textit{Manifolds and Maps}}                                                                                                                                                                 \\
        $M$                                                                                      & Configuration manifold (dimension $m$)                                                                               \\
        $N_i$                                                                                    & Task manifold for task $i$ (dimension $n_i$)                                                                         \\
        $TM$, $T^*M$                                                                             & Tangent and cotangent bundles of $M$                                                                                 \\
        $(p,v) \in TM$                                                                           & Tangent-bundle point; $v \in T_pM$                                                                                   \\
        $f_i : M \!\to\! N_i$                                                                    & Task map                                                                                                             \\
        $df_i,\,Jf_i$                                                                            & Differential and Jacobian of $f_i$                                                                                   \\
        $\dot{J}f_i$                                                                             & Time derivative of $Jf_i$, i.e.\ $v^\ell\,\partial^2 f_i/\partial x^\ell \partial x^j$                               \\
        $F_i : TM \!\to\! TN_i$                                                                  & Higher-order task map, $(p,v) \mapsto (f_i(p), (df_i)_p(v))$                                                         \\
        $(\nabla df)$                                                                            & Second fundamental form of $f$ \\
        \midrule
        \multicolumn{2}{l}{\textit{Riemannian Geometry}}                                                                                                                                                                \\
        $g_M$, $g_i$                                                                             & Riemannian metrics on $M$ and $N_i$                                                                                  \\
        $\langle \cdot, \cdot \rangle_g$                                                         & Inner product of metric $g$; on $N$ also $\langle \cdot, \cdot \rangle_N$                                            \\
        ${}^M\!\nabla$, ${}^N\!\nabla$                                                           & Levi-Civita connections on $M$ and $N$                                                                               \\
        ${}^M\!\Gamma^k_{ij}$, ${}^N\!\Gamma^\alpha_{\beta\gamma}$                               & Christoffel symbols on $M$ and $N$                                                                                   \\
        $\sharp$                                                                                 & Sharp operator, $g^\sharp: T^*M \to TM$ (and task manifold analogue)                                                    \\
        $\operatorname{grad} h$                                                                  & Riemannian gradient: $\langle \operatorname{grad} h, v \rangle_g = dh(v)$                                            \\
        \midrule
        \multicolumn{2}{l}{\textit{PBDS Framework}}                                                                                                                                                                     \\
        $\sigma(t),\,\dot{\sigma},\,\ddot{\sigma}$                                               & Configuration curve on $M$; velocity and coordinate acceleration                                                     \\
        $x(t)$                                                                                   & task manifold curve on $N$                                                                                              \\
        $\Phi_i$                                                                                 & Potential function on $N_i$                                                                                          \\
        $\mathcal{F}_{D,i}$                                                                      & Dissipative force map $TN_i \!\to\! T^*N_i$                                                                          \\
        $\mathcal{F} \subset T^*M$                                                               & SMCS input codistribution                                                                                            \\
        $w_i$                                                                                    & Weighting pseudometric (task priority)                                                                               \\
        $w_i^a \in \mathbb{R}^{n_i\times n_i}$                                                   & Lower-right block of $w_i$ in local coordinates                                                                      \\
        $u \in T^*M$                                                                             & Control input (generalized force one-form)                                                                           \\
        $\mathcal{D}_{(p,v)}$                                                                    & Affine distribution of second-order vectors at $(p,v)$                                                               \\
        $\mathcal{A}_{\mathrm{safe}}$                                                            & Safe acceleration set (pullback CBFs)                                                                                \\
        $\pi_{\mathrm{VB}},\,\pi_{\mathcal{A}}$                                                  & Vertical-bundle and actuation projections                                                                            \\
        $\bar{a}$                                                                                & Optimal autonomous acceleration~\eqref{eq:autonomous_safe_pbds_def}                                                  \\
        $f_l,\,u_l,\,g_l,\,w_l$                                                                  & Control task map, input $u_l\in T^*N_l$, behavior metric, weighting pseudometric                                     \\
        \midrule
        \multicolumn{2}{l}{\textit{Control Barrier Functions (ECBF)}}                                                                                                                                                   \\
        $h_0 : N \!\to\! \mathbb{R}$                                                             & Safety function (indexed $h_{0,j}$ for multiple constraints)                                                         \\
        $\mathcal{C}_0 \subseteq \mathcal{S}$                                                    & Safe set $\{ x : h_0(x) \geq 0 \}$, contained in informal safe region $\mathcal{S}\subseteq N$                       \\
        $\alpha \in \mathcal{K}_\infty^e$                                                        & Extended class-$\mathcal{K}_\infty$ function                                                                         \\
        $\eta_b$                                                                                 & Barrier state $[h_0,\dot{h}_0,\ldots,h_0^{(r-1)}]^\top$                                                              \\
        $\bm{\kappa}$                                                                            & ECBF gain vector; $\bm{\kappa}^\top \in \mathbb{R}^{1\times r}$ (scalars $\kappa_1,\kappa_2$ for $r{=}2$)            \\
        $\nu_i,\ \mathcal{C}^{\nu}_i$                                                            & Auxiliary functions and sets in the ECBF recursion; $p_i$ are negated eigenvalues of $F-G\bm{\kappa}^\top$           \\
        \midrule
        \multicolumn{2}{l}{\textit{Backstepping CBF (BCBF)}}                                                                                                                                                            \\
        $h(x,\dot{x})$                                                                           & Lifted BCBF candidate on $TN$                                                                                        \\
        $\xi,\,\tilde{\xi}$                                                                      & Nominal and safe velocity fields on $N$ (pre/post-filter)                                                            \\
        $\xi_{\mathrm{HS}},\,\alpha_{\mathrm{HS}},\,\beta_{\mathrm{HS}},\,\lambda_{\mathrm{HS}}$ & Half-Sontag safety filter quantities                                                                                 \\
        $\Omega_0 \supset \mathcal{C}_0$                                                         & Open domain on which $\tilde{\xi}$ is a strict CBF                                                                   \\
        $\delta,\,\varepsilon$                                                                   & Strict-margin augmentation and BCBF scaling                                                                          \\
        $\epsilon_{\mathrm{pad}}$                                                                & Numerical padding for $h_0$ (distinct from $\varepsilon$)                                                            \\
        $e_x$                                                                                    & Velocity error $\dot{x} - \tilde{\xi}$                                                                               \\
        \bottomrule
    \end{tabularx}
\end{table}

\subsection{Differential geometry and geometric mechanical systems}

We present a brief introduction to Riemannian geometry; for further details, we refer the reader to~\cite{bullo2004geometric,lee2003smooth}. Let $M$ be a smooth $m$-dimensional manifold with tangent bundle~$TM$ and cotangent bundle~$T^*M$. A \emph{Riemannian metric}~$g$ is a smooth assignment of an inner product~$g_p\colon T_pM \times T_pM \to \mathbb{R}$ to each point~$p \in M$; the pair~$(M,g)$ is called a \emph{Riemannian manifold}. The metric induces the sharp isomorphism \(g^\sharp \colon T^*M \to TM\), which maps each covector \(\alpha \in T_x^*M\) to the unique vector \(g^\sharp(\alpha) \in T_xM\) satisfying
\[
g_x\!\bigl(g^\sharp(\alpha), v\bigr) = \alpha(v)
\qquad \text{for all } v \in T_xM.
\]
The \emph{Riemannian gradient} of a smooth function~$h$ is defined by~$\operatorname{grad} h = g^\sharp(dh)$, or equivalently, $\langle \operatorname{grad} h, v \rangle_g = dh(v)$ for all tangent vectors~$v$. The Levi-Civita connection induced by $g$ is denoted by $\nabla$, so that the covariant acceleration along a curve~$\sigma(t)$ is given by~$\nabla_{\dot{\sigma}}\dot{\sigma}$. When two manifolds $M$ and $N$ must be disambiguated, we write ${}^M\!\nabla$, ${}^N\!\nabla$ for their Levi-Civita connections and ${}^M\!\Gamma$, ${}^N\!\Gamma$ for the corresponding Christoffel symbols.

The Euler-Lagrange equations for a mechanical system admit a coordinate-free formulation on Riemannian manifolds, known as a \emph{simple mechanical control system}:
\begin{definition}[Simple Mechanical Control System]
    \label{def:smcs}
    A \emph{simple mechanical control system} (SMCS)~\cite{bullo2004geometric} is a tuple~$(M, g, \Phi, \mathcal{F})$, where $M$ is an $m$-dimensional smooth manifold (the \emph{configuration manifold}), $g$ is a Riemannian metric on~$M$ (the \emph{kinetic energy metric}), $\Phi \in C^\infty(M)$ is the \emph{potential function}, and $\mathcal{F} \subset T^{*} M$ is a rank-$p$ codistribution on~$M$ (the \emph{input codistribution}). The equations of motion are
    \begin{equation}
        \nabla_{\dot{\sigma}}\dot{\sigma} = -\operatorname{grad}\Phi \circ \sigma + u^\sharp,
        \label{eq:dynamics}
    \end{equation}
    where $u \in \mathcal{E}(\sigma,\mathcal{F})$, the space of sections of $T^{*} M$ along~$\sigma$ such that $u(t) \in \mathcal{F}_{\sigma(t)}$, is the control one-form. In coordinates, \eqref{eq:dynamics} becomes
    \begin{equation}
    \label{eq:dynamics_coords}
        \begin{split}
            &\ddot{\sigma}^k(t) + \Gamma^k_{ij}(\sigma(t))\,\dot{\sigma}^i(t)\dot{\sigma}^j(t) \\
            &\quad = -g^{kl}(\sigma(t))\,\tfrac{\partial \Phi}{\partial \sigma^l}(\sigma(t)) + g^{kl}(\sigma(t))\,u_l(t),
        \end{split}
    \end{equation}
    where $\Gamma^k_{ij}$ are the Christoffel symbols of~$g$ and $u_l$ are the components of~$u$.
\end{definition}
\begin{remark}
    SMCS is control affine since the sharp map is a linear operator.
\end{remark}
\begin{remark}
    $g^{kl}$ converts the force covectors~$u$ and~$d\Phi$ into accelerations. When~$g$ is the kinetic energy metric, $g^{kl}$ plays the role of an inverse mass matrix. $\Gamma^k_{ij}\dot{\sigma}^i\dot{\sigma}^j$ corrects the coordinate acceleration $\ddot{\sigma}^k$ into a true acceleration on the curved manifold similar to the Coriolis and centripetal terms.
\end{remark}

\examplecont
\begin{example}[continued]
    For the 7-DOF arm reaching task, the configuration manifold is the open box of joint limits
    \begin{equation*}
        M = \prod_{j=1}^{7} (\sigma^j_-,\, \sigma^j_+) \subset \mathbb{R}^7,
    \end{equation*}
    equipped with the flat product metric $g_M = \delta_{ij}$. Although the physical joint limits are closed, we use strict inequalities so that $M$ is a smooth manifold without boundary; the distinction makes no difference in practice.
\end{example}

\subsection{Pullback bundle dynamical system (PBDS)}
\label{sec:pbds}

In order to construct a motion policy that combines multiple desired task behaviors,~\cite{bylard2021composable} proposes designing a stable mechanical control system (SMCS) on each task manifold of interest and combining them via a weighted least-squares optimization over the resulting task accelerations. Consider smooth \textit{task maps} $f_i : M \rightarrow N_i$, $i = 1,\ldots,K$, that map the configuration manifold~$M$ (of dimension~$m$) to task manifolds~$N_i$ (of dimension~$n_i$). On each~$N_i$ one designs an SMCS that exhibits  a desired behavior by choosing a potential function $\Phi_i : N_i \rightarrow \mathbb{R}$, a Riemannian metric $g_i$ on $N_i$ (together with its Levi-Civita connection~$\nabla_i$), and a dissipative force map $\mathcal{F}_{D,i} : TN_i \rightarrow T^*\!N_i$. We illustrate the role of $f_i$, $g_i$, $\Phi_i$, and $\mathcal{F}_{D,i}$ on the running 7-DOF arm scenario (Example~\ref{ex:7dof}) in the continuation block following Definition~\ref{def:multi_task_pbds}. Together, these determine a second-order dynamical system on~$TN_i$, which can be pulled back to a dynamical system on the \textit{pullback bundle}
\begin{equation}
    f_i^{*}TN_i = \coprod_{p\in M}\pi_{N_i}^{-1}(f_i(p)),
\end{equation}
where $\coprod$ denotes the disjoint union and $\pi_{N_i} : TN_i \rightarrow N_i$ is the tangent bundle projection. Concretely, the fiber of~$f_i^*TN_i$ over a point~$p \in M$ is the tangent space~$T_{f_i(p)}N_i$. The framework of~\cite{bylard2021composable} constructs on~$f_i^*TN_i$ a pullback connection $f_i^*\nabla_i$, a compatible pullback metric $f_i^*g_i$, pullback dissipative forces~$f_i^*\mathcal{F}_{D,i}$, and a pullback gradient operator~$f_i^*\operatorname{grad}\Phi_i$, all of which are globally well-defined.

Given these pullback constructions, two operators are introduced that capture, respectively, the \emph{desired} task manifold acceleration produced by each local PBDS and the \emph{resulting} task manifold acceleration induced by a candidate configuration manifold acceleration. The first operator,
\begin{equation}
    S_i : TM \longrightarrow T(f_i^{*}TN_i),
\end{equation}
maps a state $(p,v)\in TM$ to the \emph{desired} pullback task acceleration. It is defined as the vertical-bundle projection of the velocity of the local PBDS curve at~$(p,v)$:
\begin{equation}
    S_i(p,v) = \pi_{\mathrm{VB}}\bigl(G_i(p,v)\bigr) = \Bigl(\,(f_i^*df_i)_p(v),\;\bigl(0,\,\dot{\gamma}^{a}_{v_p,i}(0)\bigr)\Bigr),
\end{equation}
where $G_i$ encodes the dynamics of the local PBDS defined by $(f_i, g_i, \Phi_i, \mathcal{F}_{D,i})$, and $\dot{\gamma}^{a}_{v_p,i}(0)$ is the resulting task manifold acceleration. The pullback construction ensures that $S_i$ is globally well-defined; see~\cite{bylard2021composable} for a detailed derivation. In local coordinates, the desired acceleration evaluates to
\begin{multline}
    \label{eq:pbds_desired_accel}
    (\dot{\gamma}^a_{v_p,i})^\alpha(0) = g_i^{\alpha\beta}(f_i(p))\,\mathcal{F}_{D,i}^\beta\bigl((df_i)_p(v)\bigr) \\
    - \frac{\partial \Phi_i}{\partial y_i^\beta}(f_i(p))\,g_i^{\alpha\beta}(f_i(p)) \\
    - v^\ell\,(Jf_i)^\beta_h(p)\,v^h\,(Jf_i)^\gamma_\ell(p)\,(\Gamma_i)^\alpha_{\gamma\beta}(f_i(p)).
\end{multline}

The second operator is defined as
\begin{equation}
    Z_i : TTM \longrightarrow T(f_i^{*}TN_i),
\end{equation}
maps a candidate configuration manifold acceleration to its \emph{resulting} task manifold acceleration. It is defined as the vertical-bundle projection of the differential of the pullback differential~$f_i^*df_i$:
\begin{equation}
    Z_i\bigl((p,v),\,a\bigr) = \pi_{\mathrm{VB}}\Bigl(d(f_i^*df_i)_{(p,v)}(a)\Bigr).
\end{equation}
For $a = \bigl((p,v),(v,a^a)\bigr) \in \mathcal{D}_{(p,v)}$ (defined below), the local expression is
\begin{equation}
    Z_i(a) = \Bigl(\,(f_i^*df_i)_p(v),\;\bigl(0,\, Jf_i(p)\,a^a + \dot{J}f_i(p,v)\,v\bigr)\Bigr),
\end{equation}
where $(\dot{J}f_i)^\alpha_j(p,v) = v^\ell\,\partial^2 f_i^\alpha / \partial x^\ell \partial x^j$.

Let $\mathcal{D}_{\dot\sigma(t)} \subseteq T_{\dot\sigma(t)}TM$ denote the affine distribution of second-order vectors at $\dot\sigma(t) = (p,v) \in TM$, i.e.\ $\mathcal{D}_{(p,v)} = \{((p,v),(v,a^a)) : a^a \in \mathbb{R}^n\}$. Additionally, let $w_i$ be a Riemannian \emph{weighting pseudometric} on~$TN_i$ that controls the priority of task~$i$ relative to the other tasks, and let $F_i : TM \rightarrow TN_i$, $(p,v) \mapsto (f_i(p), (df_i)_p(v))$ be the higher-order task map. A \textit{best-compromise} acceleration is then obtained by minimizing the weighted sum of task-acceleration tracking errors:

\begin{definition}[Multi-Task PBDS, {\cite[Definition~III.2]{bylard2021composable}}]
    \label{def:multi_task_pbds}
    Let $\{f_i : M \longrightarrow N_i\}_{i=1,\ldots,K}$ be smooth task maps for Riemannian task manifolds $(N_i, g_i)$, with corresponding smooth potential functions $\Phi_i : N_i \longrightarrow \mathbb{R}$, dissipative forces $\mathcal{F}_{D,i} : TN_i \longrightarrow T^*\!N_i$, and weighting pseudometrics (positive semidefinite Riemannian metric)~$w_i$ on~$TN_i$. Then the set $\{(f_i, g_i, \Phi_i, \mathcal{F}_{D,i}, w_i)\}_{i=1,\ldots,K}$ forms a \emph{multi-task PBDS} with curves $\sigma : [0,\infty) \longrightarrow M$ satisfying
    \begin{equation}
        \label{eq:multi_pbds_def}
        \begin{cases}
            \ddot{\sigma}(t) = \displaystyle\underset{a \in \mathcal{D}_{\dot\sigma(t)}}{\arg\min} \ \sum_{i=1}^{K} \frac{1}{2} \lVert Z_i(a) - S_i(\dot\sigma(t)) \rVert^2_{F_i^{*}w_i}, \\[6pt]
            \dot\sigma(0) = (p_0, v_0).
        \end{cases}
    \end{equation}
\end{definition}

\noindent The dynamics~\eqref{eq:multi_pbds_def} are globally well-defined and smooth on~$TM$; see~\cite{bylard2021composable} for details. Since~\eqref{eq:multi_pbds_def} is a least-squares problem over the free variable~$a^a \in T_pM$, it admits the local closed-form solution
\begin{equation}
    \label{eq:pbds_closed_form}
    \ddot{\sigma} = \biggl(\sum_{i=1}^{K} Jf_i^\top\, (w_i^a \circ F_i)\, Jf_i\biggr)^{\!\dagger} \biggl(\sum_{i=1}^{K} Jf_i^\top\, (w_i^a \circ F_i)\, A_i\biggr),
\end{equation}
where $(\cdot)^\dagger$ is the pseudoinverse, $w_i^a \in \mathbb{R}^{n_i \times n_i}$ is the lower-right block of the local matrix representation of~$w_i$ (the only block that contributes after the vertical-bundle projections in $S_i$ and~$Z_i$), and
\begin{equation}
    \label{eq:pbds_Ai}
    A_i = \dot{\gamma}^a_{\dot\sigma,i}(0) - \dot{J}f_i(\sigma, \dot\sigma)\,\dot\sigma,
\end{equation}
with $\dot{\gamma}^a_{\dot\sigma,i}(0)$ given by~\eqref{eq:pbds_desired_accel} evaluated at $p = \sigma(t)$, $v = \dot\sigma(t)$.
\begin{remark}
    \label{rem:coordinate_acceleration}
    The multi-task PBDS~\eqref{eq:multi_pbds_def} outputs the coordinate acceleration $\ddot{\sigma}$. The optimization variable $a^a \in T_pM$ in~\eqref{eq:multi_pbds_def} represents the coordinate second derivative $d^2\sigma^k/dt^2$, and the operators $S_i$ and $Z_i$ involve only the task manifold metrics $g_i$ and weights $w_i$. Consequently, the PBDS framework is oblivious to the geometry of $M$. Intuitively, PBDS produces the \textit{task manifold} desired behavior, which is independent of the robot's \textit{configuration manifold} dynamics. In practice, an acceleration tracking controller must be used to track $\ddot{\sigma}$ and account for the configuration manifold dynamics, which stems from the physical system dynamics.
\end{remark}

\examplecont
\begin{example}[continued]
    The end effector tracking objective decomposes into orientation tracking on $\mathrm{SO}(3)$ and position tracking on $\mathbb{R}^3$; per~\cite{bylard2021composable}, an additional joint space damping task is required for stability. This yields 3 SMCSs on a different manifold:
    \begin{itemize}[noitemsep, topsep=2pt]
        \item \emph{Orientation tracking} on $N_{\mathrm{ori}} = \mathrm{SO}(3)$, task map $f_{\mathrm{ori}} \colon M \to \mathrm{SO}(3)$ given by the end effector orientation from forward kinematics, attractor potential $\Phi_{\mathrm{ori}}$ centered at a goal orientation, and linear damping $\mathcal{F}_{D,\mathrm{ori}}$. In the implementation we use the unit quaternion parameterization (i.e.\ the universal double cover $\mathbb{S}^3 \to \mathrm{SO}(3)$); see Appendix~\ref{app:7dof_experiment_details}.
        \item \emph{Position tracking} on $N_{\mathrm{pos}} = \mathbb{R}^3$ with the identity metric, task map $f_{\mathrm{pos}} \colon M \to \mathbb{R}^3$ given by the end effector position, attractor potential $\Phi_{\mathrm{pos}}$ centered at a goal position, and linear damping $\mathcal{F}_{D,\mathrm{pos}}$.
        \item \emph{Joint space damping} with identity task map $f_{\mathrm{jd}} = \mathrm{id}_M$, no potential, and linear damping $\mathcal{F}_{D,\mathrm{jd}}$.
    \end{itemize}
    In the implementation, the attractor and damping of each tracking SMCS are realized as separate PBDS tasks on the same manifold, so the multi-task PBDS comprises 5 tasks total (orientation attractor, orientation damping, position attractor, position damping, and joint space damping). These tasks compose into a multi-task PBDS through Definition~\ref{def:multi_task_pbds} with weighting pseudometrics~$w_i$ tuned by task priority.
\end{example}

\subsection{Control barrier functions (CBFs)}
\label{sec:cbfs}
While multi-task PBDS~\eqref{eq:pbds_closed_form} allows encouraging safe behavior through metric-based constraints~\cite[Section~III.C]{bylard2021composable}, it does not provide a hard guarantee. Additionally, metric-based constraints are ill-defined in unsafe regions. We now review control barrier functions, which provide the tools to overcome this limitation.

Consider a safety problem on a task manifold $N$ (e.g. obstacle avoidance constraints for the end effector), where $\mathcal{S} \subseteq N$ denotes a safe region: the system is safe if the state $x \in \mathcal{S}$.
\begin{definition}[Safety function]
    \label{def:h0}
    A smooth function $h_0 : N \rightarrow \mathbb{R}$ with $0$ as a regular value and whose superlevel set satisfies
    \begin{equation}
        \mathcal{C}_0 \triangleq \{ x \in N \mid h_0(x) \geq 0 \} \subseteq \mathcal{S},
        \label{eq:safe_set_general}
    \end{equation}
    is called a \emph{safety function}.
\end{definition}

Forward invariance of the set $\mathcal{C}_0$ implies that any trajectory initialized in $\mathcal{C}_0$ remains in $\mathcal{C}_0$ for all future times, ensuring that the system always satisfies safety constraints. Safety may be enforced by selecting appropriate control inputs~$u$ that govern the evolution of $h_0$. To characterize how $u$ relates to $h_0$, we present two families of techniques from the literature. Throughout, we let $\alpha \in \mathcal{K}_\infty^e$ denote an extended class-$\mathcal{K}_\infty$ function, i.e., a continuous, strictly increasing function $\alpha : \mathbb{R} \to \mathbb{R}$ satisfying $\alpha(0) = 0$ and $\lim_{r \to \pm\infty} \alpha(r) = \pm\infty$. For a task manifold CBF we write $(x, \dot{x}) \in TN$ for a tangent-bundle point, where $\dot{x} \in T_xN$; we fix a Riemannian metric $g$ on $N$, and the norm $\|\cdot\|_N$, gradient $\operatorname{grad}$, and inner product $\langle\cdot,\cdot\rangle_N$ below are all taken with respect to~$g$.

We first introduce the class of systems under consideration. A \emph{control-affine system} on a manifold~$N$ takes the form
\begin{equation}
    \label{eq:control_affine}
    \dot{x} = f_0(x) + B(x)\,u, \quad x \in N,\; u \in U \subseteq \mathbb{R}^{m_u},
\end{equation}
where $f_0$ is a drift vector field on~$N$, $B(x) = (B_1(x),\ldots,B_{m_u}(x))$ collects $m_u$ input vector fields with $B(x)\,u = \sum_{k=1}^{m_u} B_k(x)\,u_k \in T_xN$, and $U$ is the set of admissible inputs. In the context of the SMCS~\eqref{eq:dynamics}, the state is $(x, \dot{x}) \in TN$ and the control input $u$ enters through the acceleration, so the system has relative degree two with respect to a safety function $h_0(x)$.

\subsubsection{Exponential CBF (ECBF)}

When the safety constraint $h_0$ has relative degree $r \geq 2$ with respect to the control input, i.e. the first $r{-}1$ time derivatives of $h_0$ are independent of $u$ and $u$ first appears explicitly in $h_0^{(r)}$, the standard relative-degree-one CBF condition cannot be applied directly. The exponential CBF (ECBF) framework of~\cite{nguyen2016exponential} addresses this by enforcing a linear constraint on the $r$th derivative of $h_0$.

Define
\begin{equation}
    \eta_b(x) = \bigl[\,h_0(x),\; \dot{h}_0(x),\; \ldots,\; h_0^{(r-1)}(x)\,\bigr]^\top.
\end{equation}
The input-output linearized dynamics of $h_0$ under the virtual input $\mu = h_0^{(r)}$ take the linear form $\dot{\eta}_b = F\,\eta_b + G\,\mu$, $h_0 = C\,\eta_b$, where $(F,G,C)$ is in controllable canonical form. Consider the input constraint $\mu \geq -\bm{\kappa}^\top \eta_b(x)$ with gains $\bm{\kappa}^\top = [\kappa_1, \ldots, \kappa_r] \in \mathbb{R}^{1 \times r}$. By the comparison lemma, under this constraint, $h_0(x(t)) \geq C\,e^{(F - G\bm{\kappa}^\top)t}\,\eta_b(x_0)$. Therefore, safety ($h_0 \geq 0$) is guaranteed if the right-hand side remains nonnegative. To this end, \cite{ames2019controlbarrierfunctionstheory} introduces a family of auxiliary functions $\nu_i$ and corresponding sets $\mathcal{C}^{\nu}_i$:
\begin{equation}
    \nu_0(x) = h_0(x), \quad \nu_i(x) = \dot{\nu}_{i-1}(x) + p_i\,\nu_{i-1}(x), \  i = 1,\ldots,r,
    \label{eq:ecbf_nu_chain}
\end{equation}
\begin{equation}
    \mathcal{C}^{\nu}_i = \{x : \nu_i(x) \geq 0\},
\end{equation}
where $p_1, \ldots, p_r$ are the roots of the characteristic polynomial $F - G\bm{\kappa}^\top$. Forward invariance of $\mathcal{C}_0$ then requires the following conditions on $\bm{\kappa}$~\cite{ames2019controlbarrierfunctionstheory,nguyen2016exponential}:

\begin{definition}[Exponential CBF~\cite{nguyen2016exponential,ames2019controlbarrierfunctionstheory}]
    \label{def:ecbf}
    A safety function $h_0$ of relative degree $r$ with respect to~\eqref{eq:control_affine} is an \emph{exponential CBF} if there exists a row vector $\bm{\kappa} \in \mathbb{R}^{r}$ such that for all $x \in \mathrm{Int}(\mathcal{C}_0)$,
    \begin{equation}
        \sup_{u \in U}\, h_0^{(r)}(x, u) \geq -\bm{\kappa}^\top \eta_b(x).
        \label{eq:ecbf_constraint}
    \end{equation}
    results in $h_0(x(t)) \geq C\,e^{(F - G\bm{\kappa}^\top)t}\,\eta_b(x_0) \geq 0$ whenever $h_0(x_0) \geq 0$.
\end{definition}
\begin{theorem}[{\cite[Theorem~8]{ames2019controlbarrierfunctionstheory}}]
    Suppose $\bm{\kappa}$ is chosen such that $F - G\bm{\kappa}^\top$ is Hurwitz and totally negative (resulting in negative real poles), and its eigenvalues satisfy
    \begin{equation}
        \lambda_i(F - G\bm{\kappa}^\top) \geq -\frac{\dot{\nu}_{i-1}(x_0)}{\nu_{i-1}(x_0)}, \quad i = 1,\ldots,r,
        \label{eq:ecbf_initial_condition}
    \end{equation}
    whenever $\nu_{i-1}(x_0) > 0$. Then $\mu \geq -\bm{\kappa}^\top \eta_b(x)$ guarantees that $h_0(x)$ is an exponential CBF.
\end{theorem}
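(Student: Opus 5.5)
We reduce the statement to a chain of scalar first-order comparison arguments, one per auxiliary function $\nu_i$ in \eqref{eq:ecbf_nu_chain}. Since $F-G\bm{\kappa}^\top$ is in controllable canonical form with real negative eigenvalues $\lambda_i = -p_i$, its characteristic polynomial factors as $s^r + \kappa_r s^{r-1} + \cdots + \kappa_1 = \prod_{i=1}^r (s+p_i)$ with $p_i>0$. Expanding the recursion $\nu_i = \dot\nu_{i-1} + p_i\nu_{i-1}$ gives $\nu_r = \bigl(\prod_{i=1}^r (\tfrac{d}{dt}+p_i)\bigr) h_0 = h_0^{(r)} + \bm{\kappa}^\top\eta_b$. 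Hence the input constraint $\mu \ge -\bm{\kappa}^\top\eta_b(x)$ is exactly the statement $\nu_r(x(t)) \ge 0$ along closed-loop trajectories. This identification is the first step and is purely algebraic: match coefficients of the symmetric functions of the $p_i$ with the $\kappa_j$.

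The second step is a backward induction from $i=r$ down to $i=0$, showing $\nu_{i}(x(t))\ge 0$ for all $t\ge 0$. Suppose $\nu_i(x(t))\ge 0$ for all $t$. Then $\dot\nu_{i-1} \ge -p_i\nu_{i-1}$, and the comparison lemma (equivalently, multiplying by $e^{p_i t}$ and integrating) gives
\[
\nu_{i-1}(x(t)) \ge e^{-p_i t}\,\nu_{i-1}(x_0).
\]
It therefore suffices that $\nu_{i-1}(x_0)\ge 0$, i.e.\ $x_0\in\mathcal{C}^{\nu}_{i-1}$. Iterating to $i=1$ yields $h_0(x(t)) = \nu_0(x(t)) \ge 0$. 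Composing the exponential bounds recovers the stated estimate $h_0(x(t)) \ge C e^{(F-G\bm{\kappa}^\top)t}\eta_b(x_0)$, since that expression is the solution of the chain with equality.

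The third step, and the main obstacle, is securing the initial conditions $\nu_{i-1}(x_0)\ge 0$ for every $i$ using only $h_0(x_0)\ge 0$ and the eigenvalue condition \eqref{eq:ecbf_initial_condition}. I would proceed by forward induction on $i$. Assume $\nu_{i-1}(x_0)>0$. The hypothesis $\lambda_i \ge -\dot\nu_{i-1}(x_0)/\nu_{i-1}(x_0)$, with $\lambda_i=-p_i$, rearranges after multiplying by $\nu_{i-1}(x_0)>0$ to $\dot\nu_{i-1}(x_0) + p_i\nu_{i-1}(x_0)\ge 0$ (the sign bookkeeping relating $\lambda_i$ and $p_i$ must be checked carefully), which is $\nu_i(x_0)\ge 0$.

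The delicate cases are the boundary ones, where $\nu_{i-1}(x_0)=0$ and the hypothesis says nothing. There I would argue in one of two ways: assume, as in the standard statement of~\cite{ames2019controlbarrierfunctionstheory}, that $x_0$ lies in $\bigcap_i \mathcal{C}^\nu_i$; or use the fact that $0$ is a regular value of $h_0$ together with the relative degree assumption to handle $\dot\nu_{i-1}(x_0)\ge0$ separately. I expect to state this as an explicit standing assumption rather than prove it in general. Finally, the Hurwitz property is not needed for invariance itself; it ensures the exponential lower bound decays consistently. The existence of $u$ attaining the supremum in \eqref{eq:ecbf_constraint} is what certifies that $h_0$ is an ECBF in the sense of Definition~\ref{def:ecbf}.
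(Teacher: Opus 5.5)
Your route is the recursive argument the paper relies on. The paper cites \cite[Prop.~6, Thms.~7--8]{ames2019controlbarrierfunctionstheory} and carries out the $r=2$ case itself in the proof of Theorem~\ref{thm:task_ecbf}: identify $\nu_r=\mu+\bm{\kappa}^\top\eta_b$, run a backward chain of first-order comparisons, and supply the initial conditions $\nu_{i-1}(x_0)\ge 0$ from the eigenvalue bound. One step, however, is wrong as written. With $\lambda_i=-p_i$ and $\nu_{i-1}(x_0)>0$, multiplying $-p_i\ge-\dot\nu_{i-1}(x_0)/\nu_{i-1}(x_0)$ by $\nu_{i-1}(x_0)$ gives $\dot\nu_{i-1}(x_0)-p_i\nu_{i-1}(x_0)\ge 0$, not $\dot\nu_{i-1}(x_0)+p_i\nu_{i-1}(x_0)\ge 0$. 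Your conclusion survives, because then $\nu_i(x_0)\ge 2p_i\nu_{i-1}(x_0)>0$, but you must argue it that way. The condition actually equivalent to $\nu_i(x_0)\ge 0$ is $p_i\ge-\dot\nu_{i-1}(x_0)/\nu_{i-1}(x_0)$, which is the form the paper uses for $r=2$ ($p_1\ge-\dot h_0(x_0)/h_0(x_0)$). The mismatch arises because the paper calls $p_i$ both the roots and the negated eigenvalues of $F-G\bm{\kappa}^\top$.

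Your second option for the boundary case cannot work. Regularity of $0$ constrains $dh_0$, not the sign of $\dot h_0=dh_0(\dot x)$ at $x_0$. For $r=2$, take $h_0(x_0)=0$ and $\dot h_0(x_0)<0$. Then $\nu_1(x_0)=\dot h_0(x_0)<0$ and the eigenvalue hypotheses hold vacuously. Yet the input that saturates the constraint gives $h_0(t)=\dot h_0(x_0)\int_0^t e^{-p_1(t-s)}e^{-p_2 s}\,ds<0$ for all $t>0$. So the standing assumption $x_0\in\bigcap_{i=0}^{r-1}\mathcal{C}^\nu_i$ is genuinely necessary, and it is exactly what the paper imposes ($x_0\in\mathcal{C}_0\cap\mathcal{C}^\nu_1$) in the proof of Theorem~\ref{thm:task_ecbf}.

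Finally, composing the bounds $\nu_{i-1}(t)\ge e^{-p_it}\nu_{i-1}(x_0)$ only yields $h_0(t)\ge e^{-p_1t}h_0(x_0)$, not the stated estimate. To get $C e^{(F-G\bm{\kappa}^\top)t}\eta_b(x_0)$, use the exact identity $\nu_{i-1}(t)=e^{-p_it}\nu_{i-1}(x_0)+\int_0^t e^{-p_i(t-s)}\nu_i(s)\,ds$ and induct on lower bounds. The positivity of these kernels is where real poles (total negativity) enter.
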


\begin{remark}
    The totally-negative requirement is stronger than the standard Hurwitz condition, which permits complex eigenvalues with negative real parts. This is because the recursive argument in~\cite[Proposition~6 and Theorem~7]{ames2019controlbarrierfunctionstheory} establishes forward invariance of each $\mathcal{C}^{\nu}_i$ by showing that $\dot{\nu}_{i-1} \geq 0$ on $\partial \mathcal{C}^{\nu}_{i-1}$ only when $p_i > 0$. The eigenvalue bound~\eqref{eq:ecbf_initial_condition} couples the pole locations to the initial state: the poles cannot be placed arbitrarily fast without violating the requirement that $\nu_i(x_0) \geq 0$. In practice, the ECBF can be designed via pole placement, choosing $p_i$ large enough for rapid convergence while satisfying~\eqref{eq:ecbf_initial_condition}. The constraint~\eqref{eq:ecbf_constraint} is then enforced pointwise via a quadratic program (QP) at each time step.
\end{remark}

\begin{remark}
    Evaluating~\eqref{eq:ecbf_constraint} requires computing time derivatives of $h_0$ up to order $r$, which in turn depend on time derivatives of the state up to order $r$. For the SMCS~\eqref{eq:dynamics}, which is second order, $r = 2$, and the two conditions reduce to: 1)~$p_1, p_2 > 0$ with $\kappa_1 = p_1 p_2$ and $\kappa_2 = p_1 + p_2$, and 2)~$p_1 \geq -\dot{h}_0(x_0)/h_0(x_0)$.
\end{remark}

\subsubsection{Backstepping CBF (BCBF)}

An alternative to the ECBF that avoids computing higher-order derivatives, which amplify measurement noise and incur additional autograd cost, is backstepping CBF (BCBF)~\cite{taylor2022safe}. In BCBF, the safety specification on $N$ is lifted to a CBF on $TN$ by introducing an auxiliary safe velocity field on~$N$. Below we recite the key geometric generalization results in~\cite{de2025bundles}. For simplicity, we assume the system is fully actuated ($\pi_{\mathcal{A}} = \mathrm{Id}$) and refer the reader to~\cite{de2025bundles} for the general (including underactuated) case and proof details. In practice, most manipulators are fully- or over-actuated.

\paragraph*{Step 1: Construct a safe velocity field.} Given a nominal velocity field $\xi$ on $N$ (so $\xi(x) \in T_xN$), one applies a smooth safety filter to obtain a \emph{safe velocity field} $\tilde{\xi} : N \to TN$, $x \mapsto \tilde{\xi}(x) \in T_xN$, that renders $\mathcal{C}_0$ forward invariant under first-order dynamics $\dot{x} = \tilde{\xi}(x)$. Following~\cite{de2025bundles}, one uses the \emph{half-Sontag formula}:
\begin{equation}
    \xi_{\mathrm{HS}}(x) = \xi(x) + \lambda_{\mathrm{HS}}(\alpha_{\mathrm{HS}}(x),\, \beta_{\mathrm{HS}}(x))\, \operatorname{grad} h_0,
    \label{eq:half_sontag_vector_field}
\end{equation}
where
\begin{equation}
    \lambda_{\mathrm{HS}}(\alpha_{\mathrm{HS}},\, \beta_{\mathrm{HS}}) =
    \begin{cases}
        0                                                                                                             & \text{if } \beta_{\mathrm{HS}} = 0,    \\[4pt]
        \dfrac{-\alpha_{\mathrm{HS}} + \sqrt{\alpha_{\mathrm{HS}}^2 + \beta_{\mathrm{HS}}^2}}{2\,\beta_{\mathrm{HS}}} & \text{if } \beta_{\mathrm{HS}} \neq 0,
    \end{cases}
    \label{eq:half_sontag}
\end{equation}
with
\begin{align}
    \alpha_{\mathrm{HS}}(x) & = \alpha(h_0(x)) + \langle \operatorname{grad} h_0,\, \xi(x) \rangle_N, \nonumber \\
    \beta_{\mathrm{HS}}(x)  & = \left\| \operatorname{grad} h_0 \right\|_N^2.
\end{align}
The half-Sontag formula is a smooth, closed-form safety filter: it adds a correction along $\operatorname{grad} h_0$ that is just large enough to enforce the CBF condition $d(h_0)_x \xi_{\mathrm{HS}} \geq -\alpha(h_0(x))$, and vanishes when $\xi$ already satisfies it. The nominal velocity field $\xi$ can be chosen freely based on the application; a natural choice within the PBDS framework is $\xi(x) = -\operatorname{grad} \Phi|_x$, which drives the system toward the minimum of the task potential.

To provide a strict margin for the backstepping step, one further augments $\xi_{\mathrm{HS}}$ with an additional gradient term:
\begin{equation}
    \tilde{\xi} = \xi_{\mathrm{HS}}(x) + \delta\, \operatorname{grad} h_0,
    \label{eq:safe_velocity}
\end{equation}
for some $\delta > 0$.

\begin{lemma}[Lemma~2 in~\cite{de2025bundles}]
    \label{lem:half_sontag_safety_filter}
    Let $h_0$ be the smooth safety function and $\mathcal{C}_0$ be the safe set as in Definition~\ref{def:h0}, and let $\alpha \in \mathcal{K}_\infty^e$. There exists an open set $\Omega_0 \supset \mathcal{C}_0$ on which the safe velocity field~\eqref{eq:safe_velocity} satisfies the strict CBF condition
    \begin{equation}
        d(h_0)_x\, \mu_x > -\alpha(h_0(x)), \quad \forall\, x \in \Omega_0.
    \end{equation}
\end{lemma}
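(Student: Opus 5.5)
Here $\mu_x$ in the statement denotes the safe velocity field $\tilde{\xi}(x)$ of \eqref{eq:safe_velocity}. The plan is to compute $d(h_0)_x\tilde{\xi}(x)$ explicitly. We first establish the non-strict inequality $d(h_0)_x\xi_{\mathrm{HS}}(x)\ge -\alpha(h_0(x))$ everywhere. We then show that the extra term $\delta\operatorname{grad} h_0$ makes it strict wherever $\operatorname{grad} h_0\neq 0$. Since $0$ is a regular value, this holds on a neighborhood of $\partial\mathcal{C}_0$, and we handle the interior of $\mathcal{C}_0$ separately.

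\emph{Step 1 (half-Sontag algebra).} Using $d(h_0)_x(\operatorname{grad} h_0)=\|\operatorname{grad} h_0\|_N^2=\beta_{\mathrm{HS}}$, we get
\[
d(h_0)_x\tilde{\xi}+\alpha(h_0)=\alpha_{\mathrm{HS}}+\lambda_{\mathrm{HS}}\beta_{\mathrm{HS}}+\delta\beta_{\mathrm{HS}}.
\]
When $\beta_{\mathrm{HS}}\neq0$, we have $\lambda_{\mathrm{HS}}\beta_{\mathrm{HS}}=\tfrac12\bigl(-\alpha_{\mathrm{HS}}+\sqrt{\alpha_{\mathrm{HS}}^2+\beta_{\mathrm{HS}}^2}\bigr)$. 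Hence
\[
\alpha_{\mathrm{HS}}+\lambda_{\mathrm{HS}}\beta_{\mathrm{HS}}=\tfrac12\bigl(\alpha_{\mathrm{HS}}+\sqrt{\alpha_{\mathrm{HS}}^2+\beta_{\mathrm{HS}}^2}\bigr)\ge0,
\]
because $\sqrt{\alpha_{\mathrm{HS}}^2+\beta_{\mathrm{HS}}^2}\ge|\alpha_{\mathrm{HS}}|$. Adding $\delta\beta_{\mathrm{HS}}>0$ gives the strict inequality at every $x$ with $\operatorname{grad} h_0(x)\neq0$. The same computation shows the formula is smooth there, since $\lambda_{\mathrm{HS}}$ is smooth for $\beta_{\mathrm{HS}}>0$.

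\emph{Step 2 (building $\Omega_0$).} Set
\[
\Omega_0=\{x:\operatorname{grad} h_0(x)\neq0\}\cup\{x:h_0(x)>0,\ \alpha(h_0(x))+d(h_0)_x\tilde{\xi}(x)>0\}.
\]
The first set is open because $\operatorname{grad} h_0$ is continuous. It contains $\partial\mathcal{C}_0=h_0^{-1}(0)$ because $0$ is a regular value. On that set Step 1 gives strictness.

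\emph{Step 3 (the interior, the main obstacle).} Difficulty arises at interior critical points of $h_0$ with $h_0>0$. There $\tilde{\xi}=\xi$ and $d(h_0)_x\tilde{\xi}=0$, so the condition reads $0>-\alpha(h_0(x))$. This holds because $\alpha$ is strictly increasing with $\alpha(0)=0$, so $\alpha(h_0)>0$.

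More generally, for $x\in\operatorname{Int}\mathcal{C}_0$ the Step 1 bound gives
\[
d(h_0)_x\tilde{\xi}+\alpha(h_0)\ge\tfrac12\bigl(\alpha_{\mathrm{HS}}+|\alpha_{\mathrm{HS}}|\bigr)+\delta\beta_{\mathrm{HS}}\ge0.
\]
Equality would require $\beta_{\mathrm{HS}}=0$, and at such points the quantity equals $\alpha(h_0)>0$. So the inequality is strict on all of $\mathcal{C}_0$. Strictness of a continuous inequality at each point then yields an open neighborhood of each point by continuity.

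The one remaining subtlety is continuity of $d(h_0)\tilde{\xi}$ across $\beta_{\mathrm{HS}}=0$. I would handle it by noting that
\[
\lambda_{\mathrm{HS}}\beta_{\mathrm{HS}}=\tfrac12\bigl(-\alpha_{\mathrm{HS}}+\sqrt{\alpha_{\mathrm{HS}}^2+\beta_{\mathrm{HS}}^2}\bigr)\to\tfrac12\bigl(-\alpha_{\mathrm{HS}}+|\alpha_{\mathrm{HS}}|\bigr)\quad\text{as }\beta_{\mathrm{HS}}\to0.
\]
This is $0$ when $\alpha_{\mathrm{HS}}>0$, which is exactly the situation at interior critical points, since there $\alpha_{\mathrm{HS}}=\alpha(h_0)>0$. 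So the expression is continuous near $\mathcal{C}_0$.

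Taking $\Omega_0$ as the union of these neighborhoods gives the open set $\Omega_0\supset\mathcal{C}_0$ on which the strict CBF condition holds.
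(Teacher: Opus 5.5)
Your proof is correct. The paper does not prove this lemma itself (it is quoted from \cite{de2025bundles}), so there is no in-paper argument to compare against. Your route is the natural one: the identity $d(h_0)_x\tilde{\xi}+\alpha(h_0(x))=\tfrac12\bigl(\alpha_{\mathrm{HS}}+\sqrt{\alpha_{\mathrm{HS}}^2+\beta_{\mathrm{HS}}^2}\bigr)+\delta\beta_{\mathrm{HS}}$ when $\beta_{\mathrm{HS}}>0$, and $d(h_0)_x\tilde{\xi}+\alpha(h_0(x))=\alpha(h_0(x))$ when $\beta_{\mathrm{HS}}=0$.

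You can shorten Step~3 considerably. Your own pointwise analysis shows the strict inequality holds \emph{exactly} on $\{\operatorname{grad}h_0\neq0\}\cup\{h_0>0\}$; it fails precisely at critical points with $h_0\le0$. So the second set in your Step~2 is simply $\{h_0>0\}$. That union is open because $h_0$ and $\operatorname{grad}h_0$ are continuous, and it contains $\mathcal{C}_0$ because $0$ is a regular value. The continuity discussion across $\beta_{\mathrm{HS}}=0$ is therefore unnecessary, and so is any regularity assumption on $\xi$ or $\alpha$.

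Two small remarks. First, your claim that the formula is smooth where $\beta_{\mathrm{HS}}>0$ needs $\alpha$ to be smooth, which membership in $\mathcal{K}_\infty^e$ does not provide; it is not used anyway. Second, on $\{\beta_{\mathrm{HS}}>0\}$ the half-Sontag part alone is already strictly positive, since $\sqrt{a^2+b^2}>|a|$ for $b>0$. So the $\delta\operatorname{grad}h_0$ term is not what produces strictness there; it only enlarges the margin.
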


\paragraph*{Step 2: Lift to a CBF on $TN$.} Using the safe velocity field $\mu_x$ from Step~1, one defines a candidate CBF on the tangent bundle:
\begin{equation}
    h(x, \dot{x}) = h_0(x) - \frac{\varepsilon}{2} \left\| \dot{x} - \mu_x \right\|_N^2, \quad (x, \dot{x}) \in TN,
    \label{eq:backstepping_cbf}
\end{equation}
where $\varepsilon > 0$ is a design parameter controlling the tradeoff between the safety margin and the allowable velocity deviation from $\mu_x$.

\paragraph*{Step 3: Enforce the BCBF constraint.}

\begin{theorem}[Backstepping CBF~\cite{taylor2022safe, de2025bundles}]
    \label{thm:bcbf}
    The function $h$ in~\eqref{eq:backstepping_cbf} is a CBF for the SMCS on $T\Omega_0$, i.e., there exist control inputs satisfying
    \begin{equation}
        \sup_{u \in U}\, \dot{h}(x, \dot{x}, u) > -\alpha(h(x, \dot{x})),
        \label{eq:backstepping_cbf_constraint}
    \end{equation}
    for all $(x, \dot{x}) \in T\Omega_0$.
\end{theorem}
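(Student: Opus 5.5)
Since the system is fully actuated ($\pi_{\mathcal{A}} = \mathrm{Id}$), the covariant acceleration $\nabla_{\dot{x}}\dot{x}$ on $N$ can be set to any vector in $T_xN$ by choice of $u$. The plan is to compute $\dot h$ covariantly along a trajectory of the SMCS, exhibit one explicit input, and check that it makes the inequality strict on $T\Omega_0$.

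Write $e_x = \dot{x} - \mu_x$. Metric compatibility of the Levi-Civita connection gives
\begin{equation*}
\dot h = d(h_0)_x\dot{x} - \varepsilon\bigl\langle e_x,\ \nabla_{\dot{x}}\dot{x} - \nabla_{\dot{x}}\mu\bigr\rangle_N .
\end{equation*}
Splitting $d(h_0)_x\dot{x} = d(h_0)_x\mu_x + \langle \operatorname{grad} h_0, e_x\rangle_N$ yields
\begin{equation*}
\dot h = d(h_0)_x\mu_x + \bigl\langle e_x,\ \operatorname{grad} h_0 - \varepsilon(\nabla_{\dot x}\dot x - \nabla_{\dot x}\mu)\bigr\rangle_N .
\end{equation*}
I choose the input so that
\begin{equation*}
\nabla_{\dot x}\dot x = \nabla_{\dot x}\mu + \tfrac{1}{\varepsilon}\operatorname{grad} h_0 + \tfrac{K}{\varepsilon}e_x, \qquad K>0 .
\end{equation*}
Concretely, $u^\sharp$ equals this right-hand side plus $\operatorname{grad}\Phi$, using the dynamics~\eqref{eq:dynamics}. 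With this input, $\dot h = d(h_0)_x\mu_x - K\|e_x\|_N^2$.

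Next I compare $\dot h$ with $-\alpha(h)$, splitting into two cases.
\begin{itemize}
\item \textbf{Case $e_x=0$.} Then $h=h_0$, and Lemma~\ref{lem:half_sontag_safety_filter} gives $\dot h = d(h_0)_x\mu_x > -\alpha(h_0(x)) = -\alpha(h)$ on $\Omega_0$.
\item \textbf{Case $e_x\neq 0$.} Here $h < h_0$, so monotonicity gives $-\alpha(h) < -\alpha(h_0)$, which has the wrong sign for a direct comparison. I need
\begin{equation*}
d(h_0)_x\mu_x - K\|e_x\|^2 > -\alpha\bigl(h_0 - \tfrac{\varepsilon}{2}\|e_x\|^2\bigr).
\end{equation*}
\end{itemize}
Because the supremum in~\eqref{eq:backstepping_cbf_constraint} is pointwise, $K$ may depend on the state. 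So I prefer to pick the input differently: choose $\nabla_{\dot x}\dot x$ to make the $e_x$ term as large as needed rather than merely negative. That is, take $K<0$ with $|K|$ large, or more simply note that for $e_x\neq0$ the coefficient of the free acceleration in $\dot h$ is $-\varepsilon e_x \neq 0$. Hence $\dot h$ is an affine function of $\nabla_{\dot x}\dot x$ with nonzero linear part, and its supremum over $u\in U$ is $+\infty$ when $U$ is unconstrained.

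This reduces the theorem to a relative-degree statement. The input appears in $\dot h$ through $\langle e_x,\cdot\rangle$, so it is unbounded above whenever $e_x\neq 0$. When $e_x=0$, the input drops out, and strictness comes from Lemma~\ref{lem:half_sontag_safety_filter}.

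The main obstacle is making this rigorous in a form useful for the later QP. If $U$ is bounded, or if one wants a locally Lipschitz feedback, the trivial supremum argument is not enough. In that situation I would fall back to the explicit choice above with $K>0$ and handle the region of small $\|e_x\|$ by continuity. Strictness at $e_x=0$ on $\Omega_0$ extends to a neighborhood of small $e_x$. Outside that neighborhood, one needs $\dot h + \alpha(h)$ positive, which can be achieved by instead taking $K$ negative enough. Combining the two regimes with a smooth switching gain, possibly via the $\varepsilon$-dependent bound from~\cite{taylor2022safe}, is the step I expect to require care. For the fully actuated, unconstrained-input case stated here, however, the affine-in-$u$ argument suffices.
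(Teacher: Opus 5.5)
Your proof is correct and follows the paper's argument. The paper cites Theorem~\ref{thm:bcbf} rather than proving it, but its proof of the pullback version, Theorem~\ref{thm:task_bcbf}, uses the same split: at $e_x=0$ the input drops out, $h=h_0$, and Lemma~\ref{lem:half_sontag_safety_filter} gives the strict inequality on $\Omega_0$; for $e_x\neq 0$, full actuation makes $\dot h$ affine in the acceleration with nonzero coefficient $-\varepsilon e_x$, so the supremum is $+\infty$. The explicit-$K$ feedback detour and the bounded-$U$ discussion are not needed, and the monotonicity inequality you wrote there is reversed: $h<h_0$ gives $-\alpha(h) > -\alpha(h_0)$.
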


Since $h(x, \dot{x}) \leq h_0(x)$ for all $(x, \dot{x}) \in TN$ by construction~\eqref{eq:backstepping_cbf}, forward invariance of the set $\{(x, \dot{x}) \in TN : h(x, \dot{x}) \geq 0\}$ immediately implies safety:

\begin{corollary}[Safety~{\cite[Proposition~2]{de2025bundles}}]
    \label{cor:safety_bcbf}
    Let $u$ be any locally Lipschitz control input satisfying $\dot{h}(x, \dot{x}, u) \geq -\alpha(h(x, \dot{x}))$ for all $(x, \dot{x}) \in T\Omega_0$. If $h(x(0), \dot{x}(0)) \geq 0$, then $x(t) \in \mathcal{C}_0$ for all $t \geq 0$.
\end{corollary}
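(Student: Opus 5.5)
The plan is to reduce Corollary~\ref{cor:safety_bcbf} to a scalar comparison argument for the function $\phi(t) \triangleq h(x(t),\dot{x}(t))$ along the closed-loop trajectory, and then use $h \le h_0$ to move from the lifted set to $\mathcal{C}_0$.

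\textbf{Step 1: well-posedness.} Since $u$ is locally Lipschitz and the SMCS~\eqref{eq:dynamics} is control affine with smooth drift and input fields, the closed-loop vector field on $TN$ is locally Lipschitz. Hence a unique maximal solution $(x(t),\dot{x}(t))$ exists on some interval $[0,T_{\max})$. The function $\phi$ is $C^1$, because $h$ in~\eqref{eq:backstepping_cbf} is smooth on $T\Omega_0$. Here $h_0$ is smooth, the metric norm is smooth, and the safe velocity field $\tilde{\xi}$ (written $\mu_x$ in~\eqref{eq:backstepping_cbf}) is smooth on $\Omega_0$. The half-Sontag map is smooth wherever $\beta_{\mathrm{HS}}>0$, which I will take as given from~\cite{de2025bundles}.

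\textbf{Step 2: invariance of $\{h\ge 0\}$ by the comparison lemma.} While the trajectory stays in $T\Omega_0$, the hypothesis gives $\dot\phi \ge -\alpha(\phi)$ with $\phi(0)\ge 0$. Let $\psi$ solve $\dot\psi=-\alpha(\psi)$ with $\psi(0)=\phi(0)$. Since $\alpha\in\mathcal{K}^e_\infty$ and $\alpha(0)=0$, the constant $0$ is a solution of this scalar ODE. Because $\alpha$ is strictly increasing, if $\psi(0)\ge0$ then $\psi$ cannot cross below $0$. The comparison lemma then gives $\phi(t)\ge\psi(t)\ge 0$.

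If $\alpha$ is merely continuous, uniqueness may fail. In that case I will instead argue by Nagumo's theorem: on the boundary $\{h=0\}$ we have $\dot h\ge -\alpha(0)=0$. I will also check that $0$ is a regular value of $h$ on $T\Omega_0$ near this boundary, or avoid the issue with a strict-inequality and $\epsilon$-perturbation argument. This yields $h(x(t),\dot x(t))\ge0$ for all $t$ in the interval.

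\textbf{Step 3: from $h\ge0$ to $x\in\mathcal{C}_0$.} By construction $h(x,\dot x)=h_0(x)-\tfrac{\varepsilon}{2}\|\dot x-\tilde\xi(x)\|_N^2\le h_0(x)$, so $h\ge 0$ implies $h_0(x(t))\ge 0$, i.e.\ $x(t)\in\mathcal{C}_0$. Since $\mathcal{C}_0\subset\Omega_0$ and $\Omega_0$ is open (Lemma~\ref{lem:half_sontag_safety_filter}), the trajectory stays in a set where the hypothesis applies. A continuity and maximality argument rules out leaving $T\Omega_0$ before leaving $\{h\ge0\}$: suppose $t^*$ were the first exit time from $T\Omega_0$. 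Then $x(t^*)\in\mathcal{C}_0\subset\Omega_0$, which contradicts openness.

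\textbf{Main obstacle.} The main obstacle is global existence, i.e.\ $T_{\max}=\infty$. The safety statement is really about the interval of existence. Finite-time escape is excluded only if we know more, for example that $\mathcal{C}_0$ is compact or that the control is bounded on the invariant set. I expect to state the result on $[0,T_{\max})$ and note that it extends to all $t\ge0$ under completeness, as in~\cite[Proposition~2]{de2025bundles}.
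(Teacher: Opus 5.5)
Your proposal is correct and follows the same route as the paper. You obtain forward invariance of $\{h \ge 0\}$ from $\dot h \ge -\alpha(h)$ by a comparison argument, then use $h \le h_0$ to get $x(t) \in \mathcal{C}_0$; your exit-time argument for staying in $T\Omega_0$ and your caveat about the maximal existence interval are sound refinements the paper leaves implicit (the Nagumo fallback is unnecessary, since on any interval where $h<0$ one has $\dot h \ge -\alpha(h) > 0$, so a first-crossing argument works for any continuous, strictly increasing $\alpha$).
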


The BCBF approach has two practical advantages over the ECBF: it avoids
computing second (or higher) derivatives of $h_0$ along the system dynamics,
and the resulting CBF constraint~\eqref{eq:backstepping_cbf_constraint} is
affine in $u$, yielding a standard QP. Furthermore, the backstepping
construction extends naturally to the Riemannian manifold setting via the
geometric framework of~\cite{de2025bundles}. Conversely, the ECBF has the
advantage that its constraint~\eqref{eq:ecbf_constraint} depends only on
Lie derivatives of $h_0$ along the system vector fields and is therefore
independent of any choice of Riemannian metric on~$N$, whereas the BCBF
relies on metric-dependent quantities ($\operatorname{grad} h_0$,
$\|\cdot\|_N$, $\nabla_{\dot{x}}\mu_x$, $u^\sharp$).

\section{Pullback CBF for Task Manifold Safety}
\label{sec:task_manifold_cbf}

While metric-based tasks for safety were proposed in~\cite{bylard2021composable}, they present several limitations:
\begin{enumerate}[noitemsep, topsep=2pt]
    \item \emph{Exit behavior:} due to the symmetry of the metric-based velocity field, the constraint task must be deactivated when attempting to leave the safe set.
    \item \emph{Undefined behavior upon violation:} the metric $g = \exp(1/(2x^2))$ used in~\cite{bylard2021composable} is undefined on the constraint boundary $x = 0$, so no recovery mechanism exists once the constraint is violated.
    \item \emph{No guaranteed constraint satisfaction:} the constraint is ultimately enforced as one task among many in the multi-task PBDS QP~\eqref{eq:multi_pbds_def}, and the desired acceleration produced by the constraint task may not be fully realized.
\end{enumerate}

We address these limitations by incorporating \textit{pullback control barrier functions}, which produce sufficient conditions for a configuration-controlled robot to enforce task manifold safety. For brevity, we drop the task manifold index in this section. Consider a safety set $\mathcal{S} \subseteq N$ and a safety function $h_0: N \rightarrow \mathbb{R}$ that defines the safety of the system. We additionally make the following assumption, which is typically satsified in fully actuated and redundant robotic manipulators:
\begin{assumption}[Surjective submersive task map]
    \label{assum:surjective_submersion_task_map}
The task map $f : M \rightarrow N$ is a surjective submersion, so the system is fully actuated on $N$.
\end{assumption}
\begin{remark}
    The safe set in the configuration manifold is the preimage $f^{-1}(\mathcal{C}_0) = \{ p \in M \mid (h_0 \circ f)(p) \geq 0 \}$, which may not be connected. Since trajectories are continuous, forward invariance of $f^{-1}(\mathcal{C}_0)$ implies forward invariance of each connected component individually.
\end{remark}

Since the PBDS framework builds on the SMCS~\eqref{eq:dynamics}, the system is second order. We present adaptations of both the ECBF from~\cite{nguyen2016exponential,ames2019controlbarrierfunctionstheory} and the BCBF from~\cite{taylor2022safe,de2025bundles} enforce task manifold safety constraints on configuration manifold acceleration. By constraining the configuration manifold acceleration to lie in a safe set $\mathcal{A}_{\mathrm{safe}}$, we arrive at the following:
\begin{definition}[Autonomous \methodname{}]
    \label{def:autonomous_safe_pbds}
    The union of $K$ PBDS behavior tasks and $J$ pullback-CBF safety tasks,
    \begin{equation*}
        \{(f_i, g_i, \Phi_i, \mathcal F_{D,i}, w_i)\}_{i=1,\ldots,K} \;\cup\; \{(f_j, h_{0,j}, \bm{\kappa}_j)\}_{j=1,\ldots,J},
    \end{equation*}
    forms a multi-task PBDS with curves $\sigma : [0,\infty) \longrightarrow M$ satisfying
    \begin{equation}
        \label{eq:autonomous_safe_pbds_def}
        \ddot{\sigma}(t) = \underset{a \in \mathcal D_{\dot\sigma(t)}\cap \mathcal{A}_{\mathrm{safe}}}{\arg \min} \ \sum^K_{i=1} \frac{1}{2} \lVert Z_i(a) - S_i(\dot\sigma(t))   \rVert^2_{F_i^*w_i}
    \end{equation}
    where $\mathcal{A}_{\mathrm{safe}}$ is the intersection of the pullback-CBF constraints induced by the $J$ safety tasks (Theorems~\ref{thm:task_ecbf} and~\ref{thm:task_bcbf}). For a BCBF safety task, $\bm{\kappa}_j$ is replaced by its BCBF parameters $(\xi_j, \varepsilon_j)$.
\end{definition}

The remainder of this section derives pullback CBFs under the ECBF and BCBF formulations. The key idea is to pullback the task manifold CBF conditions through the task map to obtain safety constraints expressed entirely in configuration manifold inputs and the robot's state.

For the purpose of~\methodname{}, which is controlled by configuration manifold coordinate acceleration $\ddot{\sigma}$, we show that both the pullback ECBF and pullback BCBF yield linear constraints on $\ddot{\sigma}$. Therefore, safety can be enforced by adding linear constraints to the PBDS QP \eqref{eq:multi_pbds_def}. By substituting in Equation~\eqref{eq:dynamics_coords}, our derivation can be extended to general systems with non-flat configuration manifolds and SMCS dynamics~\eqref{eq:dynamics}.

\subsection{Geometric Kinematics}
While the safety constraint is defined in the task manifold $N$, the control input in PBDS is the configuration manifold coordinate acceleration $\ddot{\sigma}$. Thus, we begin by deriving the geometric kinematics relating the configuration and task manifold accelerations.
\subsubsection{Velocity Relationship}
Let $x(t)$ be a curve on $N$, $\sigma(t)$ be a curve on $M$ where $f(\sigma(t)) = x(t)$. In local coordinates, differentiating $x^\alpha = f^\alpha(\sigma)$ with respect to time yields
\begin{equation}
    \dot{x}^\alpha = \frac{\partial f^\alpha}{\partial \sigma^i} \dot{\sigma}^i = (Jf\, \dot{\sigma})^\alpha
\end{equation}
where $Jf$ is the Jacobian of the task map. In coordinate-free notation, this is $\dot{x} = df\, \dot{\sigma}$.

\subsubsection{Acceleration Relationship}
Differentiating the velocity relation gives the coordinate acceleration on $N$:
\begin{align}
    \ddot{x}^\alpha & =  \frac{d}{dt}\left( \frac{\partial f^\alpha}{\partial \sigma^j} \dot{\sigma}^j \right)                                                                      \\
                    & = \frac{\partial f^\alpha}{\partial \sigma^j} \ddot{\sigma}^j + \frac{\partial^2 f^\alpha}{\partial \sigma^k \partial \sigma^j} \dot{\sigma}^k \dot{\sigma}^j
    \label{eq:coord_acc_expansion}
\end{align}
The covariant accelerations on $N$ and $M$ are given in coordinates by
\begin{equation}
    \left( {}^{N}\nabla_{\dot{x}}\dot{x} \right)^\alpha = \ddot{x}^\alpha + {}^{N}\Gamma^\alpha_{\mu\nu} \dot{x}^\mu \dot{x}^\nu
    \label{eq:cov_acc_N}
\end{equation}
\begin{equation}
    \left( {}^{M}\nabla_{\dot{\sigma}}\dot{\sigma} \right)^k = \ddot{\sigma}^k + {}^{M}\Gamma^k_{ij} \dot{\sigma}^i \dot{\sigma}^j
    \label{eq:cov_acc_M}
\end{equation}
Rearranging~\eqref{eq:cov_acc_M} for $\ddot{\sigma}^k$ and substituting into~\eqref{eq:coord_acc_expansion} yields
\begin{equation}
    \ddot{x}^\alpha = \frac{\partial f^\alpha}{\partial \sigma^k} \left( {}^M\nabla_{\dot{\sigma}}\dot{\sigma} \right)^k + \left( \frac{\partial^2 f^\alpha}{\partial \sigma^i \partial \sigma^j} - \frac{\partial f^\alpha}{\partial \sigma^k} {}^M\Gamma^k_{ij} \right) \dot{\sigma}^i \dot{\sigma}^j
    \label{eq:coord_acc_substituted}
\end{equation}
Substituting~\eqref{eq:coord_acc_substituted} into~\eqref{eq:cov_acc_N} and using $\dot{x}^\mu = \frac{\partial f^\mu}{\partial \sigma^i}\dot{\sigma}^i$ yields
\begin{equation}
    \begin{split}
        &\left( {}^{N}\nabla_{\dot{x}}\dot{x} \right)^\alpha = \underbrace{\frac{\partial f^\alpha}{\partial \sigma^k} \left( {}^M\nabla_{\dot{\sigma}}\dot{\sigma} \right)^k}_{df\left( {}^M\nabla_{\dot{\sigma}}\dot{\sigma} \right)} \\
        &+ \underbrace{\biggl( \frac{\partial^2 f^\alpha}{\partial \sigma^i \partial \sigma^j} - \frac{\partial f^\alpha}{\partial \sigma^k} {}^M\Gamma^k_{ij}
            + {}^{N}\Gamma^\alpha_{\mu\nu} \frac{\partial f^\mu}{\partial \sigma^i} \frac{\partial f^\nu}{\partial \sigma^j} \biggr) \dot{\sigma}^i \dot{\sigma}^j}_{(\nabla df)(\dot{\sigma}, \dot{\sigma})}
    \end{split}
\end{equation}
In coordinate-free notation:
\begin{equation}
    {}^{N}\nabla_{\dot{x}}\dot{x} = df \left( {}^M\nabla_{\dot{\sigma}}\dot{\sigma} \right) + (\nabla df)(\dot{\sigma}, \dot{\sigma})
    \label{eq:geo_kinematics}
\end{equation}
where $(\nabla df)(\dot{\sigma}, \dot{\sigma})$ denotes the second fundamental form of the map $f$ as in~\cite{eells1978report}.

\subsection{Task manifold ECBF}
As~\eqref{eq:geo_kinematics} is a relative degree 2 system, we examine the first and second derivatives of the constraint function $h_0$ along the system trajectories.

The first derivative of $h_0(x(t))$ is computed by the chain rule:
\begin{equation}
    \dot{h}_0 = \frac{\partial h_0}{\partial x^\alpha} \dot{x}^\alpha.
    \label{eq:h_dot}
\end{equation}
For the second derivative, we differentiate~\eqref{eq:h_dot} and substitute $\dot{x}^\alpha = \frac{\partial f^\alpha}{\partial \sigma^i} \dot{\sigma}^i$:
\begin{equation}
    \ddot{h}_0 = \frac{\partial^2 h_0}{\partial x^\alpha \partial x^\beta} \dot{x}^\alpha \dot{x}^\beta + \frac{\partial h_0}{\partial x^\gamma} \ddot{x}^\gamma.
    \label{eq:h_double_dot_expansion}
\end{equation}
Substituting the coordinate acceleration expansion~\eqref{eq:coord_acc_expansion} into~\eqref{eq:h_double_dot_expansion} and applying the chain rule identities
\begin{align}
    \frac{\partial^2 (h_0 \circ f)}{\partial \sigma^i \partial \sigma^j} & = \frac{\partial^2 h_0}{\partial x^\alpha \partial x^\beta} \frac{\partial f^\alpha}{\partial \sigma^i} \frac{\partial f^\beta}{\partial \sigma^j} + \frac{\partial h_0}{\partial x^\gamma} \frac{\partial^2 f^\gamma}{\partial \sigma^i \partial \sigma^j}, \label{eq:chain_rule_hessian} \\
    \frac{\partial (h_0 \circ f)}{\partial \sigma^i}                     & = \frac{\partial h_0}{\partial x^\gamma} \frac{\partial f^\gamma}{\partial \sigma^i}, \label{eq:chain_rule_gradient}
\end{align}
yields
\begin{equation}
    \ddot{h}_0 = \underbrace{\frac{\partial^2 (h_0 \circ f)}{\partial \sigma^i \partial \sigma^j} \dot{\sigma}^i \dot{\sigma}^j}_{\text{Natural Dynamics / Drift}} + \underbrace{\frac{\partial (h_0 \circ f)}{\partial \sigma^i} \ddot{\sigma}^i}_{\text{Control Term}}.
    \label{eq:h_ddot_coordinate}
\end{equation}
Applying the ECBF constraint from Definition~\ref{def:ecbf} with $r = 2$ to~\eqref{eq:h_ddot_coordinate} yields the task manifold formulation:

\begin{theorem}[Task manifold ECBF]
    \label{thm:task_ecbf}
    A safety function $h_0$ (Definition~\ref{def:h0}) is a \textit{task manifold ECBF} for a PBDS system if there exists $\ddot{\sigma} \in T_\sigma M$ such that
    \begin{equation}
        \frac{\partial (h_0 \circ f)}{\partial \sigma^i} \ddot{\sigma}^i \geq -\frac{\partial^2 (h_0 \circ f)}{\partial \sigma^i \partial \sigma^j} \dot{\sigma}^i \dot{\sigma}^j - \kappa_2 \dot{h}_0 - \kappa_1 h_0,
        \label{eq:ecbf_coordinate_form}
    \end{equation}
    where the gains $\kappa_1, \kappa_2$ satisfy the pole placement and initial-condition requirements in Definition~\ref{def:ecbf}.
\end{theorem}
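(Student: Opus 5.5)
The plan is to reduce the statement to Definition~\ref{def:ecbf} with $r=2$, applied to the scalar output $h_0(x(t)) = (h_0\circ f)(\sigma(t))$. The control input is the configuration acceleration $\ddot{\sigma}$. I will proceed in three steps.

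\textbf{Step 1: relative degree and barrier state.} I first check that $h_0\circ f$ has relative degree two with respect to $\ddot{\sigma}$. By~\eqref{eq:h_dot} and the velocity relationship, $\dot{h}_0 = \frac{\partial (h_0\circ f)}{\partial \sigma^i}\dot{\sigma}^i$ depends only on the state $(\sigma,\dot\sigma)$. By~\eqref{eq:h_ddot_coordinate}, $\ddot{h}_0$ is affine in $\ddot\sigma$, with drift $\frac{\partial^2 (h_0\circ f)}{\partial\sigma^i\partial\sigma^j}\dot\sigma^i\dot\sigma^j$ and input coefficient $d(h_0\circ f)_\sigma = df^*(dh_0)$.

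This coefficient is nonzero near the boundary. Since $0$ is a regular value of $h_0$, $dh_0\neq 0$ on $\partial\mathcal{C}_0$. By Assumption~\ref{assum:surjective_submersion_task_map}, $df_\sigma$ is surjective, so its transpose is injective. Hence $d(h_0\circ f)\neq 0$ on $f^{-1}(\partial\mathcal{C}_0)$ and, by continuity, on a neighborhood of it. Therefore the virtual input $\mu=\ddot h_0$ can be assigned arbitrarily by choosing $\ddot\sigma$, for instance along $g_M^\sharp d(h_0\circ f)$ scaled appropriately. This shows the supremum in~\eqref{eq:ecbf_constraint} is $+\infty$ wherever the coefficient is nonzero, so the ECBF condition is satisfiable there.

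\textbf{Step 2: matching the linear form.} With $\eta_b=[h_0,\dot h_0]^\top$ and $\bm\kappa^\top=[\kappa_1,\kappa_2]$, the constraint $\mu\geq-\bm\kappa^\top\eta_b$ is exactly $\ddot h_0 \geq -\kappa_1 h_0-\kappa_2\dot h_0$. Substituting~\eqref{eq:h_ddot_coordinate} and moving the drift term to the right-hand side gives~\eqref{eq:ecbf_coordinate_form}. This inequality is linear in $\ddot\sigma$.

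\textbf{Step 3: safety conclusion.} I invoke the cited theorem of~\cite{ames2019controlbarrierfunctionstheory} with $p_1,p_2>0$, $\kappa_1=p_1p_2$, $\kappa_2=p_1+p_2$ and $p_1\geq -\dot h_0(x_0)/h_0(x_0)$. Any Lipschitz selection of $\ddot\sigma$ satisfying~\eqref{eq:ecbf_coordinate_form} then gives $h_0(x(t))\geq Ce^{(F-G\bm\kappa^\top)t}\eta_b(x_0)\geq 0$. Since $x=f\circ\sigma$, the curve $x(t)$ stays in $\mathcal{C}_0$, and equivalently $\sigma(t)$ stays in $f^{-1}(\mathcal{C}_0)$.

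\textbf{Remaining points.} Chart independence follows because~\eqref{eq:h_ddot_coordinate} only uses the scalar function $h_0\circ f$ on $M$. Its derivative $\ddot h_0$ along the curve is intrinsic, so no metric on $N$ enters.

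The main obstacle is the domain of validity. The derivation requires $d(h_0\circ f)\neq 0$ wherever the constraint must be feasible, but away from $\partial\mathcal{C}_0$ interior critical points of $h_0$ can occur. I would handle this with a case split. Where the coefficient vanishes, the constraint reduces to $0 \geq -\mathrm{drift}-\kappa_2\dot h_0-\kappa_1h_0$. Because $\nu_1=\dot h_0+p_1h_0$, this is precisely the recursive $\nu$-chain condition. I would argue it holds on the invariant set $\mathcal{C}^\nu_1\cap\mathcal{C}_0$ by the comparison argument, or else restrict the statement's existence claim to the neighborhood where submersivity gives a nonzero coefficient.
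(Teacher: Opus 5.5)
Your Steps 2 and 3 are essentially the paper's proof. You substitute \eqref{eq:h_ddot_coordinate} into $\ddot h_0\geq-\kappa_2\dot h_0-\kappa_1h_0$, rearrange to obtain \eqref{eq:ecbf_coordinate_form}, and conclude by the comparison lemma with real poles $p_1,p_2>0$ and $x_0\in\mathcal{C}_0\cap\mathcal{C}^\nu_1$. State that initial condition as $\dot h_0(x_0)+p_1h_0(x_0)\geq 0$, as the paper's proof does, rather than as a ratio, which is undefined when $h_0(x_0)=0$. The existence of $\ddot\sigma$ satisfying \eqref{eq:ecbf_coordinate_form} is a \emph{hypothesis} of the statement, so Step 1 and your domain-of-validity discussion are not required. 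Step 1 is nonetheless a correct addition: the regular-value condition and Assumption~\ref{assum:surjective_submersion_task_map} give $d(h_0\circ f)\neq 0$ near $f^{-1}(h_0^{-1}(0))$, so the constraint is feasible there.

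Drop the critical-point case split, however, because it is false. Where $d(h_0\circ f)_\sigma=0$ you also have $\dot h_0=0$. The constraint then collapses to the input-free condition $\frac{\partial^2(h_0\circ f)}{\partial\sigma^i\partial\sigma^j}\dot\sigma^i\dot\sigma^j+\kappa_1h_0\geq 0$, which is just $\nu_2\geq 0$ itself. Membership in $\mathcal{C}_0\cap\mathcal{C}^\nu_1$ does not imply this. Invoking forward invariance of that set is also circular, because the invariance is derived from the constraint holding along the trajectory. For instance, take $f=\mathrm{id}_{\mathbb{R}}$ and $h_0(x)=1-x^2$. Any state with $x=0$ and $|\dot x|>\sqrt{\kappa_1/2}$ lies in $\mathcal{C}_0\cap\mathcal{C}^\nu_1$ yet admits no feasible $\ddot\sigma$. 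Your fallback, restricting feasibility claims to where the coefficient is nonzero, is the correct option.
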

\begin{proof}
    Substituting~\eqref{eq:h_ddot_coordinate} into the ECBF condition $\ddot{h}_0 \geq -\kappa_2 \dot{h}_0 - \kappa_1 h_0$ and rearranging yields~\eqref{eq:ecbf_coordinate_form}. By the comparison lemma, the evolution of $h_0$ satisfies
    \begin{equation}
        h_0(x(t)) \geq [1 \quad 0] e^{\begin{bmatrix} 0 & 1 \\ -\kappa_1 & -\kappa_2 \end{bmatrix} t} \eta_b(x_0).
        \label{eq:feedback_linearized_h_dynamics}
    \end{equation}
    where $\eta_b = [h_0, \dot{h}_0]^\top$. With $\kappa_1 = p_1 p_2$ and $\kappa_2 = p_1 + p_2$ for $p_1, p_2 > 0$, the matrix exponential decays and the right-hand side remains nonnegative provided the initial state satisfies $x_0 \in \mathcal{C}_0 \cap \mathcal{C}^\nu_1$, i.e.\ $h_0(x_0) \geq 0$ and $\dot{h}_0(x_0) + p_1 h_0(x_0) \geq 0$.
\end{proof}

\begin{remark}[Metric independence]
    \label{rem:ecbf_metric_independence}
    The constraint~\eqref{eq:ecbf_coordinate_form} involves only coordinate partial derivatives of $h_0 \circ f$ and the coordinate acceleration $\ddot{\sigma}^i$. Intuitively, $h_0 \circ f$ has no knowledge of $N$, so no geometric quantities of $N$ appear. Since \methodname{} controls $\ddot{\sigma}$, the ECBF constraint contains no geometric quantities altogether.
\end{remark}

\begin{remark}[Covariant form for non-flat $M$]
    \label{rem:ecbf_covariant_form}
    For any choice of Riemannian metric $g_M$ on $M$, substituting $\ddot{\sigma}^k = ({}^M\!\nabla_{\dot{\sigma}}\dot{\sigma})^k - {}^M\!\Gamma^k_{ij}\dot{\sigma}^i\dot{\sigma}^j$ from~\eqref{eq:cov_acc_M} into~\eqref{eq:ecbf_coordinate_form} and using the Riemannian Hessian identity
    \begin{equation}
        \operatorname{Hess}_M(h_0 {\circ} f)(\dot{\sigma},\dot{\sigma}) {=} \tfrac{\partial^2 (h_0 \circ f)}{\partial \sigma^i \partial \sigma^j} \dot{\sigma}^i \dot{\sigma}^j {-} \tfrac{\partial (h_0 \circ f)}{\partial \sigma^k} {}^M\!\Gamma^k_{ij} \dot{\sigma}^i \dot{\sigma}^j,
        \label{eq:riemannian_hessian_M}
    \end{equation}
    yields the equivalent geometric form
    \begin{equation}
        d(h_0 \circ f)\!\left( {}^M\nabla_{\dot{\sigma}}\dot{\sigma} \right) \geq -\operatorname{Hess}_M(h_0 \circ f)(\dot{\sigma}, \dot{\sigma}) - \kappa_2 \dot{h}_0 - \kappa_1 h_0.
        \label{eq:geometric_ecbf_constraint}
    \end{equation}
\end{remark}

\begin{remark}[Behavior outside $\mathcal{C}_0$]
    As long as $h_0$ is smooth and well defined outside $\mathcal{C}_0$, the ECBF constraint guarantees the system will converge asymptotically to $\{x \mid h_0(x) \geq 0\}$ from any initial condition, with convergence rate governed by the eigenvalues $-p_1, -p_2$ via~\eqref{eq:feedback_linearized_h_dynamics}. This contrasts with the metric-based constraint enforcement in~\cite{bylard2021composable}, where the metric $g = \exp(1/(2x^2))$ is undefined on the constraint boundary $h_0(x) = 0$. In practice, one may offset $h_0$ so that $h_0(x) \geq -\epsilon$ represents safety; the ECBF then drives the system to $\{x \mid h_0(x) \geq -\epsilon\}$ in finite time.
\end{remark}

\examplecont
\begin{example}[continued]
    \emph{Workspace obstacle clearance} for robot link $i$ is encoded on the task manifold $N_{\mathrm{obs},i} = \mathbb{R}$ with task map $f_{\mathrm{obs},i}\colon M \to N_{\mathrm{obs},i}\colon \sigma \mapsto d(\mathrm{geom}_i, \mathrm{obs})$, the signed distance between the link $i$ collision geometry and the obstacle. The safety function $h_{0,\mathrm{obs},i}\colon N_{\mathrm{obs},i} \to \mathbb{R}$ is given by $h_{0,\mathrm{obs},i}(x) = x - d_{\min}$, where $d_{\min} \geq 0$ is a minimum clearance margin. 
    
    \emph{Joint limits} for joint $j$ are encoded on the task manifold $N_{\mathrm{lim},j} = \mathbb{R}$ with task map $f_{\mathrm{lim},j}\colon M \to N_{\mathrm{lim},j}\colon \sigma \mapsto \sigma^j$. The safety functions $h_{0,j}^{-}, h_{0,j}^{+} \colon N_{\mathrm{lim},j} \to \mathbb{R}$ are $h_{0,j}^{-}(x) = x - \sigma^j_-$ and $h_{0,j}^{+}(x) = \sigma^j_+ - x$, for $j = 1,\ldots,7$. Both families have relative degree 2 with respect to the task state. Each safety function yields a linear constraint on $\ddot{\sigma}$ via Theorem~\ref{thm:task_ecbf}.
\end{example}

\subsection{Task manifold BCBF}
Here we derive the BCBF safety constraint for the PBDS system. Note that by Assumption~\ref{assum:surjective_submersion_task_map}, the task map $f$ is a surjective submersion, so the system is fully actuated on $N$.
\begin{lemma}
    \label{lem:task_bcbf_hdot}
    In the task manifold PBDS setup, the BCBF candidate $h$ given in~\eqref{eq:backstepping_cbf} has time derivative
    \begin{equation}
        \begin{split}
            \dot{h} &= \langle \operatorname{grad} h_0, \dot{x} \rangle_N + \varepsilon \bigl\langle e_x, \nabla_{\dot{x}} \tilde{\xi} \bigr\rangle_N \\
            &\quad - \varepsilon \bigl\langle e_x, df \left( {}^M\nabla_{\dot{\sigma}}\dot{\sigma} \right) + (\nabla df)(\dot{\sigma}, \dot{\sigma}) \bigr\rangle_N.
        \end{split}
        \label{eq:pbds_bcbf_hdot}
    \end{equation}
\end{lemma}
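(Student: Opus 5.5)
We differentiate the BCBF candidate~\eqref{eq:backstepping_cbf} along a curve $x(t)=f(\sigma(t))$ and then substitute the geometric kinematics~\eqref{eq:geo_kinematics}. Here $\mu_x$ in~\eqref{eq:backstepping_cbf} is the safe velocity field $\tilde{\xi}$ of~\eqref{eq:safe_velocity}, and $e_x=\dot{x}-\tilde{\xi}(x)$ is a vector field along the curve $x(t)$. We split $h$ into $h_0(x)$ and the quadratic term $-\tfrac{\varepsilon}{2}\langle e_x,e_x\rangle_N$, and differentiate each piece separately.

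For the first piece, the chain rule and the definition of the Riemannian gradient give $\tfrac{d}{dt}h_0(x(t)) = dh_0(\dot{x}) = \langle \operatorname{grad} h_0,\dot{x}\rangle_N$.

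For the second piece, we use metric compatibility of the Levi-Civita connection ${}^N\nabla$ along the curve:
\[
\tfrac{d}{dt}\langle e_x,e_x\rangle_N = 2\langle e_x,\nabla_{\dot{x}} e_x\rangle_N .
\]
We then expand $\nabla_{\dot{x}} e_x = \nabla_{\dot{x}}\dot{x} - \nabla_{\dot{x}}\tilde{\xi}$. Both terms are well defined. The first is the covariant acceleration of $x(t)$. The second is the covariant derivative of the vector field $\tilde{\xi}$ in direction $\dot{x}$, which agrees with the derivative of $\tilde{\xi}(x(t))$ along the curve since $\tilde{\xi}$ is a genuine field on $N$. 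This gives
\[
\tfrac{d}{dt}\bigl(-\tfrac{\varepsilon}{2}\|e_x\|_N^2\bigr) = \varepsilon\langle e_x,\nabla_{\dot{x}}\tilde{\xi}\rangle_N - \varepsilon\langle e_x,{}^N\nabla_{\dot{x}}\dot{x}\rangle_N .
\]

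Finally, we substitute ${}^N\nabla_{\dot{x}}\dot{x} = df({}^M\nabla_{\dot\sigma}\dot\sigma) + (\nabla df)(\dot\sigma,\dot\sigma)$ from~\eqref{eq:geo_kinematics}. Summing the two pieces then yields~\eqref{eq:pbds_bcbf_hdot}.

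The only delicate point is justifying that $\tfrac{d}{dt}$ of the squared norm of a vector field along a curve equals twice the inner product with its covariant derivative. This follows from ${}^N\nabla g=0$, applied to the induced connection on $x^*TN$. We also need to identify the covariant derivative of $\tilde{\xi}\circ x$ along the curve with $\nabla_{\dot{x}}\tilde{\xi}$. If desired, both facts can be checked in coordinates, where the Christoffel terms cancel against $\partial g$ by the Levi-Civita formula.

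Assumption~\ref{assum:surjective_submersion_task_map} is not needed for this identity. It only matters afterwards, because $df$ onto guarantees that the control term $-\varepsilon\langle e_x, df(\cdot)\rangle_N$ can be made arbitrarily large whenever $e_x\neq 0$.
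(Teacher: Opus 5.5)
Your proof is correct and has the same structure as the paper's: differentiate $h_0$ and the quadratic term separately, use metric compatibility of ${}^N\nabla$ on the quadratic term, and substitute the geometric kinematics~\eqref{eq:geo_kinematics}. The difference is only presentation. The paper works in coordinates: it splits $\dot{x} = e_x + \tilde{\xi}$ inside the Christoffel term and cancels the leftover against the $\partial g_{\alpha\beta}$ term, which amounts to verifying by hand the product rule $\tfrac{d}{dt}\|e_x\|_N^2 = 2\langle e_x, D_t e_x\rangle_N$ that you invoke intrinsically.
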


The proof is given in Appendix~\ref{app:proof_task_bcbf_hdot}.

\begin{theorem}[Task manifold BCBF]
    \label{thm:task_bcbf}
    The function $h$ in~\eqref{eq:backstepping_cbf} is a \textit{task manifold
        BCBF} on $T\Omega_0$ for a PBDS system if
    \begin{equation}
        \sup_{\ddot{\sigma} \in T_\sigma M}
        \dot{h}(\sigma, \dot{\sigma}, \ddot{\sigma})
        > -\alpha(h(x, \dot{x})),
        \label{eq:task_bcbf_constraint}
    \end{equation}
    for all $(\sigma, \dot{\sigma}) \in TM$ with
    $(f(\sigma), df\,\dot{\sigma}) \in T\Omega_0$,
    where $x = f(\sigma)$, $\dot{x} = df\,\dot{\sigma}$, and $h$, $\dot{h}$ are given
    by~\eqref{eq:backstepping_cbf}
    and~\eqref{eq:pbds_bcbf_hdot}.
\end{theorem}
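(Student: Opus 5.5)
The plan is to reduce the condition~\eqref{eq:task_bcbf_constraint} to the task manifold statement of Theorem~\ref{thm:bcbf}. The bridge is the geometric kinematics~\eqref{eq:geo_kinematics} combined with Assumption~\ref{assum:surjective_submersion_task_map}. The key observation is that, by Lemma~\ref{lem:task_bcbf_hdot}, $\dot{h}$ depends on the configuration acceleration only through the task covariant acceleration
\[
{}^N\nabla_{\dot{x}}\dot{x} = df\bigl({}^M\nabla_{\dot{\sigma}}\dot{\sigma}\bigr) + (\nabla df)(\dot{\sigma},\dot{\sigma}).
\]
The other quantities in~\eqref{eq:pbds_bcbf_hdot} depend only on the state $(\sigma,\dot{\sigma})$, and hence only on $(x,\dot{x}) = (f(\sigma), df\,\dot{\sigma})$. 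These are $\langle \operatorname{grad} h_0, \dot{x}\rangle_N$, the error $e_x = \dot{x}-\tilde{\xi}$, the term $\nabla_{\dot{x}}\tilde{\xi}$, and the second fundamental form term.

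\emph{Step 1 (surjectivity of the acceleration map).} Fix $(\sigma,\dot{\sigma})$ with $(x,\dot{x}) \in T\Omega_0$. By~\eqref{eq:cov_acc_M}, the map $\ddot{\sigma} \mapsto {}^M\nabla_{\dot{\sigma}}\dot{\sigma}$ is an affine bijection of $T_\sigma M$, being a translation by the Christoffel term. Since $f$ is a submersion, $df_\sigma : T_\sigma M \to T_xN$ is onto. Therefore, as $\ddot{\sigma}$ ranges over $T_\sigma M$, the task covariant acceleration ${}^N\nabla_{\dot{x}}\dot{x}$ ranges over all of $T_xN$, since the fixed offset $(\nabla df)(\dot{\sigma},\dot{\sigma})$ does not affect this. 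Consequently,
\[
\sup_{\ddot{\sigma}\in T_\sigma M}\dot{h} = \sup_{a\in T_xN}\Bigl(\langle \operatorname{grad} h_0,\dot{x}\rangle_N + \varepsilon\langle e_x,\nabla_{\dot{x}}\tilde{\xi}\rangle_N - \varepsilon\langle e_x, a\rangle_N\Bigr).
\]
For a fully actuated SMCS on $N$, the right-hand side is exactly $\sup_u \dot{h}(x,\dot{x},u)$ in Theorem~\ref{thm:bcbf}, because $a = -\operatorname{grad}\Phi + u^\sharp$ also sweeps $T_xN$.

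\emph{Step 2 (invoke Theorem~\ref{thm:bcbf}, or verify directly).} We could now cite Theorem~\ref{thm:bcbf}. To be self-contained, I would instead split into two cases.
\begin{itemize}
\item If $e_x \neq 0$, choose $a = -s\, e_x$ with $s\to\infty$. Then $\dot{h}\to+\infty$, so the supremum is $+\infty$ and exceeds $-\alpha(h)$.
\item If $e_x = 0$, then $\dot{x} = \tilde{\xi}(x)$ and $h = h_0$. In this case $\dot{h} = \langle \operatorname{grad} h_0, \tilde{\xi}\rangle_N = d(h_0)_x\tilde{\xi}$, which is strictly greater than $-\alpha(h_0(x)) = -\alpha(h)$ by Lemma~\ref{lem:half_sontag_safety_filter} on $\Omega_0$.
\end{itemize}
Either way the strict inequality holds for every such $(\sigma,\dot{\sigma})$, so $h$ is a task manifold BCBF. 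Corollary~\ref{cor:safety_bcbf} then transfers forward invariance, and hence $f(\sigma(t))\in\mathcal{C}_0$, to any locally Lipschitz selection of $\ddot{\sigma}$ satisfying the affine constraint.

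\emph{Expected obstacle.} The main subtlety is Step~1: checking that $\dot{h}$ is genuinely a function of $(x,\dot{x})$ and ${}^N\nabla_{\dot{x}}\dot{x}$ alone. This matters because, when $f$ is not injective, different $(\sigma,\dot{\sigma})$ over the same $(x,\dot{x})$ must give the same supremum. It also requires that the affine dependence on $\ddot{\sigma}$ has full range in $T_xN$. Both properties follow from Lemma~\ref{lem:task_bcbf_hdot} together with submersivity. Surjectivity of $f$ is needed only so that every point of $T\Omega_0$ is actually reachable from some configuration state.
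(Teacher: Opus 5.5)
Your proposal is correct and follows essentially the same proof as the paper. Both use the same case split on $e_x$: the case $e_x = 0$ is handled by Lemma~\ref{lem:half_sontag_safety_filter} (using $h = h_0$ there), and the case $e_x \neq 0$ by submersivity making $\dot{h}$ unbounded above, where your Step~1 just recasts the paper's observation that the coefficient $-\varepsilon\, g_{\alpha\beta}\, e^\beta\, \partial f^\alpha/\partial \sigma^k$ of $\ddot{\sigma}$ is nonzero as surjectivity of $\ddot{\sigma} \mapsto {}^N\nabla_{\dot{x}}\dot{x}$ onto $T_xN$. One small slip: $(\nabla df)(\dot{\sigma},\dot{\sigma})$ depends on $(\sigma,\dot{\sigma})$, not only on $(x,\dot{x})$, since it differs across redundant $\dot{\sigma}$ with the same $df\,\dot{\sigma}$; this does not affect the argument because you absorb it as a fixed offset in Step~1.
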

\begin{proof}
    We verify~\eqref{eq:task_bcbf_constraint} for all $(\sigma, \dot{\sigma}) \in TM$ with $(f(\sigma), df\,\dot{\sigma}) \in T\Omega_0$. Let $x = f(\sigma)$ and $\dot{x} = df\,\dot{\sigma}$. If $e_x = 0$ (i.e.\ $\dot{x} = \tilde{\xi}$), the $\varepsilon$-terms in~\eqref{eq:pbds_bcbf_hdot} vanish and
    \begin{equation}
        \dot{h} = \langle \operatorname{grad} h_0, \tilde{\xi} \rangle_N > -\alpha(h_0(x)) = -\alpha(h(x, \dot{x})),
    \end{equation}
    where the inequality follows from Lemma~\ref{lem:half_sontag_safety_filter}, and the last equality from $h(x, \dot{x}) = h_0(x)$ whenever $e_x = 0$ by~\eqref{eq:backstepping_cbf}. If $e_x \neq 0$, the term $-\varepsilon \langle e_x, df({}^M\!\nabla_{\dot{\sigma}}\dot{\sigma}) \rangle_N$ in~\eqref{eq:pbds_bcbf_hdot} is affine in $\ddot{\sigma}$ by~\eqref{eq:cov_acc_M}, with linear coefficient $-\varepsilon\, g_{\alpha\beta}\, e^\beta \frac{\partial f^\alpha}{\partial \sigma^k}$ (free index $k$ on $M$). Since $f$ is a submersion ($Jf$ has full row rank) and $e_x \neq 0$, this coefficient is nonzero, so $\dot{h}$ can be made arbitrarily large by choosing $\ddot{\sigma} \in T_\sigma M$, giving $\sup \dot{h} = +\infty$.
\end{proof}
\begin{remark}
    Assumption~\ref{assum:surjective_submersion_task_map} ensures the system can be controlled in any direction on the tangent space of the task manifold, so the supremum over accelerations is unbounded if $e_x \neq 0$. This significantly simplifies the CBF design process.
\end{remark}

\begin{proposition}[Task manifold safety under BCBF]
    \label{prop:task_bcbf_safety}
    Let $\tilde{\xi}$ be a safe velocity field as in Lemma~\ref{lem:half_sontag_safety_filter}, and let $\ddot{\sigma}$ satisfy the non-strict BCBF constraint $\dot{h} \geq -\alpha(h(x, \dot{x}))$ for all $(x, \dot{x}) \in T\Omega_0$. For any initial state with $h(x(0), \dot{x}(0)) \geq 0$, the closed-loop trajectory satisfies $x(t) \in \mathcal{C}_0$ for all $t \geq 0$.
\end{proposition}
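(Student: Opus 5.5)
The proof reduces Proposition~\ref{prop:task_bcbf_safety} to Corollary~\ref{cor:safety_bcbf}. The configuration-level trajectory is pushed forward through the task map, and the scalar comparison argument is run on $h$ along that pushed-forward curve.

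\textbf{Step 1: task curve and its derivative.} Let $\sigma(t)$ be the closed-loop curve on $M$ generated by the accelerations $\ddot{\sigma}(t)$. Define the task curve $x(t) = f(\sigma(t))$, whose velocity is $\dot{x}(t) = df\,\dot{\sigma}(t)$ by the velocity relationship. Lemma~\ref{lem:task_bcbf_hdot} together with the geometric kinematics~\eqref{eq:geo_kinematics} shows that the function $t \mapsto h(x(t),\dot{x}(t))$ is differentiable. Its derivative equals the expression~\eqref{eq:pbds_bcbf_hdot} evaluated at $(\sigma,\dot{\sigma},\ddot{\sigma})$. Hence the hypothesis $\dot{h} \geq -\alpha(h)$, imposed on $\ddot{\sigma}$ whenever $(x,\dot{x}) \in T\Omega_0$, is a genuine differential inequality for the scalar function $\eta(t) := h(x(t),\dot{x}(t))$ along the actual task trajectory. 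This is exactly the kind of inequality Corollary~\ref{cor:safety_bcbf} uses for the SMCS on $N$.

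\textbf{Step 2: comparison argument.} Assume $\eta(0) \geq 0$. Then $(x(0),\dot{x}(0)) \in T\mathcal{C}_0 \subset T\Omega_0$, because $h \leq h_0$ forces $h_0(x(0)) \geq 0$.
\begin{itemize}
\item As long as the trajectory stays in $T\Omega_0$, we have $\dot{\eta} \geq -\alpha(\eta)$.
\item Compare $\eta$ with the solution $y$ of $\dot{y} = -\alpha(y)$, $y(0) = \eta(0) \geq 0$. Since $\alpha(0) = 0$, the solution $y$ stays nonnegative. The comparison lemma then gives $\eta(t) \geq y(t) \geq 0$.
\item This also covers the case where $\alpha$ is merely continuous: one argues by contradiction at the first time $\eta$ becomes negative, near which $\dot{\eta} \geq -\alpha(\eta) > 0$ whenever $\eta < 0$.
\end{itemize}
Because $h(x,\dot{x}) \leq h_0(x)$ by~\eqref{eq:backstepping_cbf}, it follows that $h_0(x(t)) \geq \eta(t) \geq 0$, that is, $x(t) \in \mathcal{C}_0$.

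\textbf{Step 3 (main obstacle): the trajectory never leaves $T\Omega_0$.} The differential inequality is only assumed on $T\Omega_0$, so we must rule out an escape. Let
\[
t^* = \sup\{t : (x(s),\dot{x}(s)) \in T\Omega_0 \text{ for all } s\in[0,t]\}.
\]
On $[0,t^*)$, Step 2 gives $x(t) \in \mathcal{C}_0$. The set $\mathcal{C}_0$ is closed, and $x$ is continuous, so $x(t^*) \in \mathcal{C}_0$ as well. Since $\mathcal{C}_0 \subset \Omega_0$ with $\Omega_0$ open, the state $(x(t^*),\dot{x}(t^*))$ lies in the open set $T\Omega_0$. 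By continuity the trajectory then remains in $T\Omega_0$ on a neighborhood of $t^*$, which contradicts the maximality of $t^*$ unless $t^* = \infty$.

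This step, together with existence and uniqueness of the closed-loop solution, is the delicate part. To handle it, I would either assume (as in Corollary~\ref{cor:safety_bcbf}) that the selected $\ddot{\sigma}$ is locally Lipschitz in $(\sigma,\dot{\sigma})$, or simply restrict attention to the maximal interval on which the solution exists. Either way, the conclusion $x(t)\in\mathcal{C}_0$ holds for all $t \geq 0$ in the domain of the trajectory.
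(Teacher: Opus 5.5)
Your proposal is correct and takes the same route as the paper: you apply the comparison lemma to $h$ along the pushed-forward task trajectory to get forward invariance of $\{h \geq 0\}$, and then use $h \leq h_0$ to conclude $x(t) \in \mathcal{C}_0$. Your Step~3 goes beyond the paper's two-line proof, which does not address the domain of the constraint: you show the trajectory never leaves $T\Omega_0$, where the constraint is assumed, using that $\mathcal{C}_0$ is closed and contained in the open set $\Omega_0$.
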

\begin{proof}
    Since $\alpha \in \mathcal{K}_\infty^e$ satisfies $\alpha(0) = 0$ and is strictly increasing, the non-strict constraint $\dot{h} \geq -\alpha(h)$ implies forward invariance of $\{h \geq 0\}$ by the comparison lemma, so $h(x(t), \dot{x}(t)) \geq 0$ for all $t \geq 0$. Since $h_0(x) \geq h(x, \dot{x})$ by~\eqref{eq:backstepping_cbf}, we conclude $h_0(x(t)) \geq 0$, and hence $x(t) \in \mathcal{C}_0$ for all $t \geq 0$.
\end{proof}

\begin{remark}
    Unlike the ECBF formulation, the BCBF depends on the choice of Riemannian metric on~$N$: the safe velocity field $\tilde{\xi}$, the velocity error norm $\|e_x\|_N$, the covariant derivative $\nabla_{\dot{x}} \tilde{\xi}$, and the inner product in~\eqref{eq:pbds_bcbf_hdot} all involve the metric. Intuitively, this is because the backstepping construction compares the current velocity $\dot{x}$ against the safe velocity field $\tilde{\xi}$ using the metric on $N$.
\end{remark}

\begin{remark}[Strict vs.\ non-strict inequalities]
    The ECBF constraint~\eqref{eq:ecbf_coordinate_form} uses a non-strict
    inequality ($\geq$), following~\cite{ames2019controlbarrierfunctionstheory},
    with the regularity condition (that $0$ is a regular value of $h_0$) assumed
    in Definition~\ref{def:h0}. The BCBF
    constraint~\eqref{eq:task_bcbf_constraint} uses a strict inequality
    ($>$), following~\cite{de2025bundles}. The strict inequality is a stronger
    condition: it implies regularity~\cite[Remark~1]{de2025bundles}, but not
    vice versa, since it additionally requires the existence of a control input
    that makes $h$ strictly increase on $\{h = 0\}$. In the
    backstepping construction, this strict margin arises naturally from the
    $\delta\,\operatorname{grad} h_0$ augmentation
    in~\eqref{eq:safe_velocity}. Note that the actual enforcement constraint used in Proposition~\ref{prop:task_bcbf_safety} is the non-strict $\dot{h} \geq -\alpha(h)$; the strict inequality in Theorem~\ref{thm:task_bcbf} establishes that valid controls exist.
\end{remark}

\begin{remark}[Recovery outside $\mathcal{C}_0$]
    For the BCBF, if $h_0$ is smooth on all of $N$, the half-Sontag formula~\eqref{eq:half_sontag_vector_field} produces a globally smooth safe velocity field $\tilde{\xi}$, so the candidate $h$~\eqref{eq:backstepping_cbf} is well-defined on all of~$TN$. The non-strict enforcement constraint $\dot{h} \geq -\alpha(h)$ is then satisfiable everywhere: when $e_x \neq 0$, full actuation makes the supremum unbounded; when $e_x = 0$, the half-Sontag formula guarantees $d(h_0)_x\,\tilde{\xi} \geq -\alpha(h_0(x))$ globally. If $h < 0$, a comparison argument using $\alpha \in \mathcal{K}_\infty^e$ shows $h(t) \to 0$, recovering safety. The domain $\Omega_0$ from Lemma~\ref{lem:half_sontag_safety_filter} is only required for the strict inequality in Theorem~\ref{thm:task_bcbf}, which establishes $h$ as a CBF in the formal sense, but does not limit the practical recovery behavior.
\end{remark}
\begin{remark}[Practical considerations]
    In practice, finite differencing is often used to compute derivatives and smooth functions that are Lipschitz continuous but not everywhere differentiable. Additionally, due to numerical errors and latency, it is possible to get small CBF violations despite satisfying CBF constraints. Such issues may be mitigated by padding the safety function in~\eqref{eq:safe_set_general} by a small margin $\epsilon_{\mathrm{pad}} > 0$, which we apply in our implementation.
\end{remark}

\section{Task manifold actions}
\label{sec:task_manifold_actions}
In the autonomous \methodname{} framework (Definition~\ref{def:autonomous_safe_pbds}), runtime decision-making is limited to adjusting the weighting pseudometrics~$w_i$. However, one may wish to steer the system directly in a task manifold~$N$, for example to incorporate commands from a higher-level planner or a learned policy. We therefore propose a mechanism that exposes control inputs while preserving the safety and compositional structure of the PBDS framework. We seek the following desiderata:
\begin{enumerate}[noitemsep, topsep=2pt]
    \item The action space may be lower dimensional than the configuration dimension~$m$.
    \item If so desired, all~$m$ dimensions may be controlled.
    \item The action represents a residual force on top of the autonomous PBDS dynamics: when the action is zero, the system follows the autonomous behavior.
\end{enumerate}

To this end, consider $L$ \emph{control task maps} $f_l: M\rightarrow N_l$, $l=1,\ldots,L$, each equipped with a weighting pseudometric~$w_l$ on~$TN_l$ and a Riemannian behavior metric~$g_l$ on~$N_l$. These may coincide with some of the $K$ autonomous task maps, or they may be entirely separate. For each control task, the user supplies a force-like input $u_l \in T^*\!N_l$, which is converted to an acceleration-level quantity via the sharp (musical isomorphism) $u_l^\sharp = g_l^\sharp(u_l) \in TN_l$ (in coordinates, $g_l^{-1}\, u_l$).

\begin{definition}[Steerable \methodname{}]
    \label{def:steerable_pbds}
    Let $\bar{a}$ be the optimal autonomous acceleration~\eqref{eq:autonomous_safe_pbds_def}. Given $L$ control inputs $u_l \in T^*\!N_l$, each defined in control task manifolds $f_l: M\rightarrow N_l$, the acceleration under action input is
    \begin{equation}
        \begin{split}
            \label{eq:safe_steered_pbds_def}
            \ddot{\sigma}(t) = \underset{a \in \mathcal D_{\dot\sigma(t)}\cap \mathcal{A}_{\mathrm{safe}}}{\arg \min} \ \sum^{K}_{i=1} \frac{1}{2} \lVert Z_i(a) - S_i(\dot{\sigma}(t))   \rVert^2_{F_i^*w_i} \\ + \sum^{L}_{l=1} \frac{1}{2} \lVert Z_l(a) - Z_l(\bar{a}) - u_l^\sharp   \rVert^2_{F_l^*w_l}.
        \end{split}
    \end{equation}
    where $S_i$ and $Z_i$ are defined in Section~\ref{sec:pbds}.
\end{definition}
The first term is identical to the autonomous objective~\eqref{eq:multi_pbds_def}; the second term shifts the autonomous acceleration by the control input~$u_l^\sharp$. Since $Z_i$ is affine in the acceleration and the constraint set $\mathcal{D}_{\dot\sigma} \cap \mathcal{A}_{\mathrm{safe}}$ is characterized by linear inequalities (Section~\ref{sec:task_manifold_cbf}), the steered problem~\eqref{eq:safe_steered_pbds_def} remains a convex QP and can be solved with the same infrastructure as the autonomous problem. Algorithm~\ref{alg:safepbds_step} summarizes the per-step procedure with the safety constraints written out.

\begin{algorithm}[!htb]
    \caption{\methodname{} control step}
    \label{alg:safepbds_step}
    \small
    \begin{algorithmic}[1]
        \REQUIRE state $(\sigma, \dot{\sigma}) \in TM$; behavior tasks $\{(f_i, g_i, \Phi_i, \mathcal{F}_{D,i}, w_i)\}_{i=1}^{K}$; safety tasks indexed by $\mathcal{J}_E$ (ECBF, Theorem~\ref{thm:task_ecbf}) and $\mathcal{J}_B$ (BCBF, Theorem~\ref{thm:task_bcbf}); control tasks $\{(f_l, g_l, w_l, u_l)\}_{l=1}^{L}$.
        \ENSURE coordinate acceleration $\ddot{\sigma} \in T_\sigma M$.
        \STATE $\mathcal{A}_{\mathrm{safe}} \gets T_\sigma M$
        \FOR{each $j \in \mathcal{J}_E$}
        \STATE intersect $\mathcal{A}_{\mathrm{safe}}$ with the half-space
        \begin{multline*}
            \tfrac{\partial (h_{0,j} \circ f_j)}{\partial \sigma^k} a^k \geq -\tfrac{\partial^2 (h_{0,j} \circ f_j)}{\partial \sigma^k \partial \sigma^\ell} \dot{\sigma}^k \dot{\sigma}^\ell \\
            - \kappa_{2,j} \dot{h}_{0,j} - \kappa_{1,j} h_{0,j}
        \end{multline*}
        \ENDFOR
        \FOR{each $j \in \mathcal{J}_B$}
        \STATE intersect $\mathcal{A}_{\mathrm{safe}}$ with $\dot{h}_j(\sigma, \dot{\sigma}, a) \geq -\alpha_j\!\bigl(h_j(x_j, \dot{x}_j)\bigr)$, where $\dot{h}_j$ is given by~\eqref{eq:pbds_bcbf_hdot}
        \ENDFOR
        \STATE \textbf{QP~1} (autonomous safe acceleration):
        \[
            \bar{a} \gets \argmin_{a \in \mathcal{A}_{\mathrm{safe}}} \ \sum^{K}_{i=1} \tfrac{1}{2} \lVert Z_i(a) - S_i(\dot{\sigma}) \rVert^2_{F_i^*w_i}
        \]
        \STATE \textbf{QP~2} (steered acceleration): solve~\eqref{eq:safe_steered_pbds_def} for $\ddot{\sigma}$ given $\bar{a}$, with $a \in \mathcal{A}_{\mathrm{safe}}$
        \RETURN $\ddot{\sigma}$
    \end{algorithmic}
\end{algorithm}

\begin{theorem}[Autonomous behavior recovery]
    \label{thm:autonomous_recovery}
    If all control inputs $u_l=0$, $\bar{a}$ is a minimizer of~\eqref{eq:safe_steered_pbds_def}.
\end{theorem}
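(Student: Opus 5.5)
The argument is a direct comparison of objective values on the common feasible set; no optimality conditions or Lagrange multipliers are needed. Write $J_{\mathrm{aut}}(a) = \sum_{i=1}^{K} \tfrac12 \lVert Z_i(a) - S_i(\dot\sigma)\rVert^2_{F_i^*w_i}$ for the autonomous objective in~\eqref{eq:autonomous_safe_pbds_def}. Write $J_{\mathrm{ctl}}(a) = \sum_{l=1}^{L} \tfrac12 \lVert Z_l(a) - Z_l(\bar a) - u_l^\sharp\rVert^2_{F_l^*w_l}$ for the additional term in~\eqref{eq:safe_steered_pbds_def}. The steered objective is then $J_{\mathrm{aut}} + J_{\mathrm{ctl}}$.

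The first step is to note that both problems are posed over the same feasible set, $\mathcal{D}_{\dot\sigma(t)} \cap \mathcal{A}_{\mathrm{safe}}$. This holds because $\mathcal{A}_{\mathrm{safe}}$ is built from the state $(\sigma,\dot\sigma)$ alone, as in Algorithm~\ref{alg:safepbds_step}, and does not depend on the inputs $u_l$. Consequently $\bar a$, being a minimizer of QP~1, is feasible for~\eqref{eq:safe_steered_pbds_def}.

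Now set $u_l = 0$. Since $u_l^\sharp = g_l^\sharp(0) = 0$ by linearity of the sharp map, $J_{\mathrm{ctl}}(a) = \sum_l \tfrac12 \lVert Z_l(a) - Z_l(\bar a)\rVert^2_{F_l^*w_l}$. This is a sum of squared pseudonorms, so it is nonnegative because each $w_l$ is positive semidefinite. It also vanishes at $a = \bar a$. For any feasible $a$ we therefore have
\[
J_{\mathrm{aut}}(a) + J_{\mathrm{ctl}}(a) \;\geq\; J_{\mathrm{aut}}(a) \;\geq\; J_{\mathrm{aut}}(\bar a) \;=\; J_{\mathrm{aut}}(\bar a) + J_{\mathrm{ctl}}(\bar a).
\]
The middle inequality is the optimality of $\bar a$ for~\eqref{eq:autonomous_safe_pbds_def}, which applies because the feasible sets coincide. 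Hence $\bar a$ minimizes~\eqref{eq:safe_steered_pbds_def}.

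The only point requiring care is a well-definedness issue. The term $Z_l(a) - Z_l(\bar a)$ must make sense: both $Z_l(a)$ and $Z_l(\bar a)$ lie in the same fiber over $(f_l^*df_l)_p(v)$, so their vertical parts can be subtracted, namely $Jf_l(a^a - \bar a^a)$ in coordinates. We should also remark that the theorem claims only that $\bar a$ is \emph{a} minimizer. Uniqueness would require the autonomous QP to be strictly convex on the feasible set, for instance positive definiteness of $\sum_i Jf_i^\top w_i^a Jf_i$ as in~\eqref{eq:pbds_closed_form}. We will not need that.
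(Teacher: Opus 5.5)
Your proof is correct, and it takes a genuinely different route from the paper's. The paper argues through optimality conditions. It notes that \eqref{eq:safe_steered_pbds_def} is a convex QP with linear constraints, so the KKT conditions are necessary and sufficient. With $u_l = 0$, the gradient of each control term vanishes at $a = \bar a$, because the residual $Z_l(a) - Z_l(\bar a)$ is zero there. Stationarity therefore collapses to that of the autonomous problem, and the pair $(\bar a, \bar\lambda)$, with $\bar\lambda$ the autonomous multipliers, satisfies the full KKT system of the steered QP.

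Your direct value comparison needs none of this. It uses only two facts: the feasible set does not depend on $u_l$, and each $w_l$ is positive semidefinite. So it requires no convexity, no linearity of the CBF constraints, no constraint qualification and no affinity of $Z_l$. It would still hold, for instance, for a nonconvex $\mathcal{A}_{\mathrm{safe}}$.

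Your chain of inequalities also gives more for free. Any minimizer $a^*$ of \eqref{eq:safe_steered_pbds_def} with $u_l=0$ must satisfy $J_{\mathrm{ctl}}(a^*) = 0$ and $J_{\mathrm{aut}}(a^*) = J_{\mathrm{aut}}(\bar a)$. The steered minimizers are therefore exactly the autonomous minimizers that agree with $\bar a$ on every control task in the $F_l^*w_l$-seminorms. This characterizes the non-uniqueness the paper only remarks on.

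What the paper's KKT route buys in return is an explicit primal--dual optimal pair $(\bar a, \bar\lambda)$ for the steered QP. That is the relevant fact when QP~2 is warm-started from the solution of QP~1, as the paper does in practice.
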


\begin{proof}
    Since $Z_i$ is affine in~$a$ and the feasible set $\mathcal{D}_{\dot\sigma(t)} \cap \mathcal{A}_{\mathrm{safe}}$ is characterized by linear constraints (Section~\ref{sec:task_manifold_cbf}), Equation~\eqref{eq:safe_steered_pbds_def} is a convex QP for which the KKT conditions are necessary and sufficient. Let $Aa \leq b$ denote the assembled linear constraints. Setting $u_l = 0$, the KKT stationarity condition of~\eqref{eq:safe_steered_pbds_def} is
    \begin{multline}
        \nabla_a \!\biggl[\sum^{K}_{i=1} \frac{1}{2} \lVert Z_i(a) \!-\! S_i \rVert^2_{F_i^*w_i} \\
        +\, \sum^{L}_{l=1} \frac{1}{2} \lVert Z_l(a) \!-\! Z_l(\bar{a}) \rVert^2_{F_l^*w_l}\biggr] + A^\top \lambda = 0,
    \end{multline}
    with $Aa \leq b$, $\lambda \geq 0$, and $\lambda^\top(Aa - b) = 0$.

    By construction, $\bar{a} \in \mathcal{D}_{\dot\sigma(t)}\cap \mathcal{A}_{\mathrm{safe}}$, so primal feasibility holds. Since $Z_l$ is affine in~$a$, the gradient of $\lVert Z_l(a) - Z_l(\bar{a}) \rVert^2_{F_l^*w_l}$ vanishes at $a = \bar{a}$. The stationarity condition thus reduces to that of the autonomous problem (Definition~\ref{def:autonomous_safe_pbds}), which $\bar{a}$ satisfies with optimal autonomous problem dual variables~$\bar{\lambda}$. As the feasible set is identical, all KKT conditions are satisfied and $\bar{a}$ minimizes~\eqref{eq:safe_steered_pbds_def}.
\end{proof}
\begin{remark}
    If one interprets $\bar{a}$ as the resulting acceleration of a task manifold SMCS (Equation~\eqref{eq:dynamics}), the second sum in~\eqref{eq:safe_steered_pbds_def} acts as an additional task manifold force.
\end{remark}
\begin{remark}
    $\bar{a}$ is not necessarily a unique minimizer. In practice, since $\bar{a}$ is used to warm start the numerical solver, it is typically selected when all $u_l=0$.
\end{remark}
\begin{remark}[Desiderata]
    \label{rem:desiderata}
    The three desiderata from the beginning of this section are satisfied by construction: (1)~each control task map $f_l : M \rightarrow N_l$ may have $n_l \leq m$, so the total action dimension $\sum_{l=1}^L n_l$ can be less than~$m$; (2)~if the control task maps collectively form a submersion (i.e., $\operatorname{rank}\bigl[\,Jf_1^\top \cdots Jf_L^\top\bigr] = m$), all $m$ degrees of freedom are controllable; and (3)~Theorem~\ref{thm:autonomous_recovery} guarantees that zero control input recovers the autonomous behavior.
\end{remark}

\examplecont
\begin{example}[continued]
    To expose a one-dimensional action space that selects among qualitatively distinct avoidance behaviors (e.g. routing the elbow on either side of a workspace obstacle), we use a control task map given by the coordinate projection onto joint 1,
    \begin{equation*}
        f\colon M \to \mathbb{R}\colon \sigma \mapsto \sigma^1,
    \end{equation*}
    with identity behavior metric $g = 1$ and a positive scalar weighting pseudometric $w$. The action input $u \in \mathbb{R}$ injects an acceleration on joint 1; flipping the sign of $u$ steers the arm into a different homotopy class of safe paths around the obstacle, while $u = 0$ recovers the autonomous behavior by Theorem~\ref{thm:autonomous_recovery}. This is demonstrated in Section~\ref{sec:7dof_steered}.
\end{example}

\section{Evaluation: Simulated $\mathbb{S}^2$ Double Integrator}
Using a point robot on $\mathbb{S}^2$, we first validate the theoretical properties of \methodname{}, in particular chart invariance, safety guarantees, multi-task composition, and effects of our action space design. All experiment configurations in this section are summarized in Table~\ref{tab:s2_run_configs}. Additional details are available in Appendix~\ref{app:s2_experiment_details}.

\subsection{System Setup}

\paragraph{System definition}
Consider a point robot with unit mass traveling on the surface of a unit sphere. The configuration manifold of the robot is $\mathbb{S}^2$, which has a natural embedding $\bar{\varphi}: \mathbb{S}^2\hookrightarrow \mathbb{R}^3$. We consider the atlas $\left\{(U_N, \varphi_N), (U_S, \varphi_S)\right\}$ formed by the north and south pole stereographic projection charts. We can define corresponding maps $\bar{\varphi}_N: \mathbb{R}^2\rightarrow \mathbb{R}^3$ and $\bar{\varphi}_S: \mathbb{R}^2\rightarrow \mathbb{R}^3$ from chart coordinates $(y_1, y_2)$ to embedding coordinates $(x_1, x_2, x_3)$.

\paragraph{Configuration Dynamics}
Since PBDS does not account for the configuration manifold geometry (Remark~\ref{rem:coordinate_acceleration}), we treat the chart coordinates as~$\mathbb{R}^2$ and integrate with a fourth-order Runge--Kutta scheme when simulating the system forward.

\paragraph{Task Dynamics}
Following~\cite{bylard2021composable}, we use the chart-to-embedding task map $f = \bar{\varphi}_\alpha\colon \mathbb{R}^2 \to \mathbb{R}^3$, with an attractor potential and dissipative force defined on the embedding~$\mathbb{R}^3$.

\paragraph{Safety constraint}
We choose the safety task as staying at least arclength $r$ from a point $x_o \in \mathbb{S}^2$. The safety function (Definition~\ref{def:h0}) is
$h_0(y) = \arccos\!\bigl(\bar{\varphi}_\alpha(y) \cdot x_o\bigr) - r,$
with $\alpha \in \{N, S\}$ the active chart. The safe set is $\mathcal{C}_0 = \{y \in \mathbb{R}^2 : h_0(y) \geq 0\}$. For BCBF, we set $\xi = 0$ and compare two metrics: the round metric $g_{ij} = \frac{4}{(1+\|y\|^2)^2}\delta_{ij}$, which is the pullback of the embedded $\mathbb{S}^2$ metric through the stereographic chart, and the flat metric $\delta_{ij}$, which treats the chart coordinates as Euclidean.

\begin{table}[t]
    \centering
    \caption{Run configurations for the $\mathbb{S}^2$ double integrator experiments.}
    \label{tab:s2_run_configs}
    \small
    \setlength{\tabcolsep}{5pt}
    \begin{tabular}{@{}llll@{}}
        \toprule
        Run    & Chart     & Safety (chart, metric) & Action                           \\
        \midrule
        \multicolumn{4}{l}{\textit{Autonomous}: metric dependence and chart switching} \\
        (i)    & N         & ECBF (N)               & --                               \\
        (ii)   & N         & BCBF (N, round)        & --                               \\
        (iii)  & N         & BCBF (N, flat)         & --                               \\
        (iv)   & Switching & ECBF (Switch)          & --                               \\
        (v)    & Switching & BCBF (Switch, round)   & --                               \\
        \addlinespace
        \multicolumn{4}{l}{\textit{Autonomous}: safety recovery}                       \\
        (vi)   & N         & ECBF (N)               & --                               \\
        (vii)  & N         & BCBF (N, round)        & --                               \\
        \addlinespace
        \multicolumn{4}{l}{\textit{Steered}}                                           \\
        (viii) & N         & ECBF (N)               & $0$ (autonomous)                 \\
        (ix)   & N         & ECBF (N)               & $+u_\perp$                       \\
        (x)    & N         & ECBF (N)               & $-u_\perp$                       \\
        (xi)   & N         & ECBF (N)               & $u_{\mathrm{unsafe}}$            \\
        (xii)  & S         & ECBF (S)               & $-u_\perp$                       \\
        \bottomrule
    \end{tabular}
\end{table}

\subsection{Autonomous \methodname{} on $\mathbb{S}^2$}
\label{sec:scenario_1}

First, we validate the theoretical properties of ECBFs and BCBFs in \methodname{} without task manifold actions, shown in Figure~\ref{fig:cbf_combined}(a--b).

\paragraph{Metric dependence and chart switching}
As expected from Remark~\ref{rem:ecbf_metric_independence}, ECBF is metric-independent. The two BCBF runs produce visibly different avoidance trajectories, since BCBF depends on the metric via $\operatorname{grad} h_0$, $\|e_x\|_N$, and $\nabla_{\dot{x}}\tilde{\xi}$; the BCBF also stays farther from the unsafe set boundary due to the additional ``safe velocity tracking error'' penalty (second term in~\eqref{eq:pbds_bcbf_hdot}). We verify chart switching by varying the configuration manifold chart between north and south. This is the safety-augmented analogue of the chart switching experiment in~\cite[Fig.~5]{bylard2021composable}. Run~(iv) matches the fixed-chart reference~(i) on~$\mathbb{S}^2$, confirming that the \methodname{} produce consistent behavior across chart choices and can be deployed on atlases with multiple charts.

\paragraph{Safety recovery}
To test recovery, we run both an ECBF and a BCBF trajectory starting from inside the unsafe set. Both formulations drive the system out of the unsafe region and converge to the goal. The ECBF constrains only $h_0 \geq 0$ and recovers along the most direct path. The BCBF's safe velocity field deviation penalty causes its recovery to maintain a larger clearance from the obstacle boundary.
\begin{figure*}[ht]
    \centering
    \includegraphics[width=0.98\textwidth]{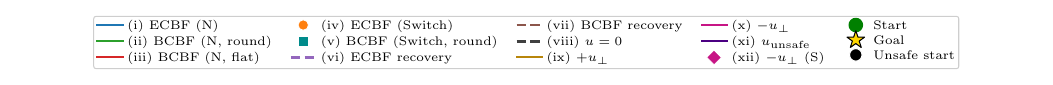}\\[-10pt]
    \subfloat[]{\includegraphics[trim=0 25 0 25, clip, width=0.32\textwidth]{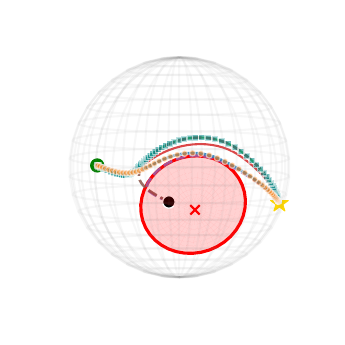}\label{fig:cbf_pbds}}
    \hfill
    \subfloat[]{\includegraphics[trim=0 4 0 4, clip, width=0.32\textwidth]{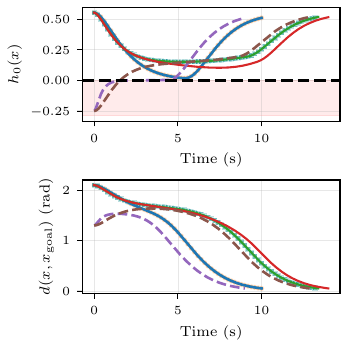}\label{fig:cbf_h0}}
    \hfill
    \subfloat[]{\includegraphics[trim=0 25 0 25, clip, width=0.32\textwidth]{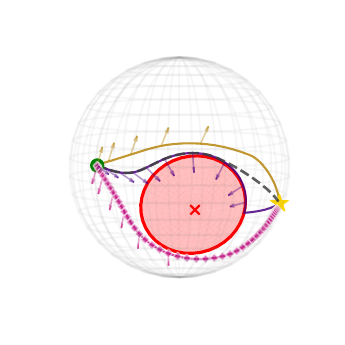}\label{fig:steered_sphere}}
    \caption{Pullback CBF and action interface on $\mathbb{S}^2$; run indices match Table~\ref{tab:s2_run_configs} and the legend. (a)~Autonomous runs~(i)--(vii) on the same scene; recovery runs~(vi)--(vii) start inside the obstacle ($\times$ marker). (b)~$h_0(t)$ (top, shaded region is $h_0<0$) and geodesic distance to goal (bottom) for the same runs. (c)~(viii)--(xii): tangential actions~$\pm u_\perp$ select opposite homotopy classes, $u_{\mathrm{unsafe}}$ is clipped by the CBF, and~(xii) repeats~(x) on the south pole chart to verify chart invariance.}
    \label{fig:cbf_combined}
\end{figure*}

\subsection{Steered \methodname{} on $\mathbb{S}^2$}
\label{sec:scenario_action}

Next, we demonstrate the properties of the task manifold actions (Section~\ref{sec:task_manifold_actions}). We use an ECBF safety constraint and an identity control task map $f_l = \mathrm{id}\colon \mathbb{R}^2 \to \mathbb{R}^2$ on the active chart. We place the obstacle so the system faces two valid avoidance paths on each side of the obstacle. We run five trajectories~(viii)-(xii), all shown in Figure~\ref{fig:cbf_combined}(c):
\begin{enumerate}[label=(\roman*), start=8, leftmargin=*, noitemsep, topsep=2pt]
    \item \emph{Autonomous} ($u = 0$): the system breaks symmetry via a small positional offset and converges to one side.
    \item \emph{$+u_{\perp}$}: a tangential action perpendicular to the start--goal geodesic with $\|u_{\perp}\| = 1$, steering the robot to the same side as the autonomous bias.
    \item \emph{$-u_{\perp}$}: flipping the sign steers the robot to the opposite side, resolving the topological ambiguity that the autonomous dynamics alone cannot.
    \item \emph{$u_{\mathrm{unsafe}}$}: a large action ($\|u\| = 10$) continuously pointing toward the obstacle center. \methodname{} prevents the constraint violation, and safety is maintained ($h_0(t) \geq 0$) despite the adversarial action.
    \item \emph{$-u_{\perp}$ on the south pole chart}: we repeat~(x) on the south pole chart to verify chart invariance under action inputs.
\end{enumerate}

\section{Evaluation: Simulated 7-DOF Robot Arm}
\label{sec:7dof_simulation}
We instantiate the running 7-DOF arm scenario (Example~\ref{ex:7dof}) in MuJoCo~\cite{todorov2012mujoco} to validate the pullback CBF for workspace obstacle avoidance and the action space formulation for behavior modality selection. The arm starts at its home configuration; goals and obstacles vary per experiment as detailed below. Additional details are provided in Appendix~\ref{app:7dof_experiment_details}.

\subsection{Autonomous \methodname{} on the 7-DOF Arm}
\label{sec:7dof_safety}

We validate the pullback ECBF by comparing the full \methodname{} system against an ablation that removes the obstacle avoidance ECBFs while keeping joint limit ECBFs active. We test end effector tracking in two setups:

\paragraph{6-DOF pose ($\mathrm{SE}(3)$) tracking} A spherical obstacle (radius $0.10$\,m) is placed in between the start and goal positions. Figure~\ref{fig:7dof_combined}(a) shows the full system (purple) deflecting around the obstacle, while the ablation (red, dashed) passes through it. Both converge to the goal pose.

\paragraph{Random orientation ($\mathrm{SO}(3)$) tracking}
We sample 50 scenarios that each track a goal orientation (a $30^\circ$-$150^\circ$ rotation about a random axis) with a randomly placed spherical obstacle (radius $8$\,cm). Position tracking is forfeited due to the difficulty of randomly finding a reachable goal pose under the obstacle avoidance constraint. Scenarios with obstacle clearance below $5$\,cm are rejected and resampled. The full system is safe in all 50 runs (min $h_\mathrm{obs} = +0.010$), whereas the ablation  violates the obstacle constraint in 11 of 50 runs (min $h_\mathrm{obs} = -0.080$); all runs reach the goal orientation.

\subsection{Steered \methodname{} on the 7-DOF Arm}
\label{sec:7dof_steered}

We validate the action interface (Section~\ref{sec:task_manifold_actions}) using end effector position ($\mathbb{R}^3$) tracking, which leaves four redundant degrees of freedom for the action input to exploit.
When an obstacle lies in the robot's workspace, the deterministic autonomous system defaults to one avoidance path, yet multiple safe paths exist. The action inputs allow an operator or high-level planner to select among these configurations with minimal input, instantiating the homotopy class selection application described in the running example (Example~\ref{ex:7dof}). We apply a brief constant scalar action on the first joint: $u = \pm 15$ for the first 3.5\,s, then $u = 0$. Flipping the sign of this scalar results in trajectories of distinct homotopy classes: the $+u$ arm (blue) passes on one side while the $-u$ arm (purple) passes on the other (Figure~\ref{fig:7dof_combined}(b)). Both trajectories converge to the goal while preserving safety. This highlights a key practical advantage of the steered \methodname{}: qualitatively different behaviors can be selected using simple, transient inputs, while the autonomous dynamics and safety constraints handle the details of the motion.

\begin{figure}[t]
    \centering
    \subfloat[]{\includegraphics[width=0.48\columnwidth]{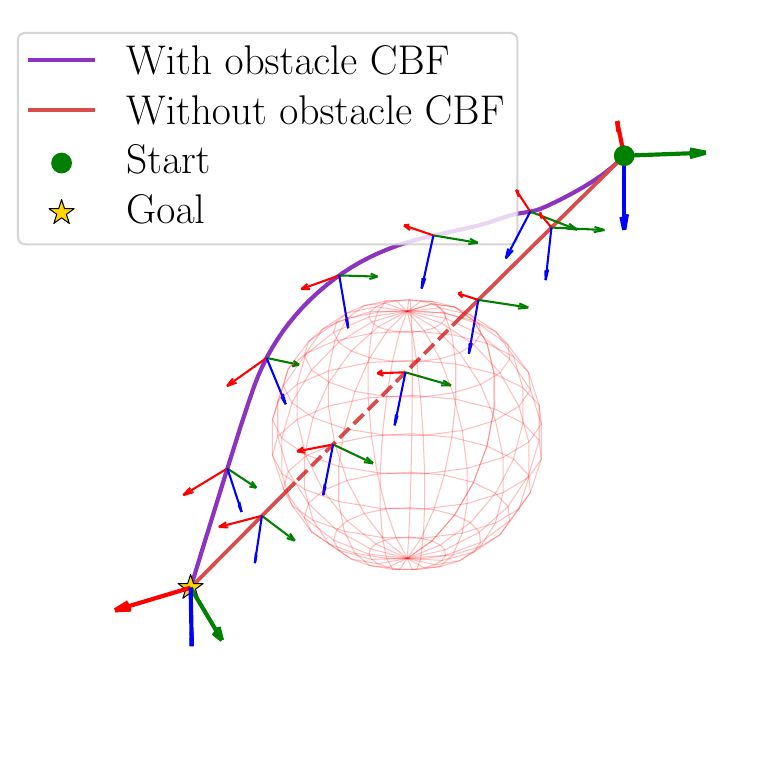}\label{fig:7dof_safety}}
    \hfill
    \subfloat[]{\includegraphics[width=0.48\columnwidth]{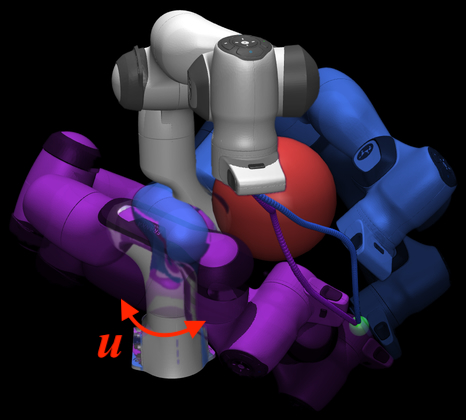}\label{fig:7dof_homotopy_3d_labeled.jpg}}
    \caption{7-DOF arm: workspace safety and steered action. (a)~Obstacle avoidance during 6-DOF pose tracking: the full system (purple) deflects around the obstacle, while the ablation without the obstacle CBF (red, dashed) passes through it (penetration in orange); coordinate frames at equal arc length intervals show orientation converging to the goal. (b)~Behavior selection with action inputs and position-only ($\mathbb{R}^3$) tracking: flipping the sign of a single joint 1 action steers the forearm to opposite sides of the obstacle (blue vs.\ purple), while the end effector converges to the same position goal (green); the initial configuration (gray) and final silhouettes are shown for reference.}
    \label{fig:7dof_combined}
\end{figure}

\section{Evaluation: Hardware Experiments}
\label{sec:hardware_experiments}
We validate \methodname{} on hardware with two dexterous manipulation tasks: autonomous grasping of diverse household objects (Section~\ref{sec:dexterous_grasping}) and in-hand reorientation via finger gaiting (Section~\ref{sec:in_hand_reorientation}).

\subsection{Robot Setup}
\label{sec:hardware_setup}
All hardware experiments are performed with a Franka Emika Panda 7-DOF arm and a Wonik Allegro 16-DOF dexterous hand~\cite{allegro_linux}, for a combined 23-DOF system. The arm is controlled at 20\,Hz via the Deoxys framework~\cite{zhu2023viola}, using the joint impedance controller when PBDS is active, and the joint position controller for the pre-grasp approach. The hand is commanded over ZMQ using a mix of joint-level PD and Cartesian fingertip impedance modes. Runtime perception uses an Intel RealSense D435 camera with Segment Anything~\cite{kirillov2023segment} for segmentation and FoundationPose~\cite{wen2024foundationpose} for 6-DOF pose tracking at 10\,Hz, which updates an object pose in a MuJoCo~\cite{todorov2012mujoco} simulation that the PBDS controller runs against. Object meshes are obtained by scanning with KIRI Engine~\cite{kiri_engine} on a LiDAR-equipped iPhone. Full control, perception, and modeling details are given in Appendix~\ref{app:hw_setup}.

\begin{figure}[!t]
    \centering
    \includegraphics[width=\columnwidth]{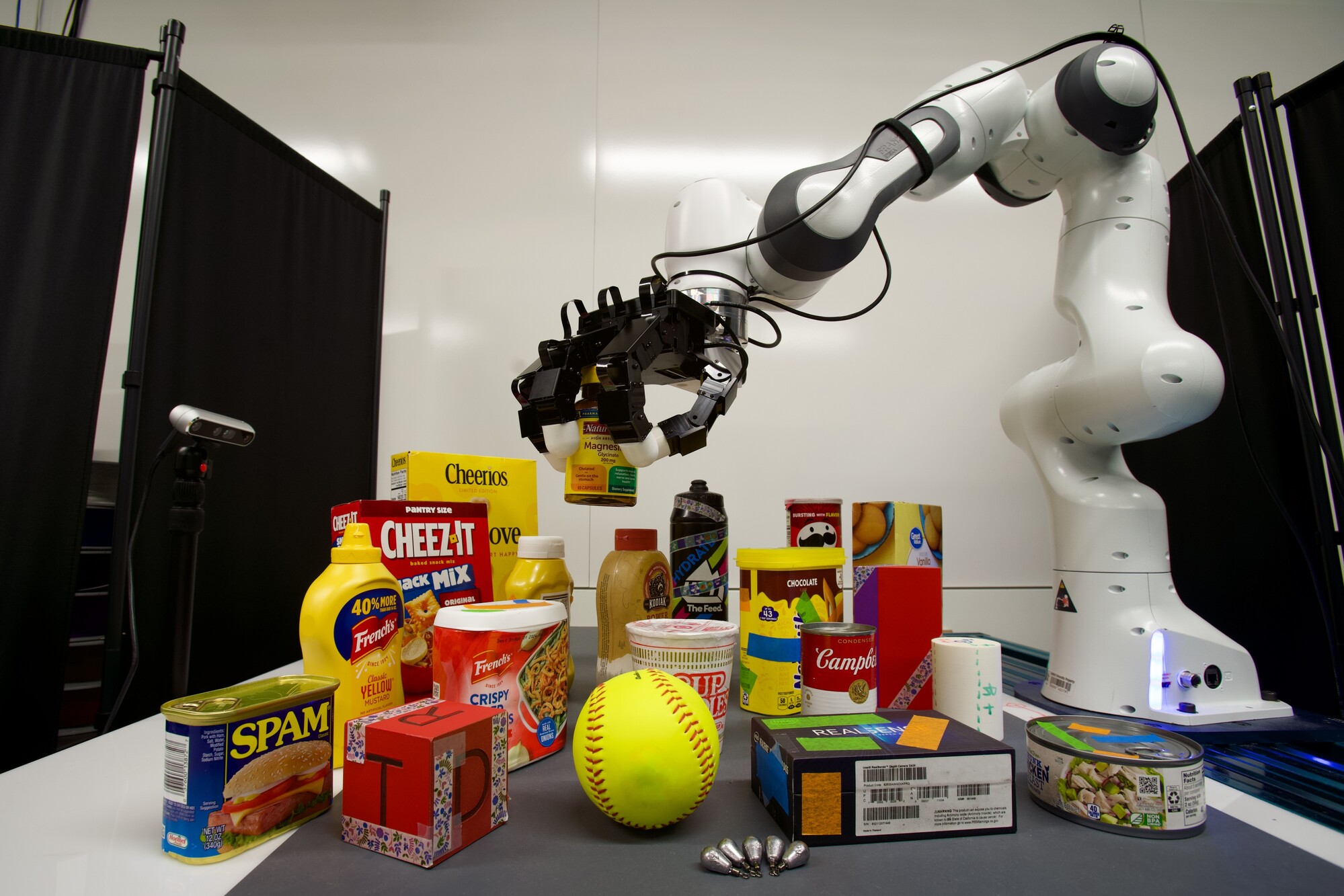}
    \caption{Hardware setup for the dexterous manipulation experiments: a 7-DOF arm equipped with a 16-DOF dexterous hand, a camera for runtime perception, and household objects used for grasping and in-hand reorientation.}
    \label{fig:hardware_setup_img}
\end{figure}

\begin{figure}[!t]
    \centering
    \subfloat{\includegraphics[width=0.23\columnwidth]{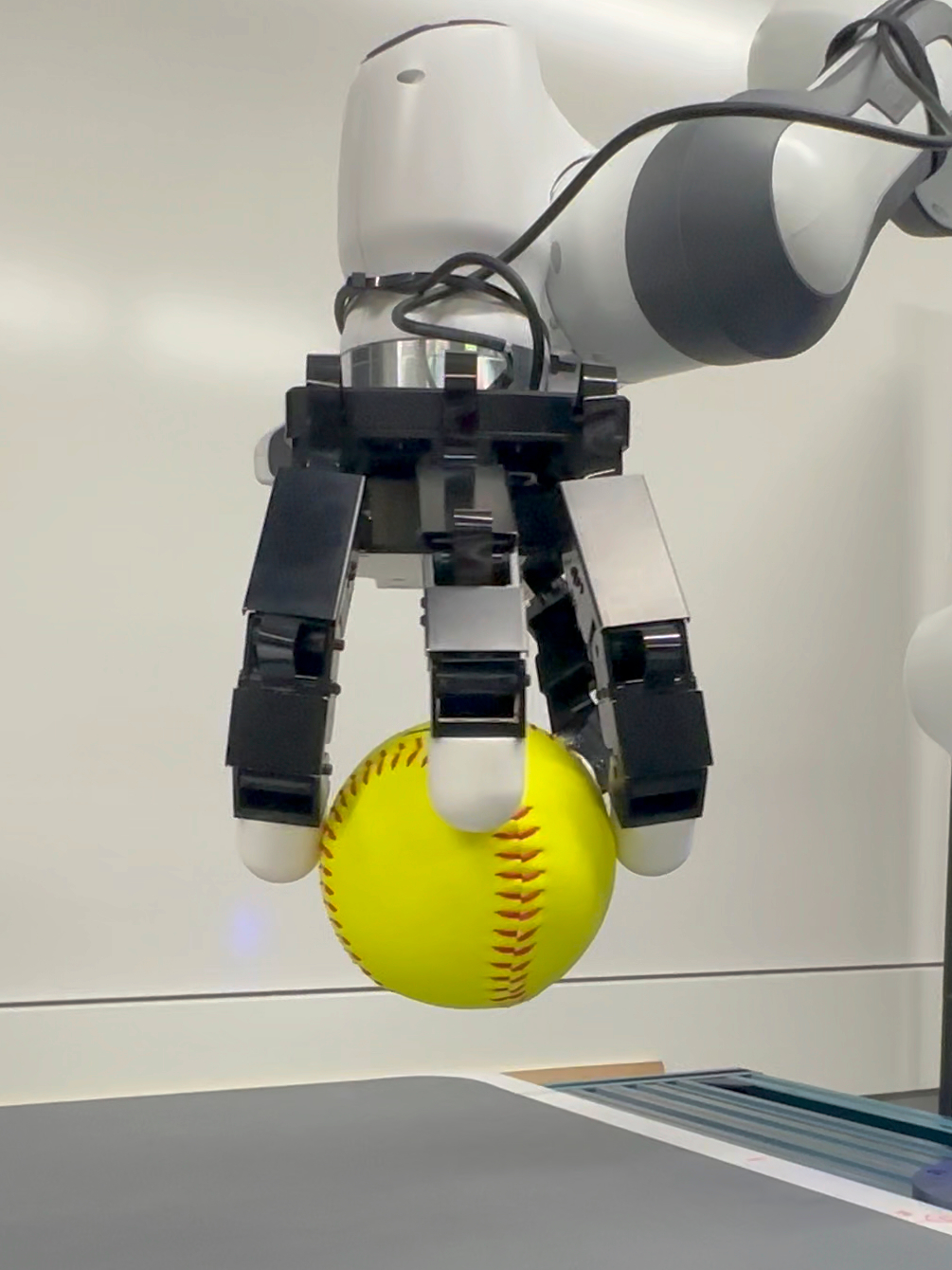}\label{fig:hw_grasp_ball}}
    \hfill
    \subfloat{\includegraphics[width=0.23\columnwidth]{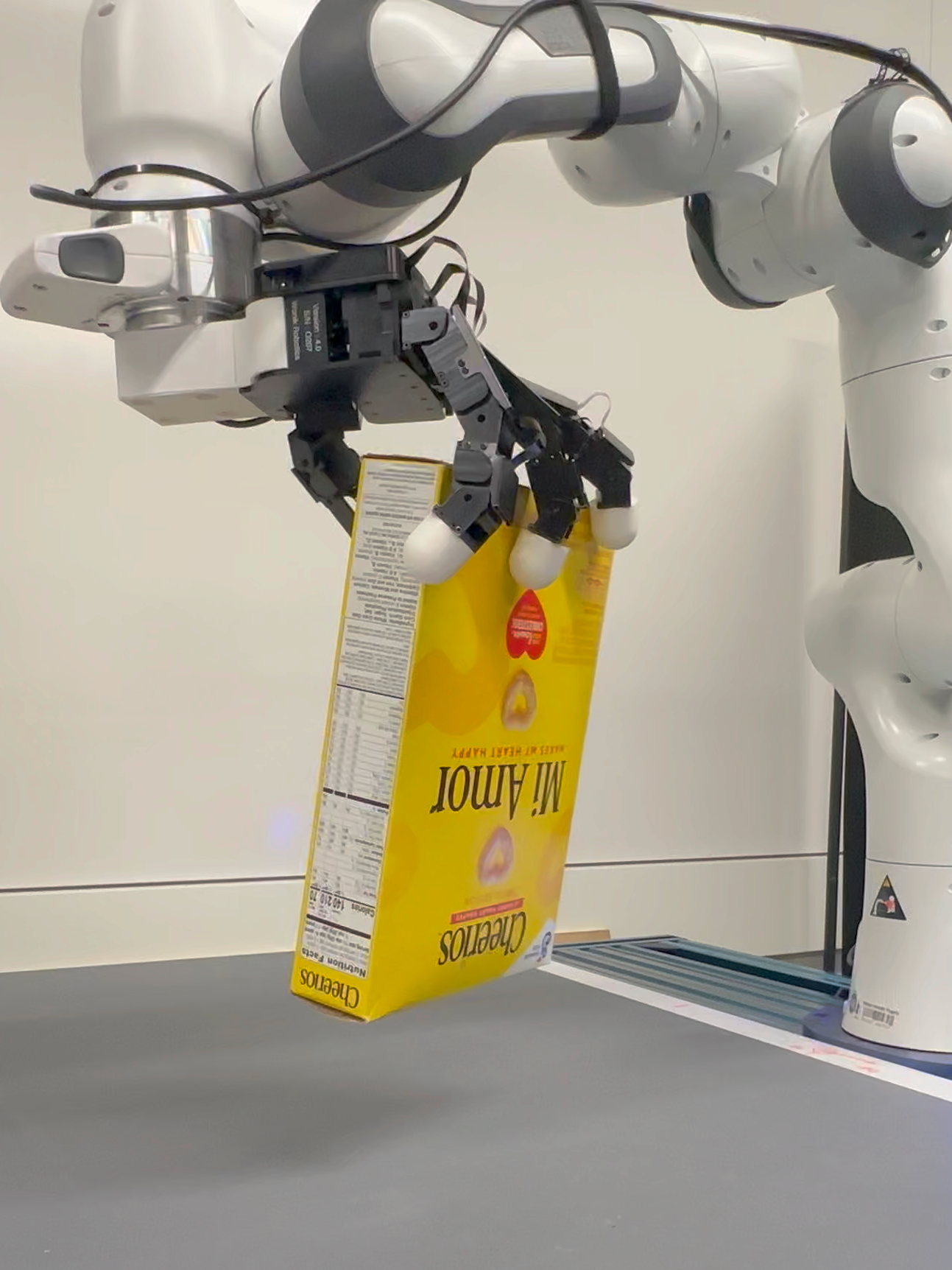}\label{fig:hw_grasp_cereal}}
    \hfill
    \subfloat{\includegraphics[width=0.23\columnwidth]{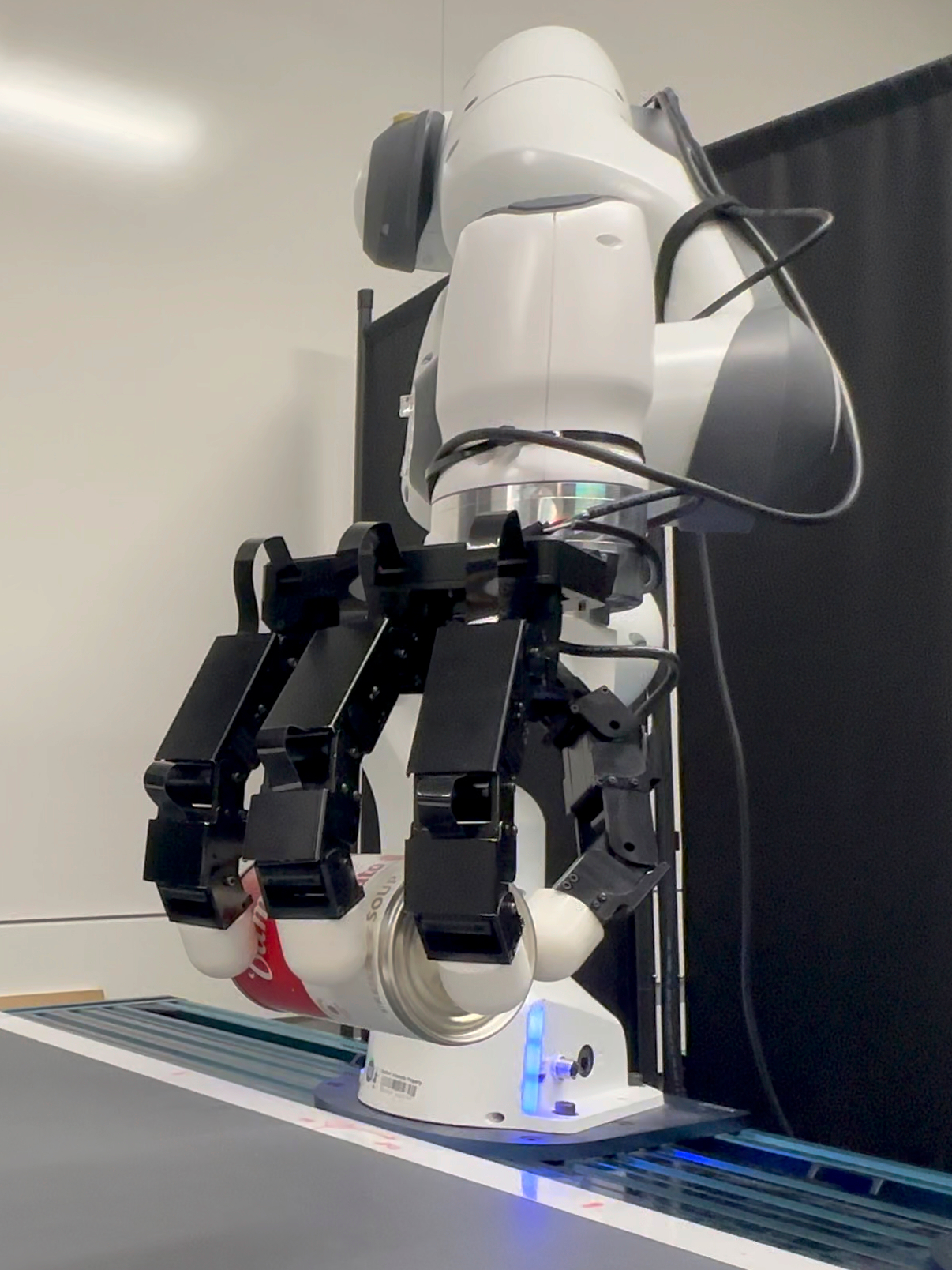}\label{fig:hw_grasp_soup}}
    \hfill
    \subfloat{\includegraphics[width=0.23\columnwidth]{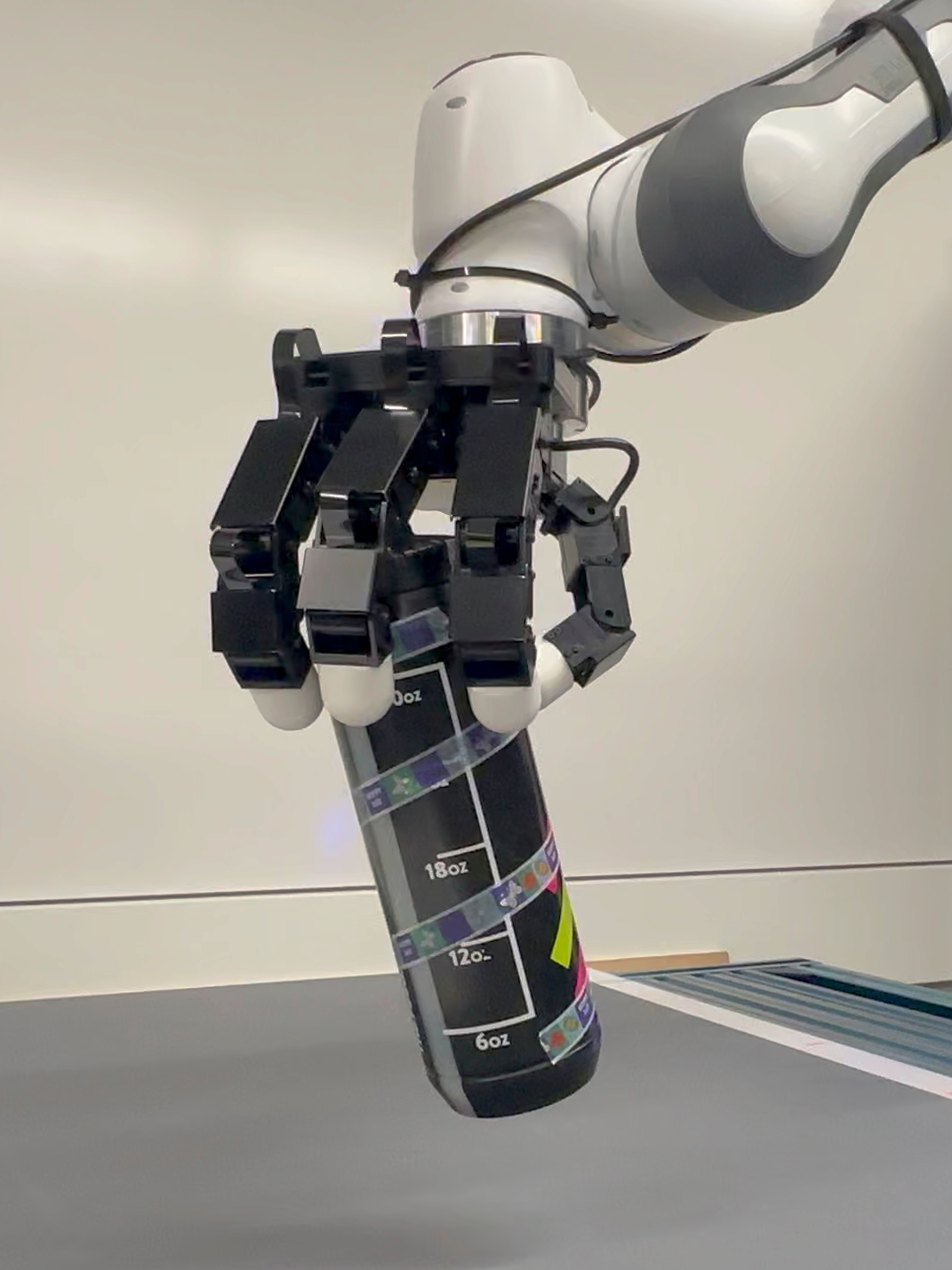}\label{fig:hw_grasp_bottle}}
    \caption{4-finger grasp examples across representative object categories. All four fingers form a force closure grasp under \methodname{}, with the per-finger fingertip-to-object distance and centroid centering action tasks driving the fingertips onto the object surface.}
    \label{fig:hw_grasping_examples}
\end{figure}

\subsection{Manifold Setup and Task Specification}
\label{sec:dex_hw_manifolds}
The configuration manifold $M = \prod_{i=1}^{m}(\sigma^i_-,\, \sigma^i_+)$ is an open subset of $\mathbb{R}^m$ equipped with the flat product metric. The combined arm-hand system used for dexterous grasping has $m = 23$; for in-hand reorientation (Section~\ref{sec:in_hand_reorientation}) the arm is controlled separately and $m = 16$.

The grasping and finger-gaiting behaviors share the same overall structure: a joint space PBDS damping task, a family of pullback ECBF safety tasks, and task manifold action tasks driven by a higher-level planner. The safety tasks comprise (i)~joint limit ECBFs; (ii)~force closure ECBFs based on the $l^*$ metric~\cite{li2023frogger} for one four-finger-grasp variant and four per-finger-excluded variants; (iii)~fingertip lift-distance ECBFs that keep each non-grasping fingertip away from the object surface; (iv)~pairwise finger-finger distance ECBFs that prevent fingertip collisions; (v)~per-fingertip table avoidance ECBFs; and (vi)~link-object distance ECBFs that keep non-fingertip links clear of the object (per-link CBFs for grasping; per-finger distal phalanx ECBFs for in-hand reorientation). The action tasks expose a per-finger controller on the 1D fingertip-to-object distance. For grasping, additional controllers on the 3D palm position and on the 3D average fingertip position. For in-hand reorientation, an additional action task drives the gaited fingertip's 3D position toward a planner-supplied target in the frame of the contact plane spanned by the three in-contact fingertips. In total, $84$ ECBF tasks are instantiated for the grasping policy and $61$ for the in-hand reorientation policy, a subset of which is active at each control step.

A finite-state machine modulates task weights to coordinate the grasp and gaiting phases. It selects the active force closure variant as fingers enter or leave contact, and toggles the per-finger lift-distance CBFs depending on whether each finger is stationary or in transit. The full enumeration of task maps, safety functions, and ECBF gains is given in Appendix~\ref{app:hw_task_spec}.

\subsection{Dexterous Grasping}
\label{sec:dexterous_grasping}
We apply \methodname{} to autonomous grasping of diverse household objects.

\subsubsection{Scene Setup and Test Protocol}
We select 20 household objects (Fig.~\ref{fig:hardware_setup_img}) with mass ranging from $64$ to $613$\,g and longest bounding box dimension ranging from roughly $6$ to $29$\,cm. At the start of each trial, FoundationPose reports the object's 6-DOF pose, and a wrist pose sampler proposes top-down wrist pose candidates that are filtered for reachability, clearance, and aperture fit. The top-ranked candidate is executed in two stages: first, a Deoxys-controlled approach to the wrist pose (via a $15$\,cm waypoint above the pose followed by a direct descent); second, \methodname{} takes over the full $23$-DOF arm-hand system and runs the PBDS QP at each control step. During the \methodname{} phase, two action tasks are composed: per-finger 1D fingertip-to-object distance controllers that drive each fingertip onto the object surface, and a 3D average-fingertip centering controller that aligns the four-fingertip centroid with the object's geometric center, for a total of $4 \times 1 + 3 = 7$ action dimensions. The force closure ECBF (Section~\ref{sec:dex_hw_manifolds}) is activated as the fingertips approach the object, so that the final squeeze converges to a certified force-closed grasp. A trial is recorded as a \emph{success} if the object is successfully lifted with all four fingers in contact, a \emph{partial success} if it is stably lifted with one finger not in contact, and a \emph{failure} otherwise. The candidate filtering pipeline, action gain schedule, and full execution protocol are given in Appendix~\ref{app:grasping_protocol}.

To showcase the flexibility of our framework, we additionally evaluate grasping with one of the four fingers excluded. The active force closure ECBF is switched to the per-finger-excluded variant, so that $l^*$ is computed over the three in-contact fingers, while the excluded finger is held clear of the object by retargeting its fingertip-to-object distance action to a fixed clearance and enabling its fingertip lift-distance ECBF. No other policy modification is required.

\subsubsection{Results}
Figure~\ref{fig:success_weight_d3} summarizes the 4-finger grasping results (per-object trial counts in Table~\ref{tab:grasping_results} of Appendix~\ref{app:grasping_protocol}). Figure~\ref{fig:hw_grasping_examples} shows representative 4-finger executions, and Figure~\ref{fig:hw_3finger_examples} shows the 3-finger variants for each excluded finger.
\begin{figure}[ht]
    \centering
    \setlength{\tabcolsep}{0pt}%
    \begin{tabular}{@{}c@{\hskip 2pt}c@{\hskip 2pt}c@{\hskip 2pt}c@{}}
        \includegraphics[width=0.23\columnwidth]{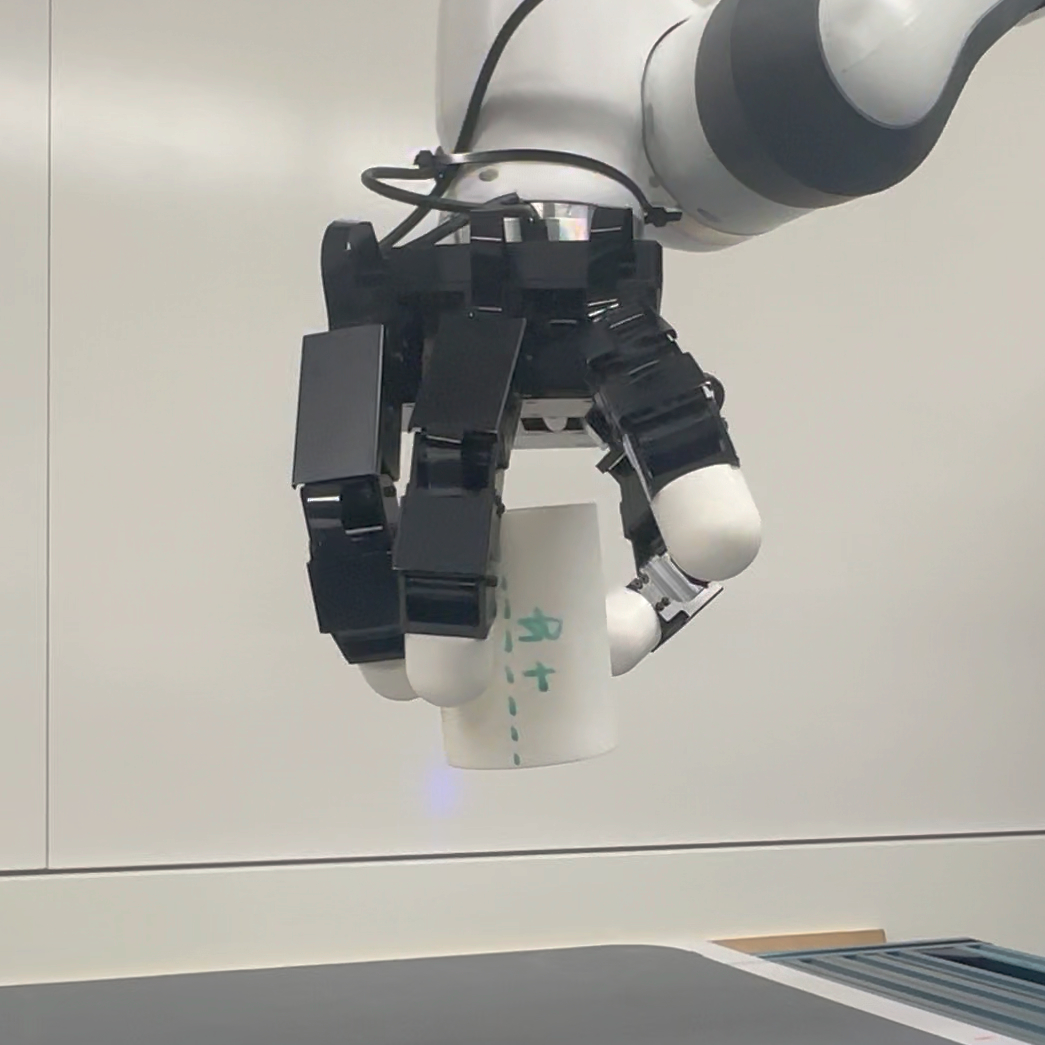}  &
        \includegraphics[width=0.23\columnwidth]{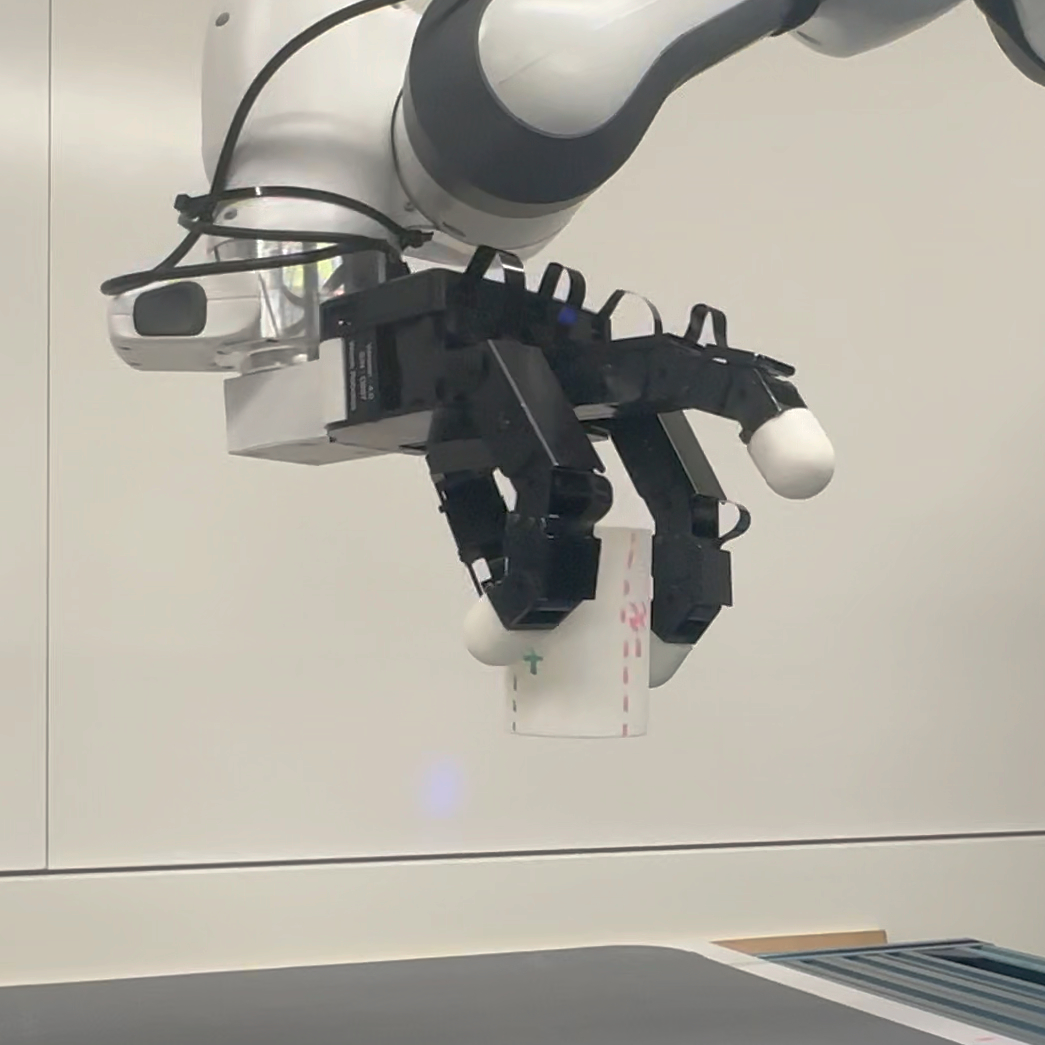} &
        \includegraphics[width=0.23\columnwidth]{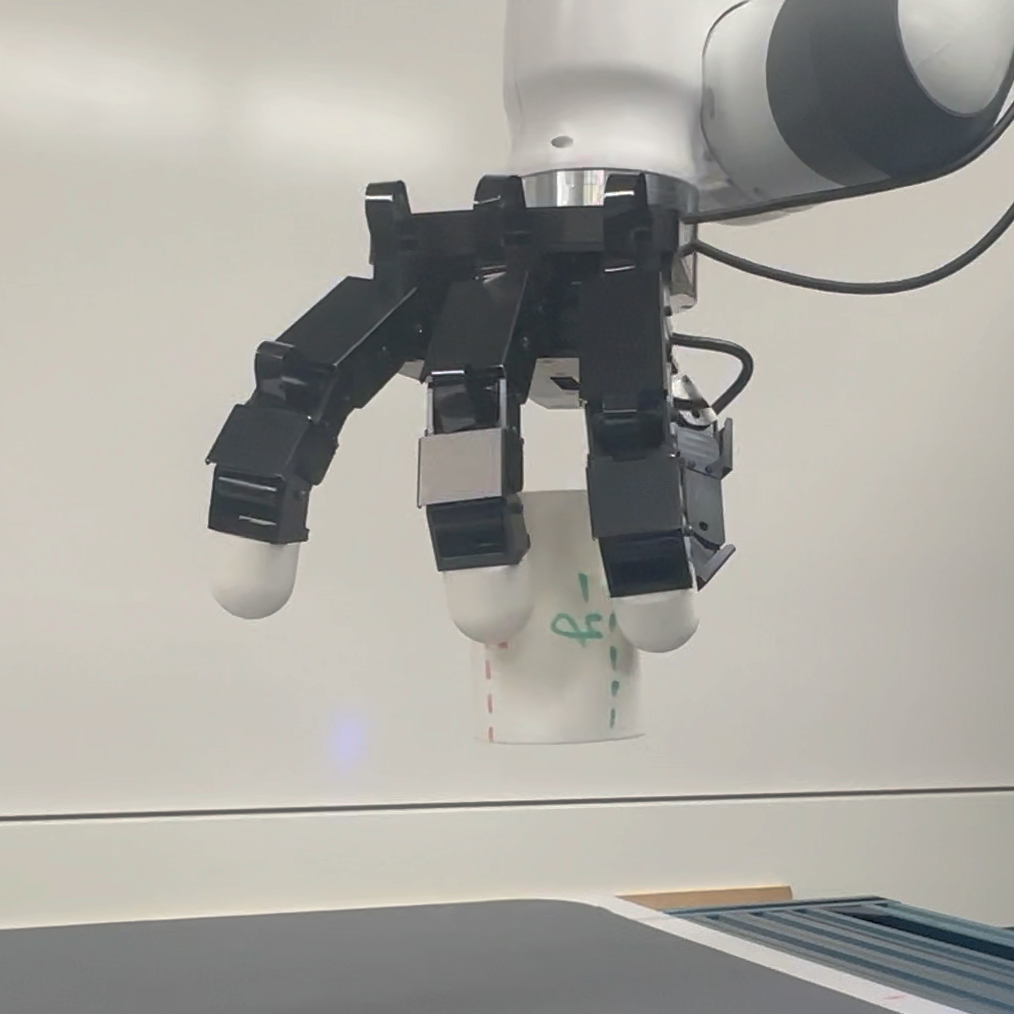}   &
        \includegraphics[width=0.23\columnwidth]{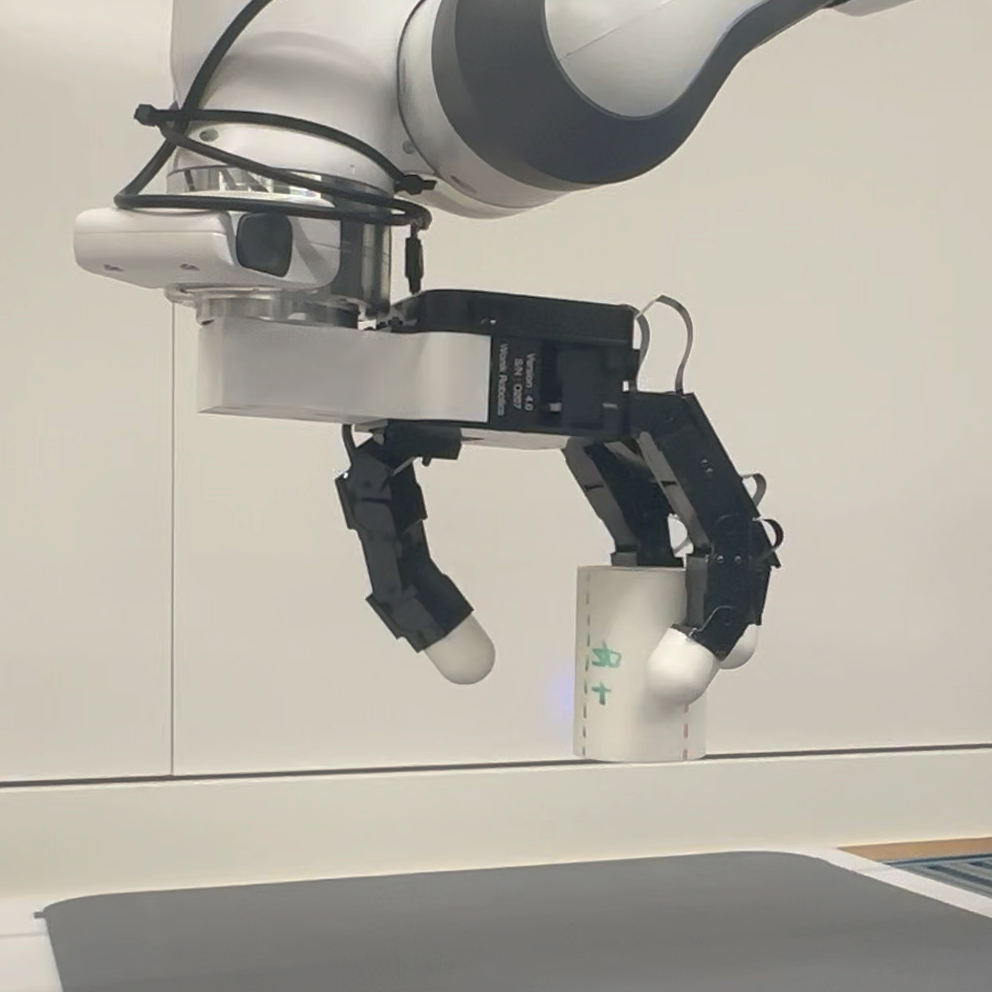}                                                                       \\[2pt]
        \includegraphics[width=0.23\columnwidth]{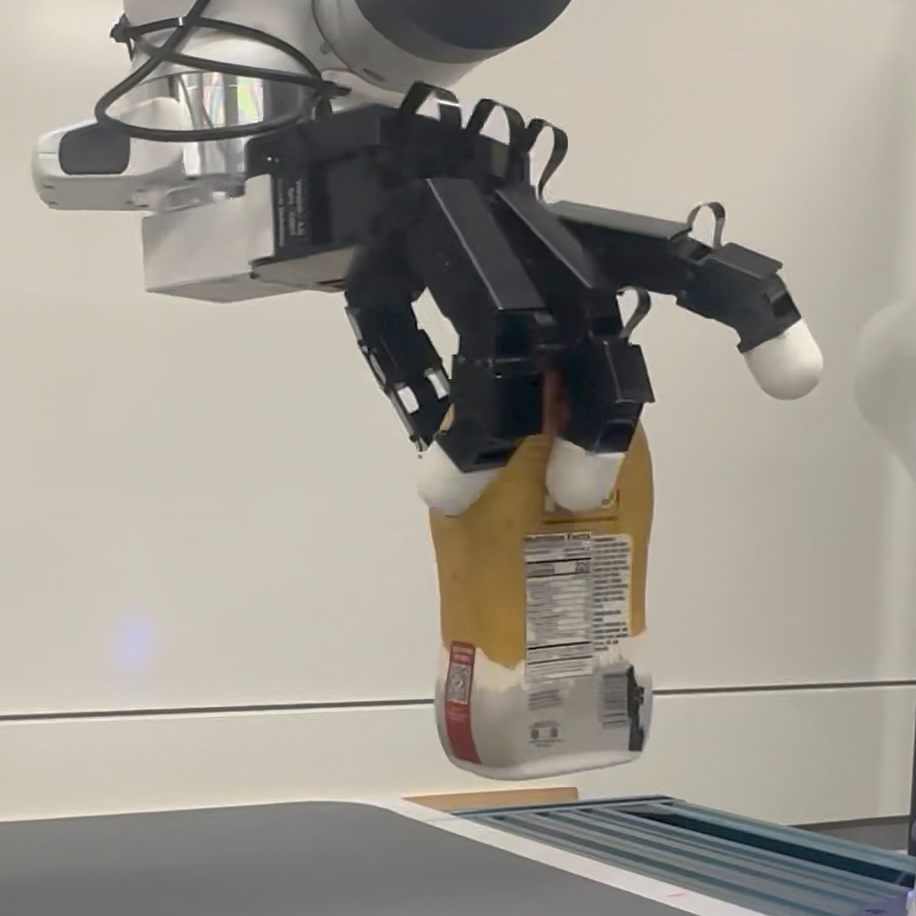}  &
        \includegraphics[width=0.23\columnwidth]{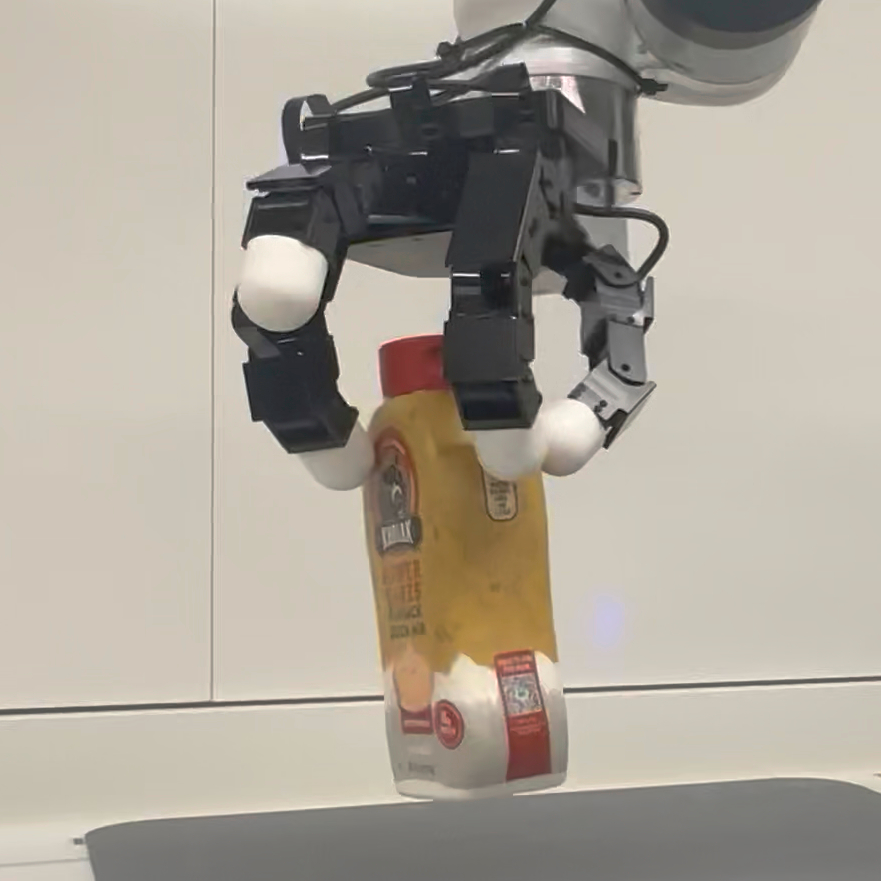} &
        \includegraphics[width=0.23\columnwidth]{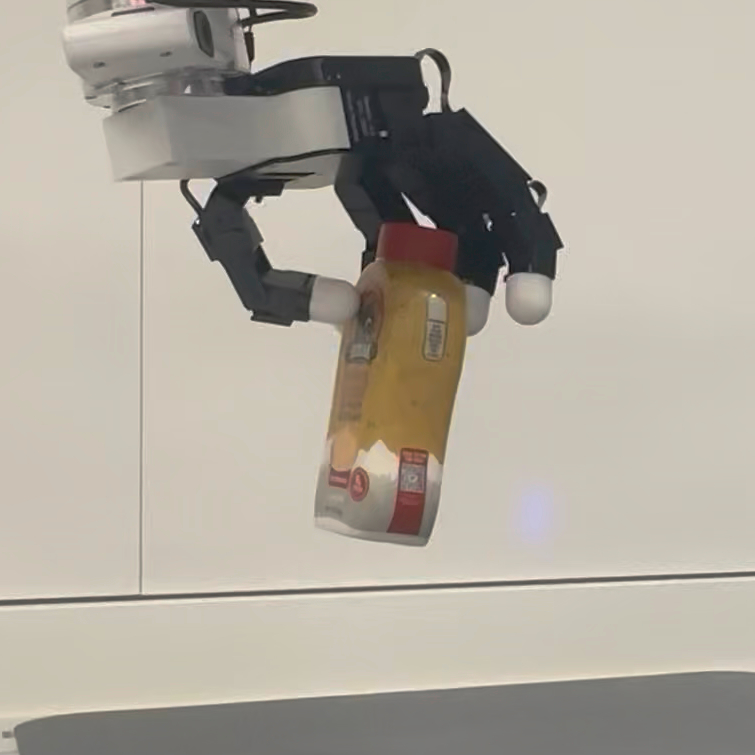}   &
        \includegraphics[width=0.23\columnwidth]{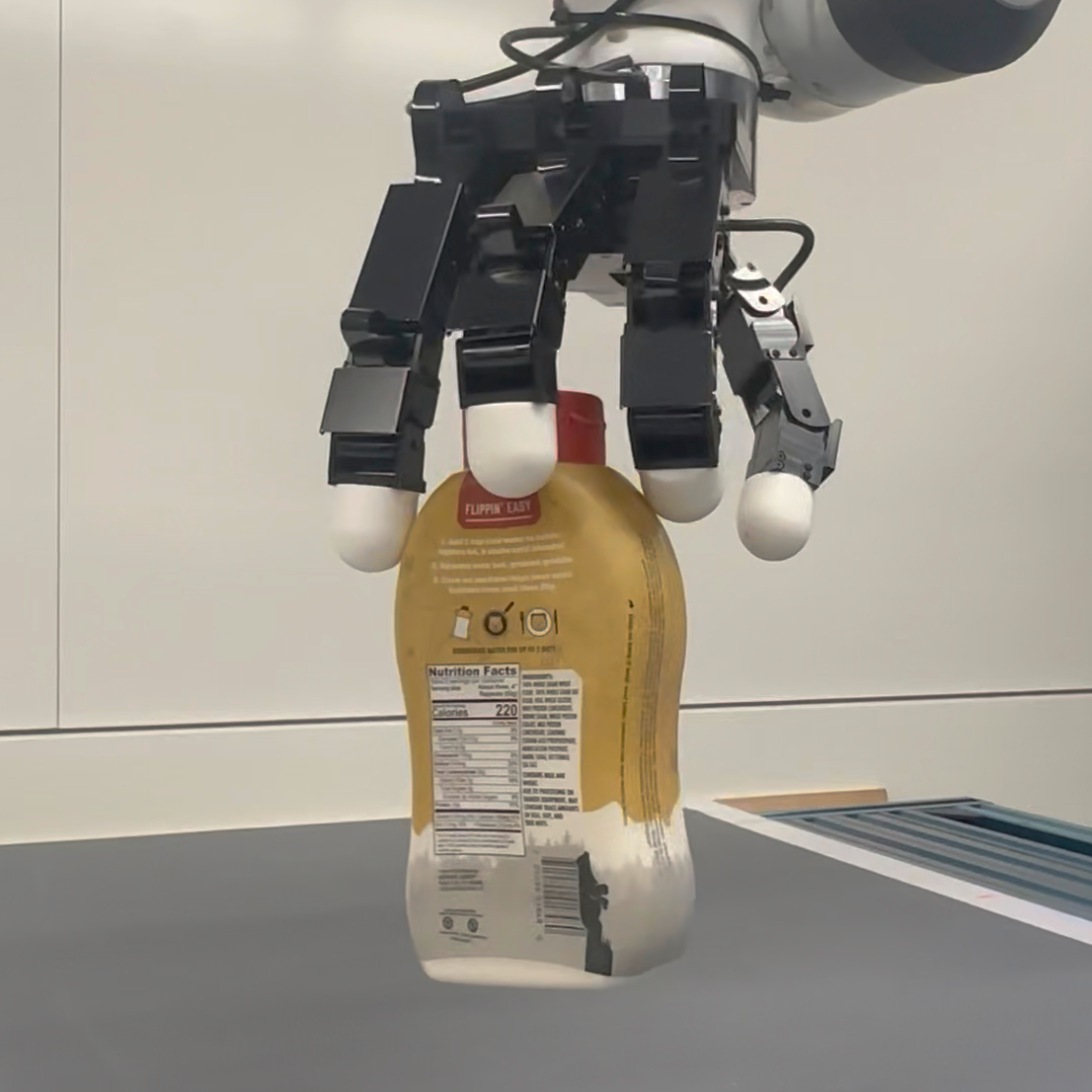}                                                                       \\[2pt]
        \includegraphics[width=0.23\columnwidth]{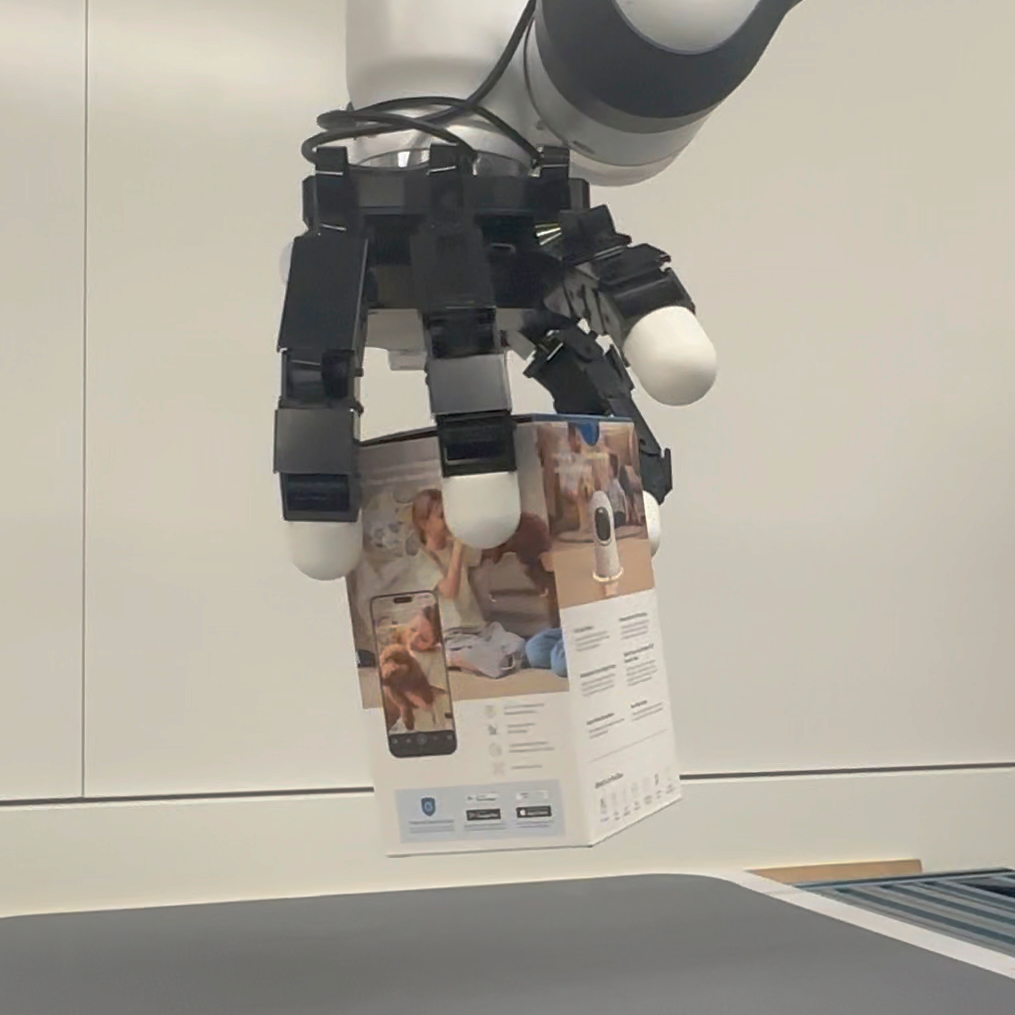}    &
        \includegraphics[width=0.23\columnwidth]{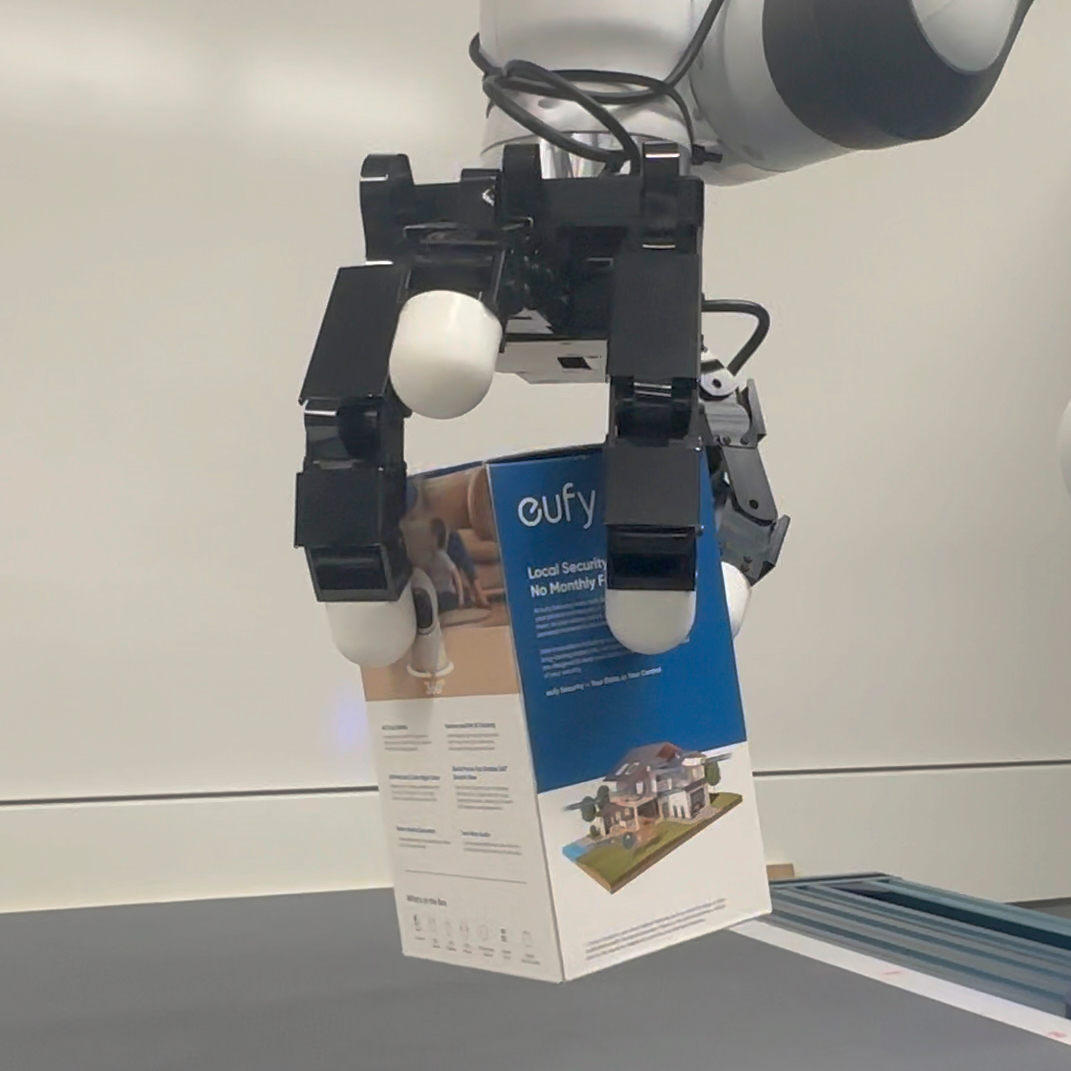}   &
        \includegraphics[width=0.23\columnwidth]{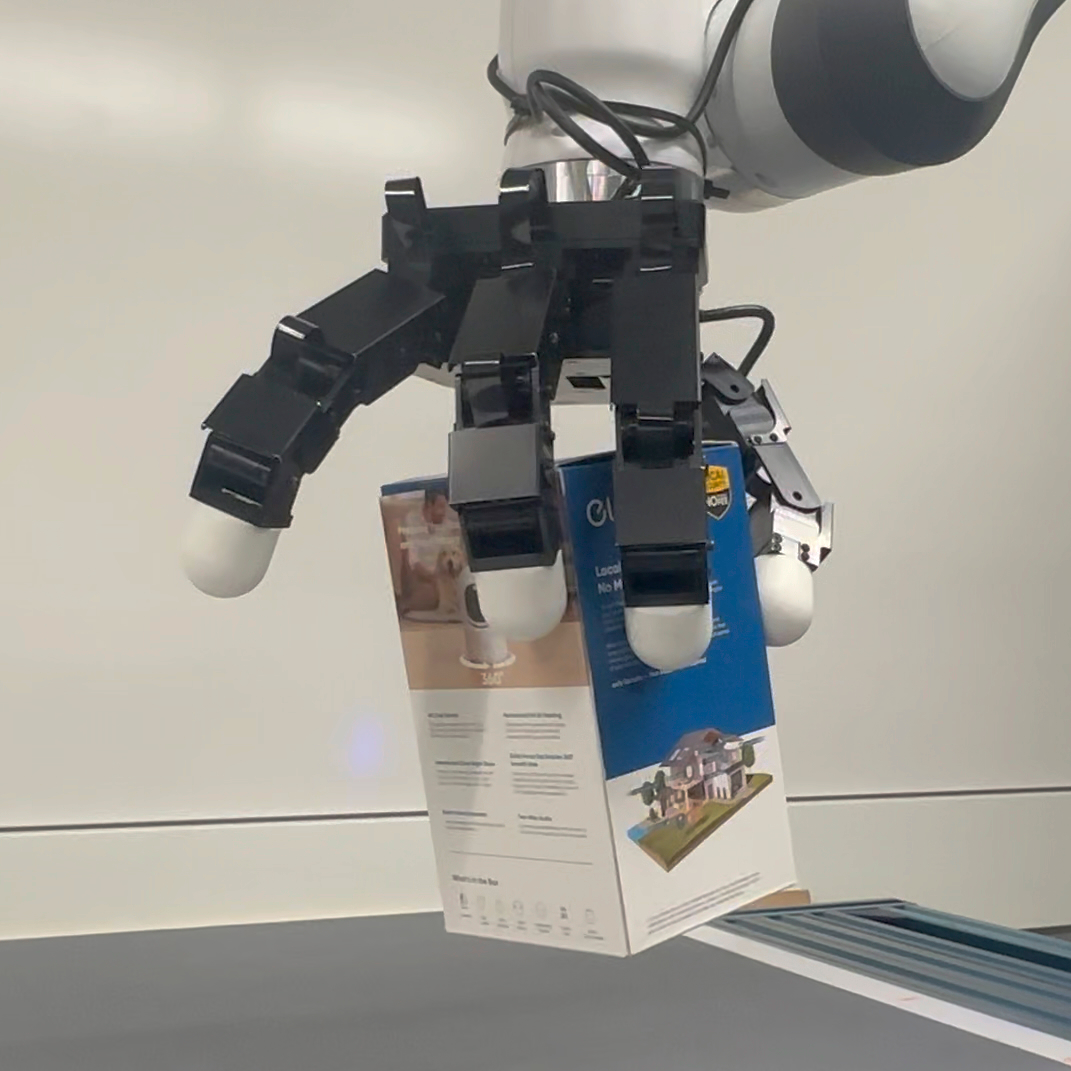}     &
        \includegraphics[width=0.23\columnwidth]{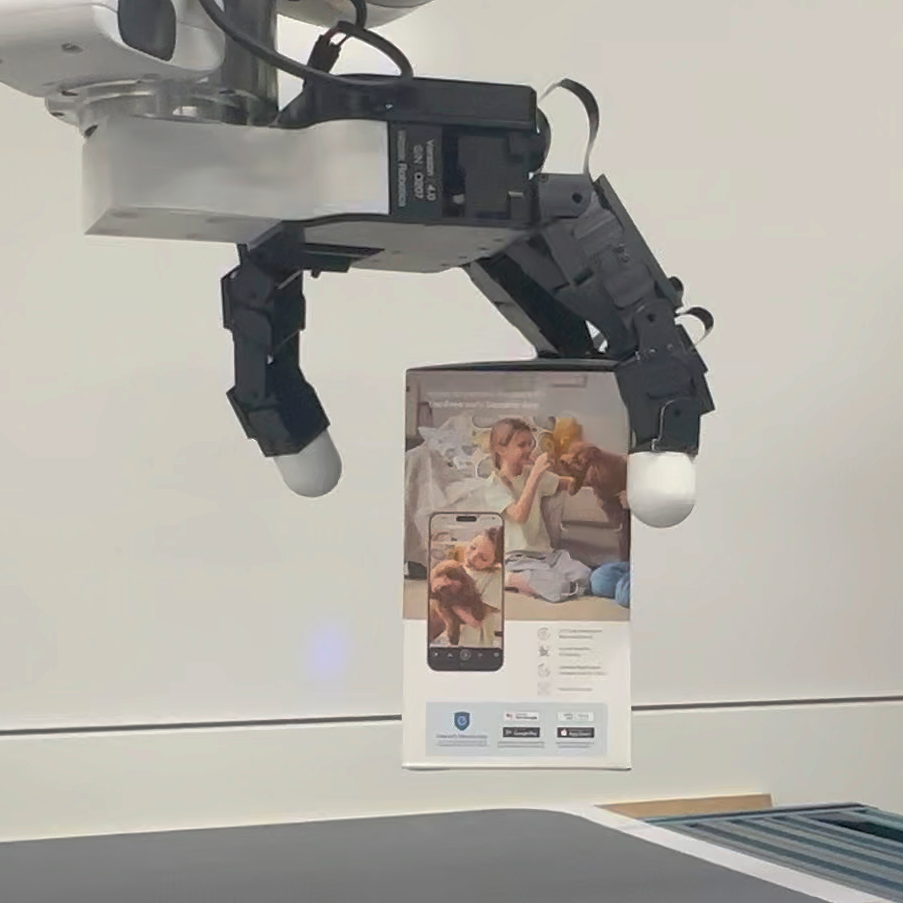}                                                                         \\
        \scriptsize No index                                                                & \scriptsize No middle & \scriptsize No ring & \scriptsize No thumb
    \end{tabular}
    \caption{3-finger grasping variants for the three test objects. Each image is labeled with the excluded finger, which is held clear of the object while the remaining three fingers form a force closure grasp under \methodname{}. Across the three objects and four exclusions, tested at three table locations each ($36$ trials), \methodname{} achieves $34/36$ ($94.4\%$); both failures occur on the wide object (bottom row) with the thumb excluded, where its width forces the remaining index--ring pinch to span near the limits of the hand's reachable workspace. Per-object counts are listed in Appendix Table~\ref{tab:grasping_3finger}.}
    \label{fig:hw_3finger_examples}
\end{figure}

\begin{figure}[ht]
    \centering
    \includegraphics[width=\columnwidth]{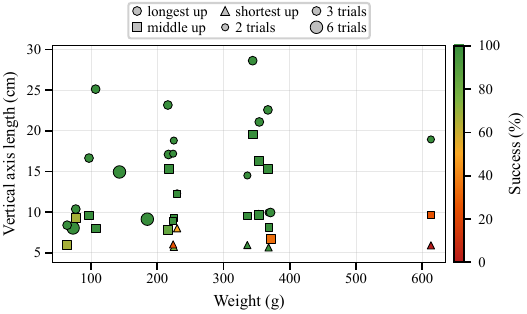}
    \caption{Per-(object, pose) grasp outcomes plotted against object weight and the vertical bounding-box length at the tested pose. Marker color encodes the per-group success score (success $= 1$, partial $= 0.5$, failure $= 0$, averaged over the $2$--$6$ trials in each group, shown on a $0$--$100\%$ scale); shape encodes which bounding-box axis is oriented upward (longest, middle, or shortest); and size is proportional to trial count.}
    \label{fig:success_weight_d3}
\end{figure}

\methodname{} achieves an overall success rate of $111/120$ ($92.5\%$) on the $20$ household objects, with $15$ of $20$ attaining full $6/6$ success. Figure~\ref{fig:success_weight_d3} plots each (object, pose) trial group against object weight and the vertical bounding-box length at that pose. Partial successes and failures concentrate among objects with small vertical extents (roughly $\le 10$\,cm) and high weight. Low vertical extent is less forgiving of pose estimation errors and leaves less table clearance, while high weight requires grasp forces that approach the limits of the low-level impedance controller and the hand's hardware capabilities.

The 3-finger ablation (Figure~\ref{fig:hw_3finger_examples}) succeeds in $34/36$ trials ($94.4\%$): thanks to the autonomous PBDS behavior, a high-level decision policy can reliably choose which finger to exclude despite the complexity of dexterous precision grasping.

\subsection{Palm-Down In-Hand Reorientation (IHR)}
\label{sec:in_hand_reorientation}

We apply \methodname{} to palm-down IHR, where the goal is to rotate a grasped object about the palm normal axis with the palm normal facing down. This is significantly more challenging than the more common ``IHR with palm facing up'' in literature (e.g.~\cite{andrychowicz2020learning,handa2023dextreme,qi2023general,yin2023rotating}); with the palm facing down, the object must be securely grasped throughout the process. To tackle the combinatorial complexity of finger gaiting sequences, we leverage offline planning in simulation to discover feasible reorientation trajectories, which are then deployed on hardware.

\subsubsection{Simulation Pre-Planning}
The main challenge in IHR is choosing which finger to move and where to move it. We address this with a depth-first tree search over finger-object contact states, where each node has all four fingers in contact and each edge corresponds to relocating one finger. The search tree is rooted at an initial grasp obtained by reusing the top-down grasping pipeline of Section~\ref{sec:dexterous_grasping} on a $6$\,cm diameter bottle. A priority queue at each depth orders nodes by the cumulative $z$-axis yaw rotation achieved along the path from the root, while a balance constraint caps the disparity in per-finger move counts so that no single finger is overused.

For each movable finger at a node, $12$ candidate extensions are generated by a predetermined grid of steps along the object surface. Fingers are not allowed to move consecutively and the max gaiting step count difference across fingers is 1. This leaves at most $3$ movable fingers and a maximum branching factor of $36$. To prevent the search from getting stuck with a never-movable finger, a cyclic sequence of ``next movable'' fingers is assigned so that at each tree depth, the ``next movable'' finger must be movable after the candidate extension.  Each candidate is forward-simulated under \methodname{} and a four-phase (\texttt{LIFTING}, \texttt{TRAVERSING}, \texttt{DROPPING}, \texttt{ADJUSTING}) primitive, and an accepted candidate becomes a tree edge. Throughout the primitive, the three-finger grasp is maintained by the corresponding force closure ECBF and pairwise fingertip spacing ECBFs (the object itself may still move). In \texttt{TRAVERSING} and \texttt{DROPPING}, the force closure ECBF excluding the next movable finger is additionally activated. A candidate is accepted when, on reaching \texttt{ADJUSTING}, the moving finger is aligned with its target contact, the hand is at rest, all four fingers are back in contact, the force closure ECBF excluding the next movable finger is non-negative, and the object tilt from the rotation axis is below $10^\circ$. Loss of contact at any in-contact finger during the rollout aborts execution and rejects the candidate. A simplified illustration is shown in Figure~\ref{fig:headline}. More details are given in Appendix~\ref{app:ihr_planner}. Figure~\ref{fig:inhand_trees} shows the resulting search trees.

\setcounter{figure}{8}
\begin{figure*}[!b]
    \centering
    \subfloat[Clockwise rotation, static arm]{%
        \includegraphics[width=0.14\textwidth]{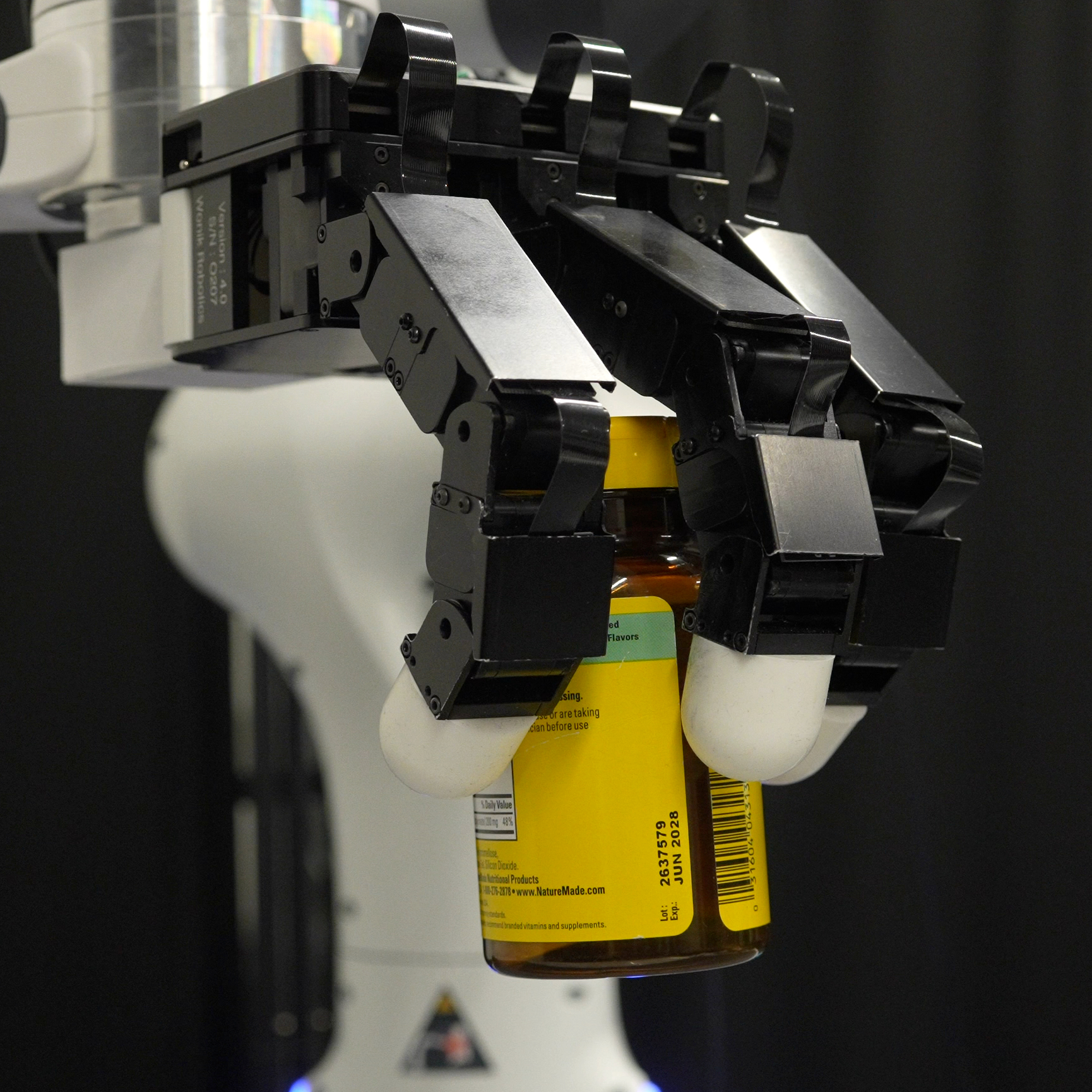}\hspace{2pt}%
        \includegraphics[width=0.14\textwidth]{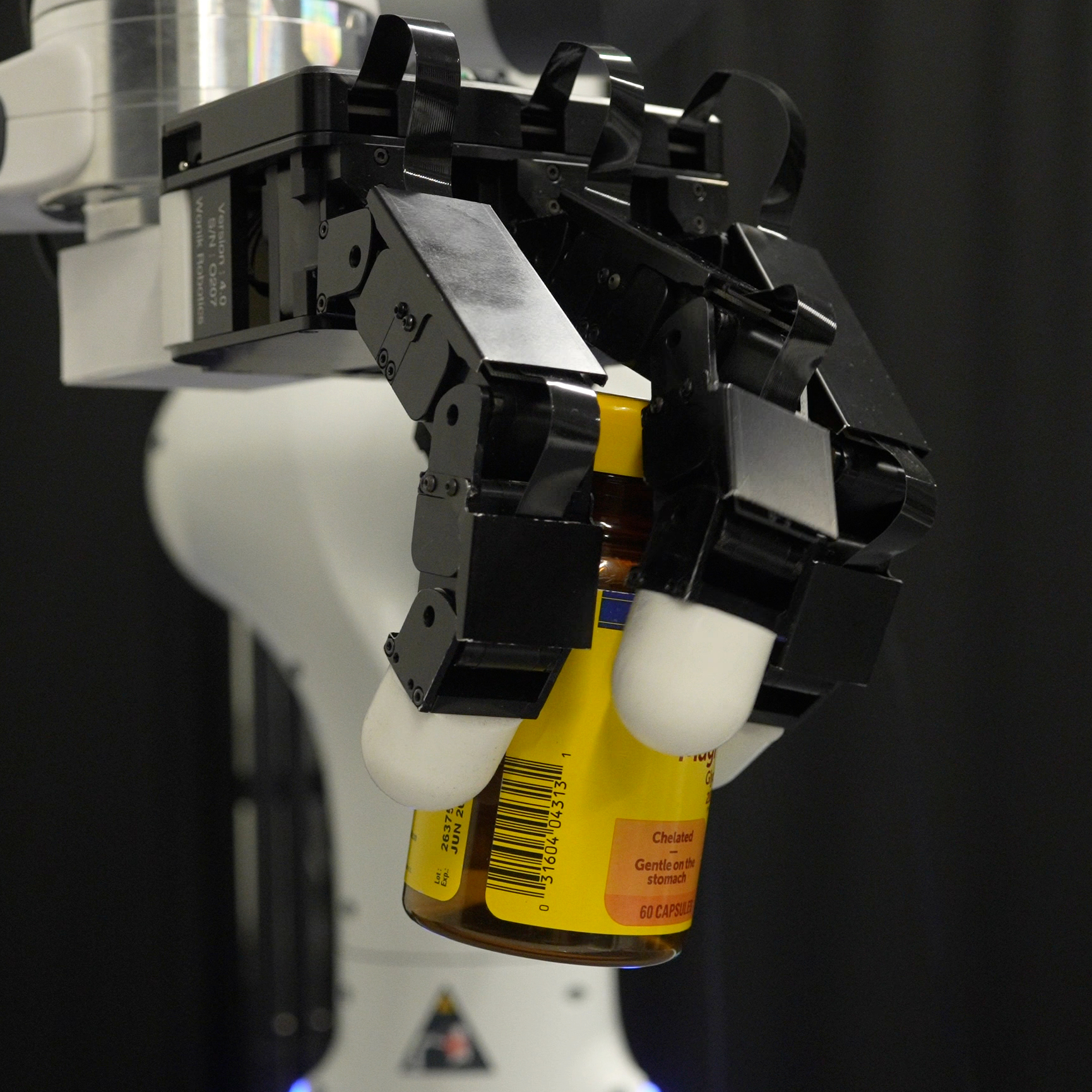}\hspace{2pt}%
        \includegraphics[width=0.14\textwidth]{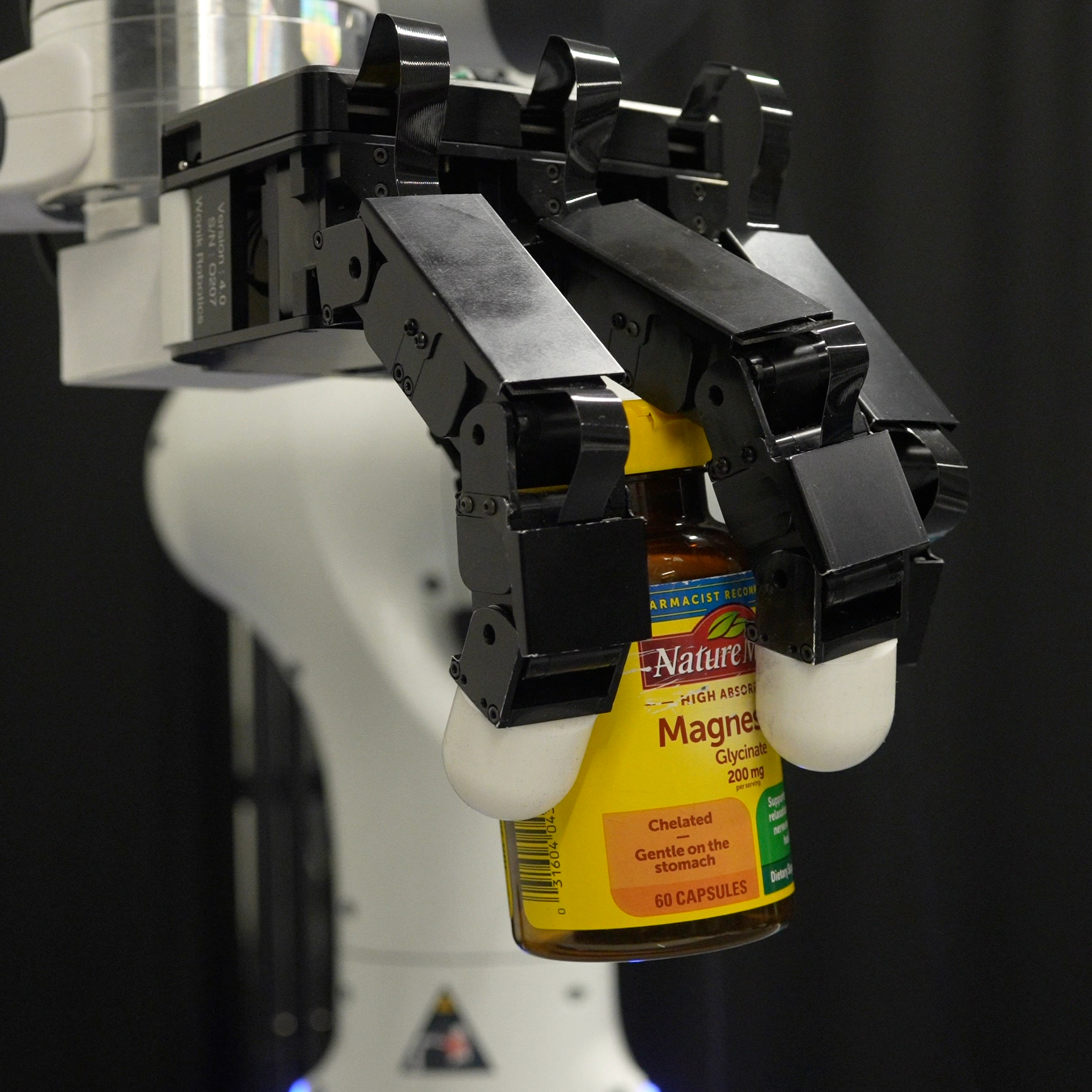}\hspace{2pt}%
        \includegraphics[width=0.14\textwidth]{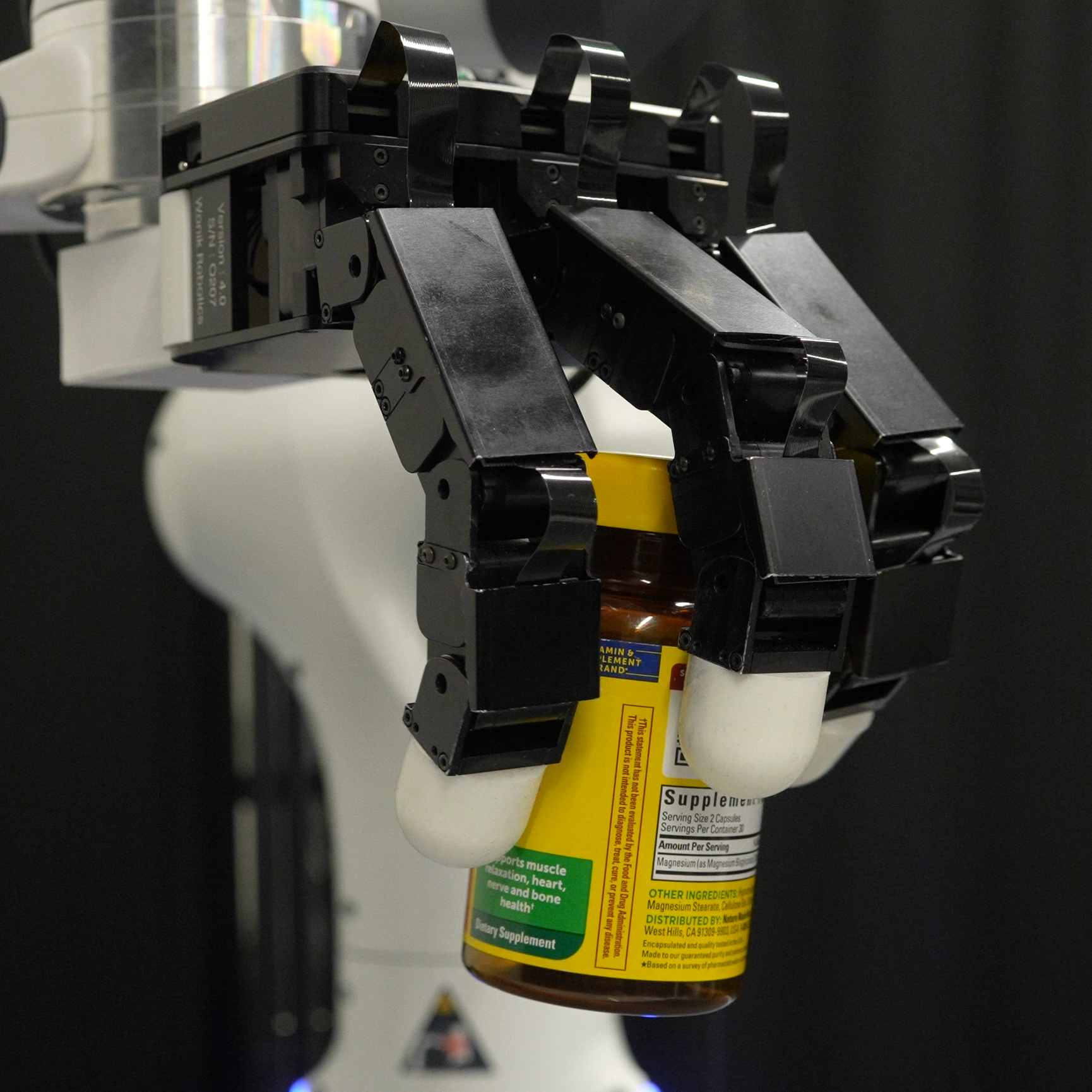}\hspace{2pt}%
        \includegraphics[width=0.14\textwidth]{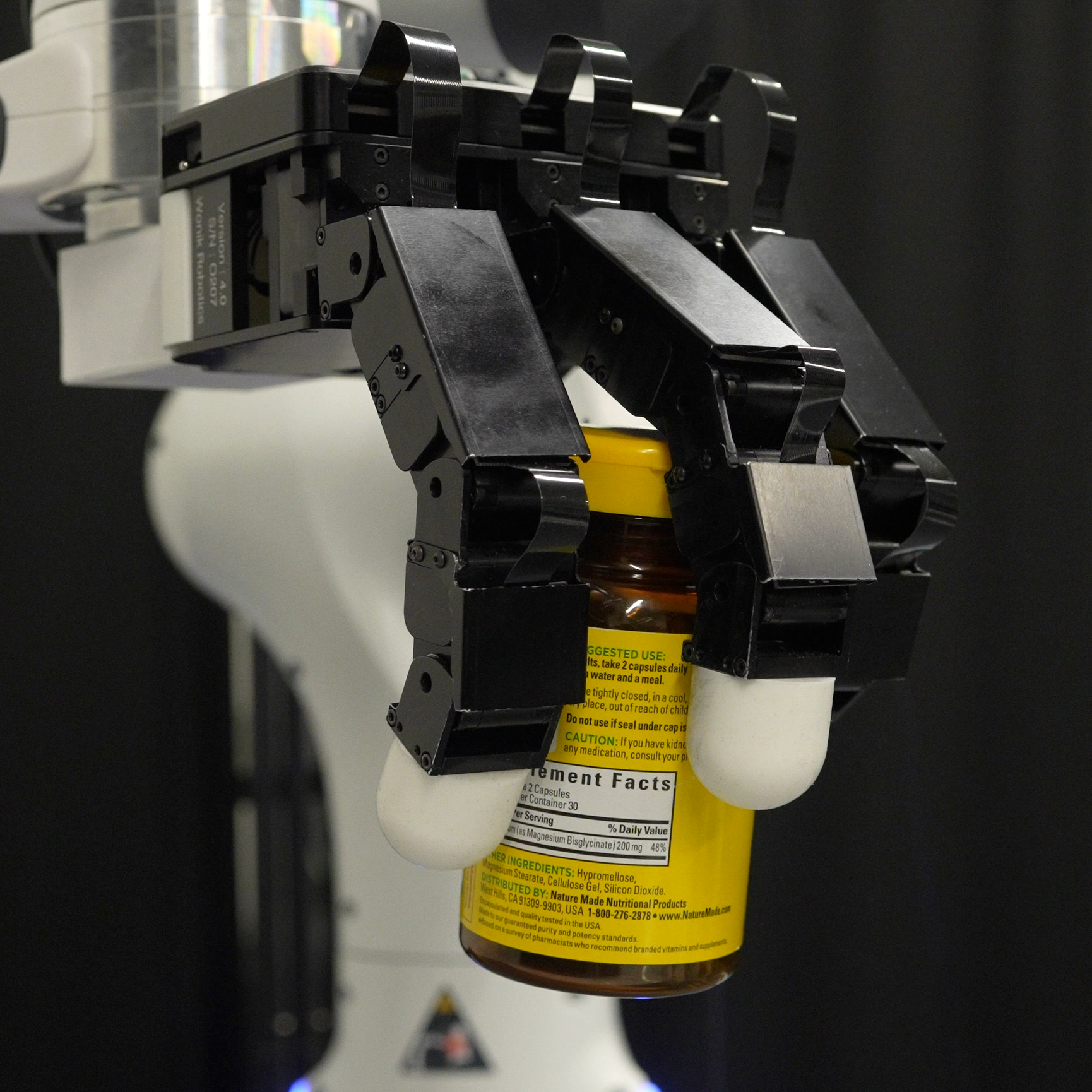}\hspace{2pt}%
        \includegraphics[width=0.14\textwidth]{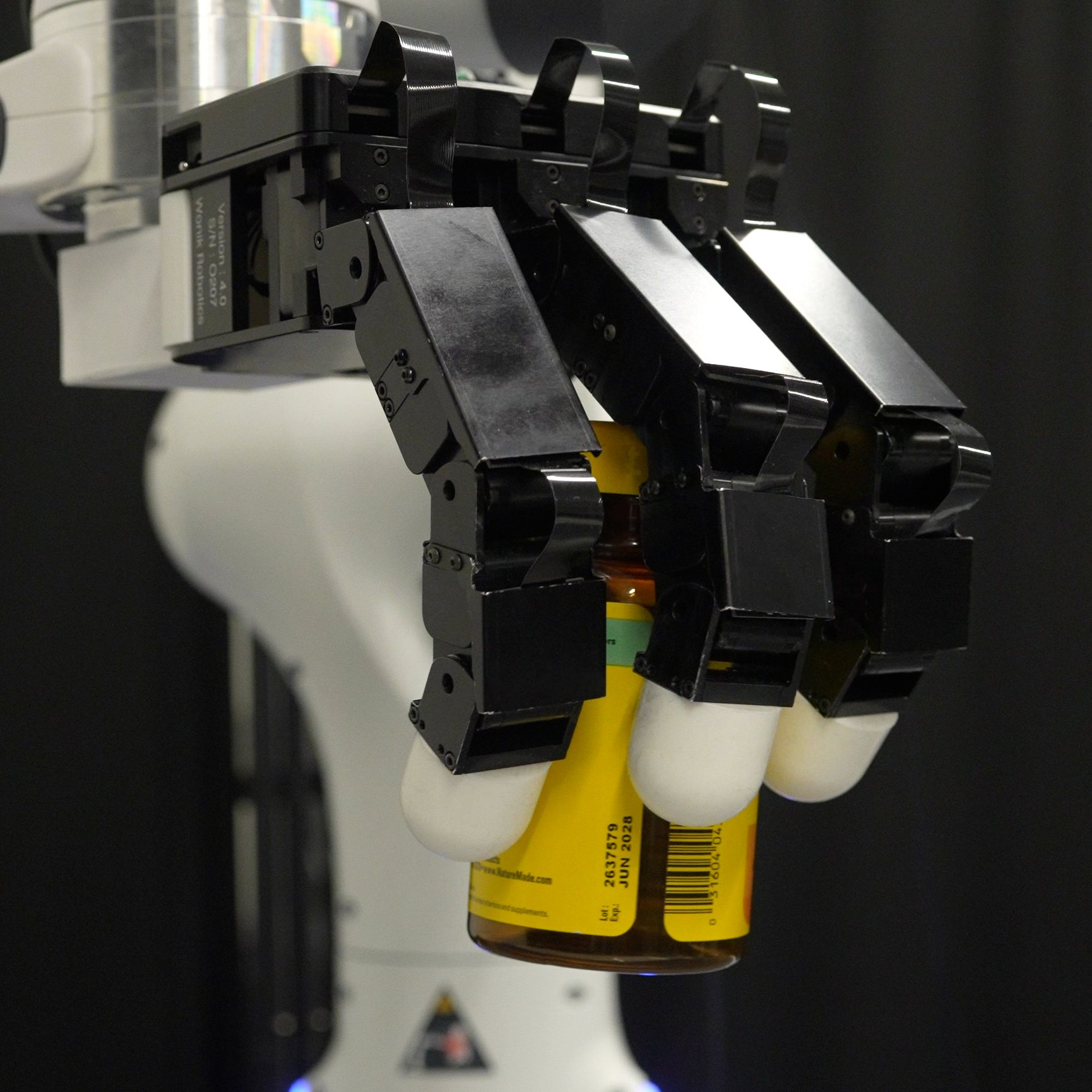}%
        \label{fig:inhand_cw}
    }\\
    \subfloat[Counterclockwise rotation, swinging arm with weighted object]{%
        \includegraphics[width=0.14\textwidth]{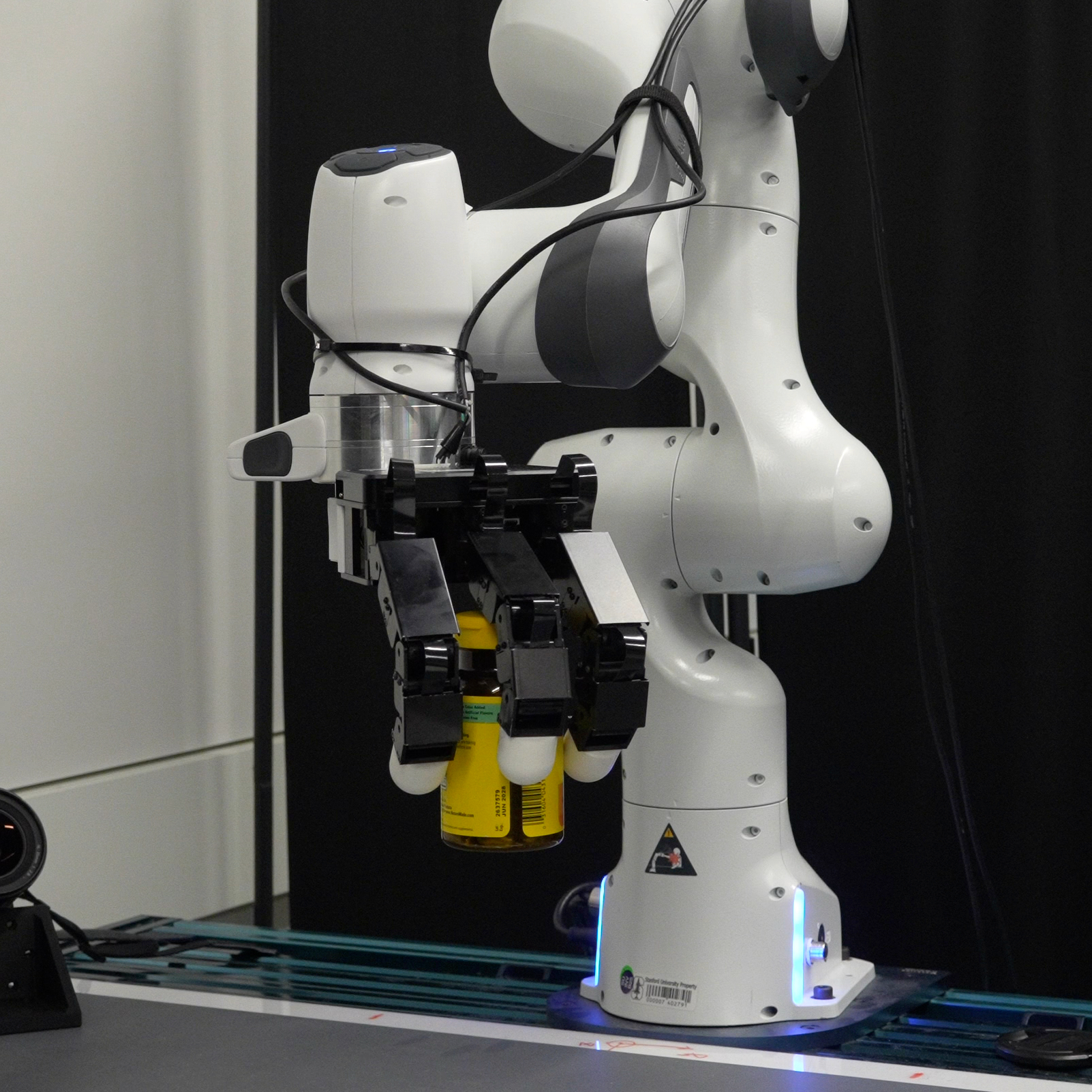}\hspace{2pt}%
        \includegraphics[width=0.14\textwidth]{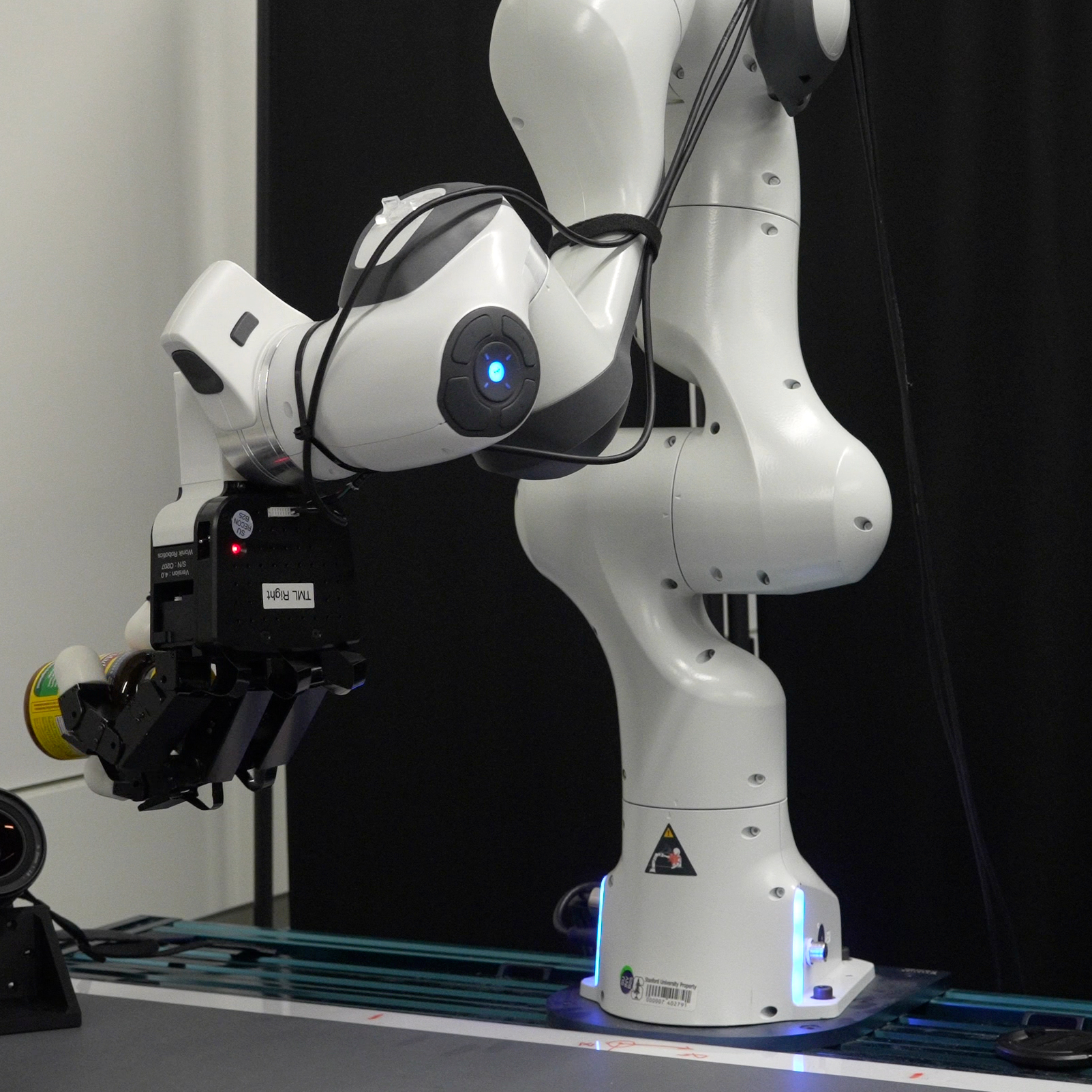}\hspace{2pt}%
        \includegraphics[width=0.14\textwidth]{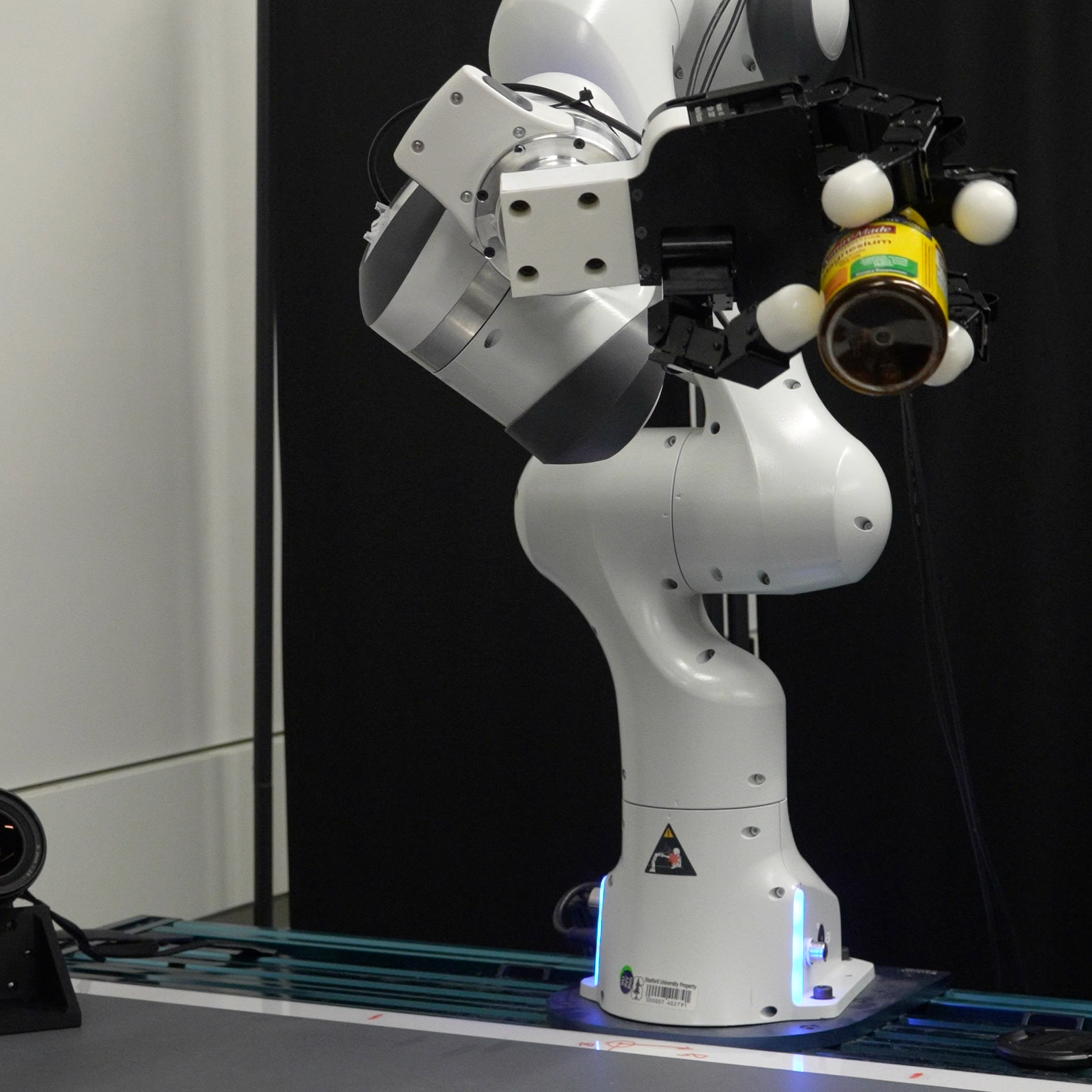}\hspace{2pt}%
        \includegraphics[width=0.14\textwidth]{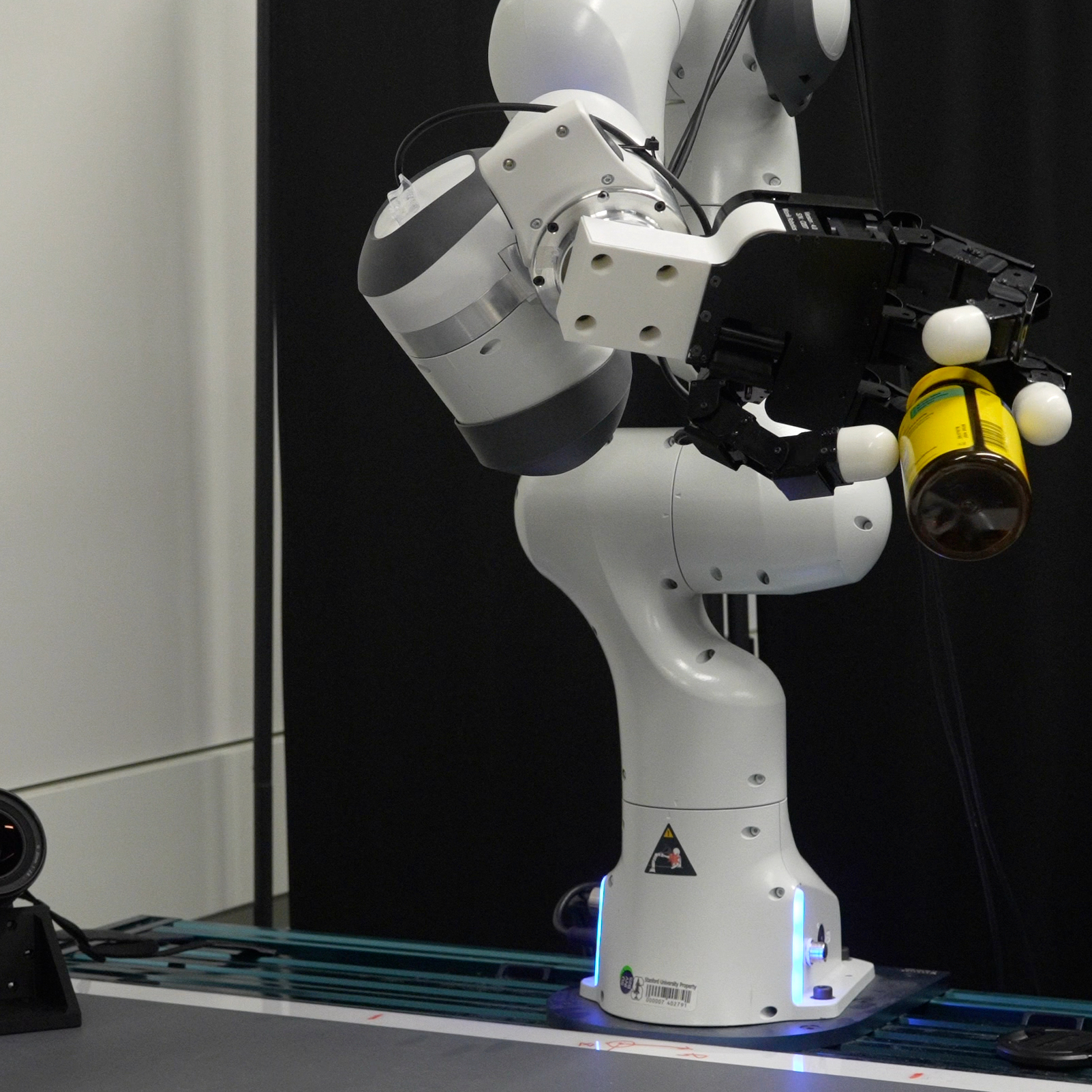}\hspace{2pt}%
        \includegraphics[width=0.14\textwidth]{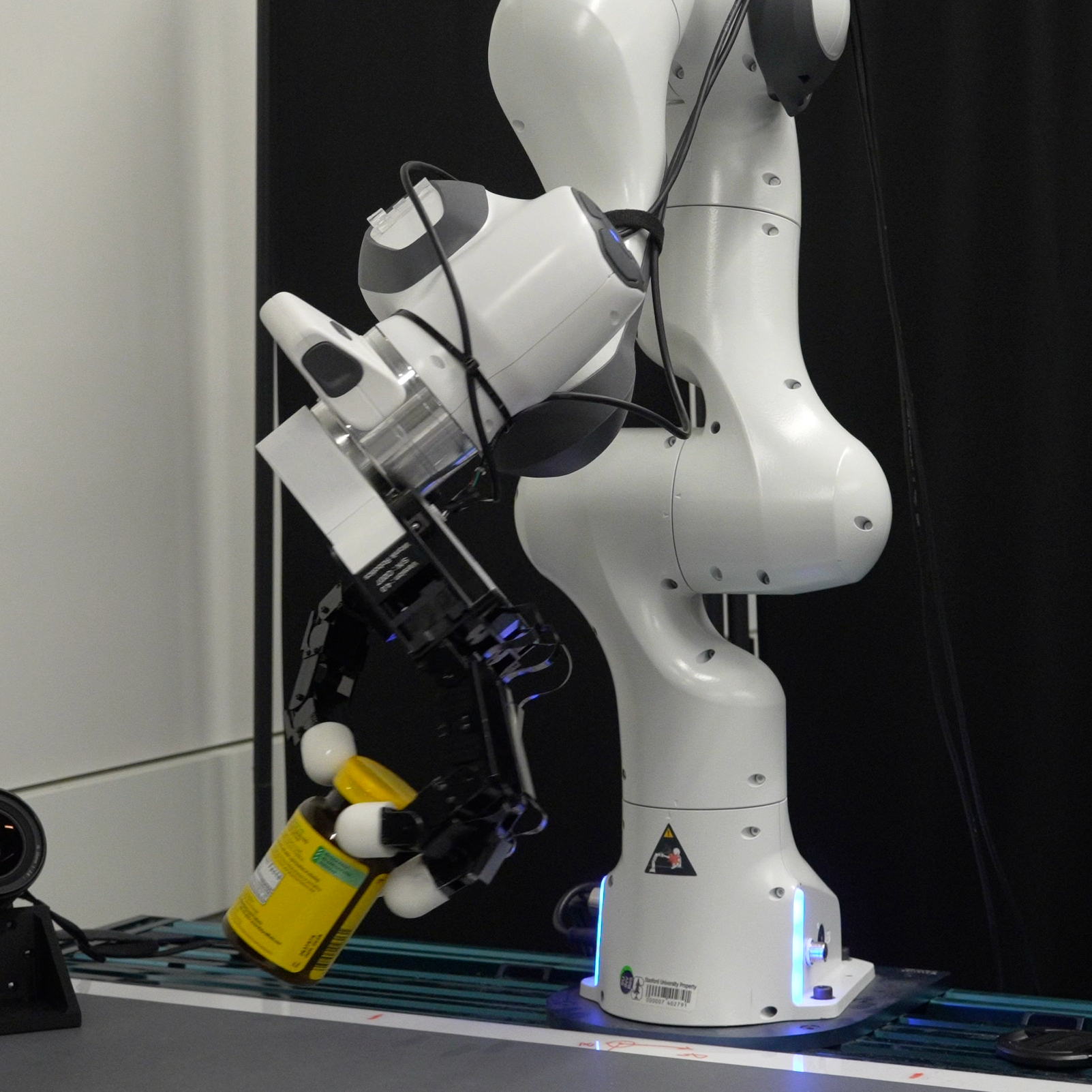}\hspace{2pt}%
        \includegraphics[width=0.14\textwidth]{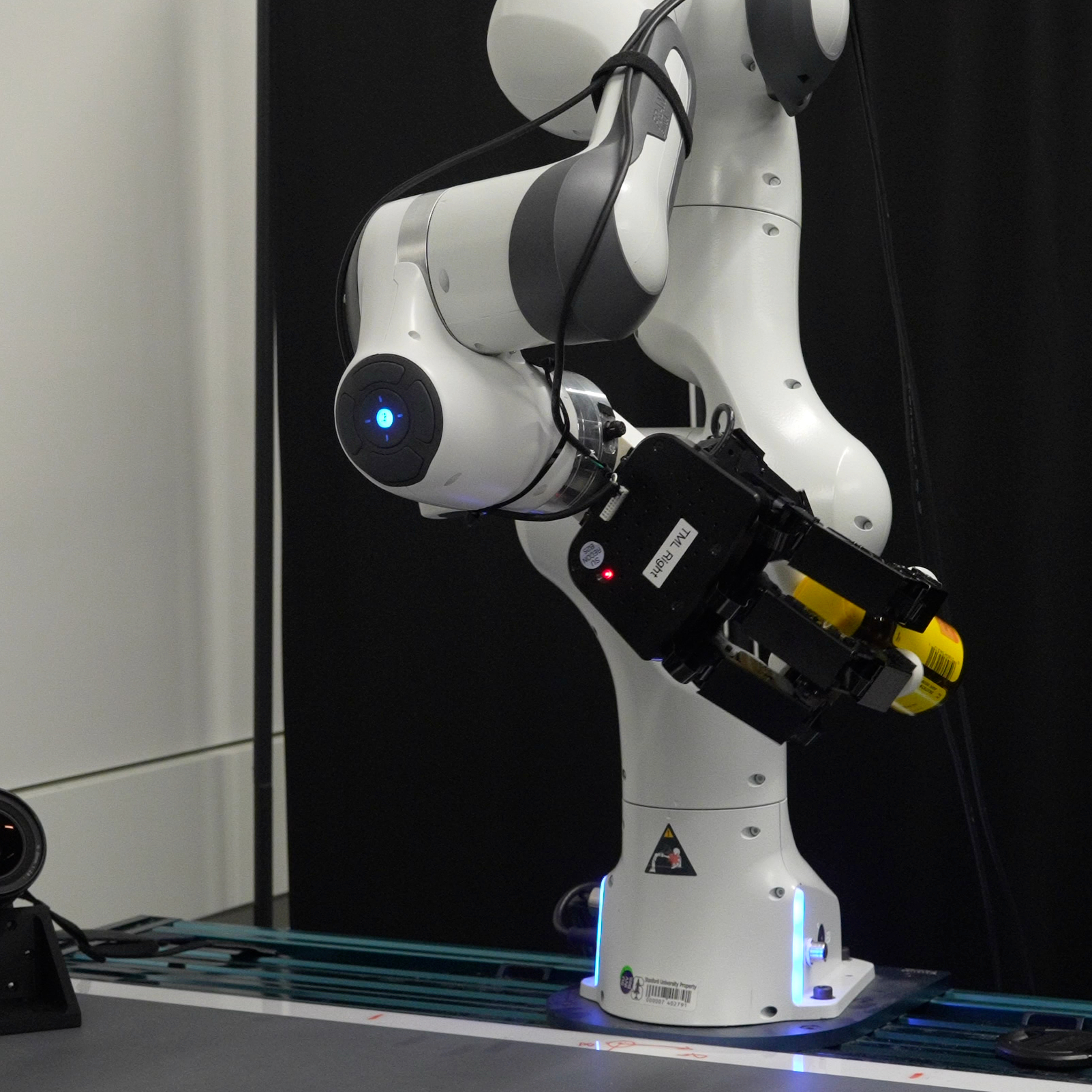}%
        \label{fig:inhand_ccw}
    }
    \caption{In-hand reorientation on hardware. Snapshots are arranged temporally from left to right.(a)~Clockwise rotation with the arm held static and empty bottle. (b)~Counterclockwise rotation with the arm swinging and 2.5\,oz loose weight in the bottle, demonstrating robustness under load and motion.}
    \label{fig:inhand_reorientation}
\end{figure*}

\subsubsection{Scene Setup and Test Protocol}
A human operator places the object in between the finger cage, and the fingers grasp the object using the initial grasping configuration in the motion plan. The joint angles from the planned finger gaiting sequence are then played back open loop. To evaluate robustness of our plans, we test each plan under three conditions: (i)~with the arm held static and no added payload, (ii)~with the arm held static and up to 98 grams of weight incrementally added into the bottle, and (iii)~with the arm swinging and 70 grams of loose weight inside the bottle.
\subsubsection{Results}
Figure~\ref{fig:inhand_reorientation} illustrates examples of the task execution. Using the same open loop motion plan across all three conditions, \methodname{} achieves over $360^\circ$ rotation in both directions. We attribute this robustness to the fact that our IHR plan is certifiably in force closure. This demonstrates that \methodname{} offloads the burden on a higher-level planner (in this case the tree search) and allows a simple strategy to achieve highly constrained and difficult manipulation tasks.

\setcounter{figure}{7}
\begin{figure}[ht]
    \centering
    \subfloat[Clockwise (CW)]{\includegraphics[width=0.48\columnwidth]{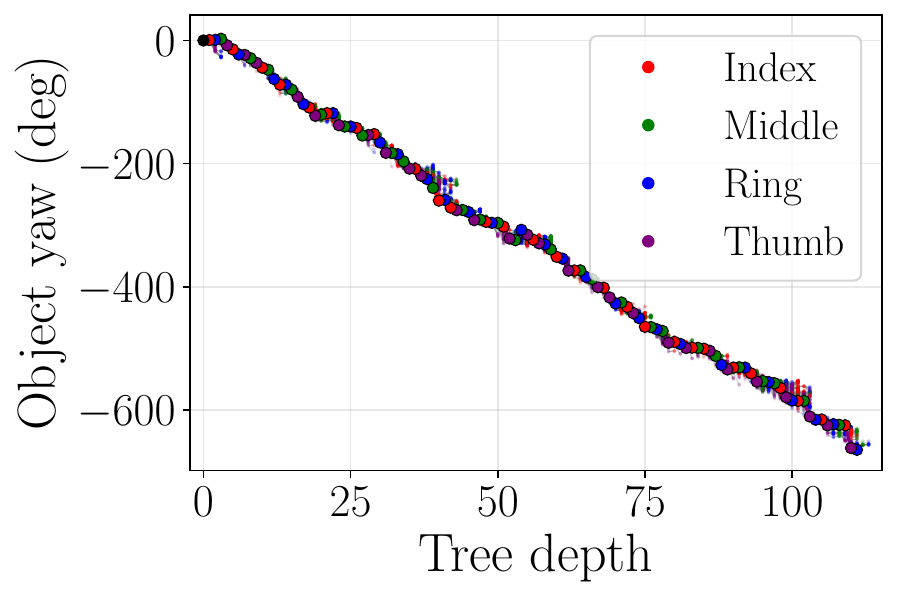}\label{fig:inhand_tree_cw}}
    \hfill
    \subfloat[Counterclockwise (CCW)]{\includegraphics[width=0.48\columnwidth]{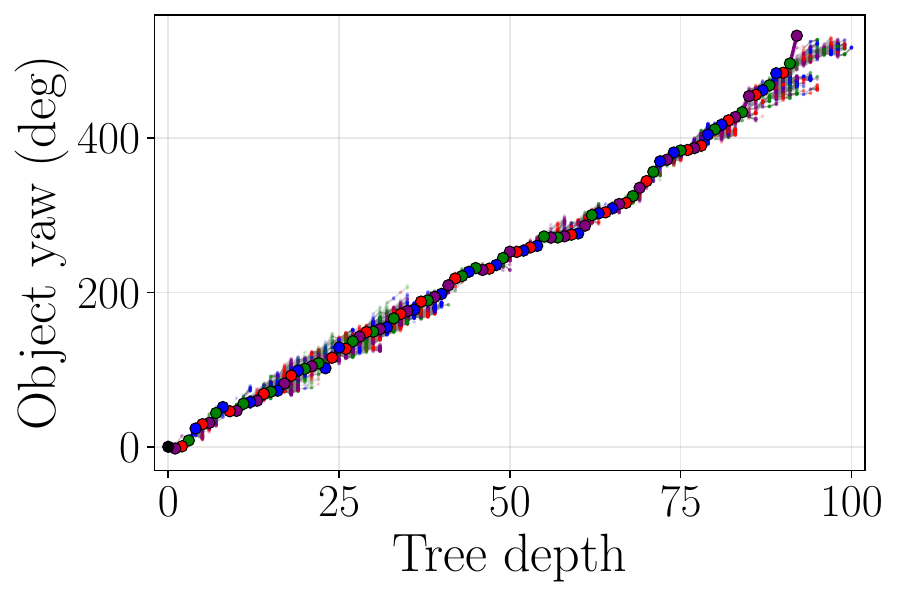}\label{fig:inhand_tree_ccw}}
    \caption{Offline IHR motion plan trees for (a)~clockwise and (b)~counterclockwise target rotations. Each node represents a finger gaiting state, and the highlighted path from the root achieves the largest cumulative yaw.}
    \label{fig:inhand_trees}
\end{figure}
\setcounter{figure}{9}

\section{Discussion}
\label{sec:conclusion}

This paper presents \methodname{} (\methodnamelong{}), a geometric motion generation framework that extends PBDS~\cite{bylard2021composable} with task manifold safety guarantees and an action interface. At each control step, two convex QPs are solved sequentially: the first defines the autonomous safe acceleration, and the second injects the action input around it. The resulting configuration space acceleration is certifiably safe and recovers the autonomous task dynamics when the action input vanishes. We validate its theoretical properties in simulation on an $\mathbb{S}^2$ double integrator and a 7-DOF Franka arm. On a 23-DOF Franka--Allegro hardware system, \methodname{} attains $92.5\%$ grasp success across 20 household objects, and $94.4\%$ for 3-finger grasps through simple action changes enabled by the action interface. Finally, \methodname{} enables the first model-based robust palm-down in-hand reorientation, producing over $360^\circ$ rotation in both directions under varying object weight and arm motion.

\subsection{Limitations}

Like other vector-field policies, \methodname{} solves a local motion generation problem: over long horizons, the system may become trapped in local minima, and escaping them must be handled by the higher-level policy that provides the action input. In addition, \methodname{} is kinematic and therefore requires a lower-level tracker to realize the commanded accelerations, or an impedance controller when compliant contact forces are needed. Finally, the pullback derivations assume surjective submersion task maps and a fully or over-actuated robot; underactuated systems are therefore outside the scope of the present framework.

\subsection{Future work}
On the application side, an important direction is to combine the action interface enabled by \methodname{} with more advanced decision-making policies, such as reinforcement learning or vision-language-action models. On the theoretical side, relaxing the Assumption~\ref{assum:surjective_submersion_task_map} will allow \methodname{} to handle a broader class of objectives, including tasks with singularities or nonsmooth task maps. In addition, incorporating robot dynamics and the contact forces required for manipulation, rather than relying on an acceleration tracker or impedance controller, will allow \methodname{} to reason directly about interaction forces alongside geometric safety.

\printbibliography

\clearpage
\pagenumbering{arabic}
\renewcommand{\thepage}{A\arabic{page}}
\setcounter{page}{1}
\begin{refsection}
        \appendices
        \renewcommand{\thesubsection}{\thesection.\arabic{subsection}}
        \renewcommand{\thesubsectiondis}{\arabic{subsection}}
        \setcounter{figure}{0}
        \setcounter{table}{0}
        \renewcommand{\thefigure}{A\arabic{figure}}
        \renewcommand{\thetable}{A\arabic{table}}
        \section{Proof of Lemma~\ref{lem:task_bcbf_hdot}}
\label{app:proof_task_bcbf_hdot}

\noindent\textbf{Lemma~\ref{lem:task_bcbf_hdot}. }\textit{In the task manifold PBDS setup, the BCBF candidate $h$ given in~\eqref{eq:backstepping_cbf} has time derivative}
\begin{equation*}
    \begin{split}
        \dot{h} &= \langle \operatorname{grad} h_0, \dot{x} \rangle_N + \varepsilon \bigl\langle e_x, \nabla_{\dot{x}} \tilde{\xi} \bigr\rangle_N \\
        &\quad - \varepsilon \bigl\langle e_x, df \left( {}^M\nabla_{\dot{\sigma}}\dot{\sigma} \right) + (\nabla df)(\dot{\sigma}, \dot{\sigma}) \bigr\rangle_N.
    \end{split}
\end{equation*}

\begin{proof}
    Write $e^\alpha = \dot{x}^\alpha - \tilde{\xi}^\alpha$ for the error components. The coordinate expression for the differential of $h$ at $(x,\dot{x}) \in TN$, acting on $(a_x, a_{\dot{x}}) \in T_{(x,\dot{x})}(TN)$, is
    \begin{multline}
        dh \cdot (a_x, a_{\dot{x}})
        = \frac{\partial h_0}{\partial x^\gamma} a_x^\gamma \\
        + \varepsilon\, g_{\alpha\beta}\, e^\beta \!\left( \frac{\partial \tilde{\xi}^\alpha}{\partial x^\gamma} a_x^\gamma - a_{\dot{x}}^\alpha \right)
        - \frac{\varepsilon}{2} \frac{\partial g_{\alpha\beta}}{\partial x^\gamma}\, e^\alpha e^\beta\, a_x^\gamma.
        \label{eq:dh_coordinates_task}
    \end{multline}

    \paragraph*{Step 1: Substitute the dynamics.}
    Along a trajectory, $a_x^\gamma = \dot{x}^\gamma$ and $a_{\dot{x}}^\alpha = \ddot{x}^\alpha$.
    From~\eqref{eq:cov_acc_N}, the coordinate acceleration on $N$ is $\ddot{x}^\alpha = ({}^N\!\nabla_{\dot{x}} \dot{x})^\alpha - {}^N\!\Gamma^\alpha_{\gamma\delta}\, \dot{x}^\gamma \dot{x}^\delta$.
    Using the geometric kinematics~\eqref{eq:geo_kinematics}, we substitute
    \[
        \ddot{x}^\alpha = \bigl(df({}^M\!\nabla_{\dot{\sigma}}\dot{\sigma})\bigr)^\alpha + \bigl((\nabla df)(\dot{\sigma},\dot{\sigma})\bigr)^\alpha - {}^N\!\Gamma^\alpha_{\gamma\delta}\, \dot{x}^\gamma \dot{x}^\delta.
    \]
    The parenthesized term in~\eqref{eq:dh_coordinates_task} becomes
    \begin{multline}
        \frac{\partial \tilde{\xi}^\alpha}{\partial x^\gamma} \dot{x}^\gamma - \ddot{x}^\alpha
        = \frac{\partial \tilde{\xi}^\alpha}{\partial x^\gamma} \dot{x}^\gamma + {}^N\!\Gamma^\alpha_{\gamma\delta}\, \dot{x}^\gamma \dot{x}^\delta \\
        - \bigl(df({}^M\!\nabla_{\dot{\sigma}}\dot{\sigma})\bigr)^\alpha - \bigl((\nabla df)(\dot{\sigma},\dot{\sigma})\bigr)^\alpha,
    \end{multline}
    so~\eqref{eq:dh_coordinates_task} yields
    \begin{multline}
        \dot{h}
        = \frac{\partial h_0}{\partial x^\gamma} \dot{x}^\gamma
        + \varepsilon\, g_{\alpha\beta}\, e^\beta \!\biggl(
        \frac{\partial \tilde{\xi}^\alpha}{\partial x^\gamma} \dot{x}^\gamma + {}^N\!\Gamma^\alpha_{\gamma\delta}\, \dot{x}^\gamma \dot{x}^\delta \\
        - \bigl(df({}^M\!\nabla_{\dot{\sigma}}\dot{\sigma})\bigr)^\alpha - \bigl((\nabla df)(\dot{\sigma},\dot{\sigma})\bigr)^\alpha
        \biggr) \\
        - \frac{\varepsilon}{2} \frac{\partial g_{\alpha\beta}}{\partial x^\gamma}\, e^\alpha e^\beta\, \dot{x}^\gamma.
        \label{eq:hdot_expanded_task}
    \end{multline}

    \paragraph*{Step 2: Decompose $\dot{x} = e + \tilde{\xi}$.}
    Split the second index of ${}^N\!\Gamma^\alpha_{\gamma\delta}\, \dot{x}^\gamma \dot{x}^\delta$ using $\dot{x}^\delta = e^\delta + \tilde{\xi}^\delta$:
    \[
        {}^N\!\Gamma^\alpha_{\gamma\delta}\, \dot{x}^\gamma \dot{x}^\delta
        = {}^N\!\Gamma^\alpha_{\gamma\delta}\, \dot{x}^\gamma \tilde{\xi}^\delta + {}^N\!\Gamma^\alpha_{\gamma\delta}\, \dot{x}^\gamma e^\delta.
    \]
    The first two terms inside the parentheses of~\eqref{eq:hdot_expanded_task} combine with ${}^N\!\Gamma^\alpha_{\gamma\delta}\, \dot{x}^\gamma \tilde{\xi}^\delta$ to form the covariant derivative of $\tilde{\xi}$ in the direction $\dot{x}$:
    \[
        (\nabla_{\dot{x}} \tilde{\xi})^\alpha = \frac{\partial \tilde{\xi}^\alpha}{\partial x^\gamma} \dot{x}^\gamma + {}^N\!\Gamma^\alpha_{\gamma\delta}\, \dot{x}^\gamma \tilde{\xi}^\delta.
    \]
    This leaves a remainder ${}^N\!\Gamma^\alpha_{\gamma\delta}\, \dot{x}^\gamma e^\delta$, so~\eqref{eq:hdot_expanded_task} becomes
    \begin{multline}
        \dot{h}
        = \frac{\partial h_0}{\partial x^\gamma} \dot{x}^\gamma
        + \varepsilon\, g_{\alpha\beta}\, e^\beta \!\biggl(
        (\nabla_{\dot{x}} \tilde{\xi})^\alpha + {}^N\!\Gamma^\alpha_{\gamma\delta}\, \dot{x}^\gamma e^\delta \\
        - \bigl(df({}^M\!\nabla_{\dot{\sigma}}\dot{\sigma})\bigr)^\alpha - \bigl((\nabla df)(\dot{\sigma},\dot{\sigma})\bigr)^\alpha
        \biggr) \\
        - \frac{\varepsilon}{2} \frac{\partial g_{\alpha\beta}}{\partial x^\gamma}\, e^\alpha e^\beta\, \dot{x}^\gamma.
        \label{eq:hdot_with_remainder_task}
    \end{multline}

    \paragraph*{Step 3: Cancellation via metric compatibility.}
    It remains to show that the two terms involving $e^\alpha e^\beta \dot{x}^\gamma$ in~\eqref{eq:hdot_with_remainder_task} cancel.
    By metric compatibility~\cite[Prop.~5.5(c)]{lee2018introduction},
    \[
        \frac{\partial g_{\alpha\beta}}{\partial x^\gamma} = g_{\delta\beta}\, {}^N\!\Gamma^\delta_{\gamma\alpha} + g_{\alpha\delta}\, {}^N\!\Gamma^\delta_{\gamma\beta}.
    \]
    Contracting against $e^\alpha e^\beta \dot{x}^\gamma$ and using the symmetry of $e^\alpha e^\beta$ in $\alpha \leftrightarrow \beta$,
    \[
        \frac{\partial g_{\alpha\beta}}{\partial x^\gamma}\, e^\alpha e^\beta \dot{x}^\gamma
        = 2\, g_{\delta\beta}\, {}^N\!\Gamma^\delta_{\gamma\alpha}\, e^\alpha e^\beta \dot{x}^\gamma,
    \]
    so the $\partial g$ term evaluates to
    \[
        -\frac{\varepsilon}{2} \frac{\partial g_{\alpha\beta}}{\partial x^\gamma}\, e^\alpha e^\beta \dot{x}^\gamma
        = -\varepsilon\, g_{\delta\beta}\, {}^N\!\Gamma^\delta_{\gamma\alpha}\, e^\alpha e^\beta \dot{x}^\gamma.
    \]
    Meanwhile, relabeling $\alpha \leftrightarrow \delta$ in the remainder term from~\eqref{eq:hdot_with_remainder_task} gives
    \[
        \varepsilon\, g_{\alpha\beta}\, {}^N\!\Gamma^\alpha_{\gamma\delta}\, \dot{x}^\gamma e^\delta e^\beta
        = \varepsilon\, g_{\delta\beta}\, {}^N\!\Gamma^\delta_{\gamma\alpha}\, \dot{x}^\gamma e^\alpha e^\beta.
    \]
    These two expressions are equal and opposite, so they cancel.

    \paragraph*{Step 4: Collect.}
    After cancellation, \eqref{eq:hdot_with_remainder_task} reduces to
    \begin{multline*}
        \dot{h}
        = \frac{\partial h_0}{\partial x^\gamma} \dot{x}^\gamma
        + \varepsilon\, g_{\alpha\beta}\, e^\beta \!\biggl(
        (\nabla_{\dot{x}} \tilde{\xi})^\alpha \\
        - \bigl(df({}^M\!\nabla_{\dot{\sigma}}\dot{\sigma})\bigr)^\alpha - \bigl((\nabla df)(\dot{\sigma},\dot{\sigma})\bigr)^\alpha
        \biggr),
    \end{multline*}
    which, upon identifying $\frac{\partial h_0}{\partial x^\gamma} \dot{x}^\gamma = \langle \operatorname{grad} h_0, \dot{x} \rangle_N$ and $g_{\alpha\beta}\, e^\beta\, (\cdot)^\alpha = \langle e_x,\, \cdot\, \rangle_N$, is~\eqref{eq:pbds_bcbf_hdot}.
\end{proof}

\section{Evaluation: Simulated $\mathbb{S}^2$ Double Integrator}
\label{app:s2_experiment_details}
Using a point robot on $\mathbb{S}^2$, we first validate the theoretical properties of \methodname{}, in particular chart invariance, safety guarantees, multi-task composition, and effects of our action space design.

\subsection{System Setup}

\paragraph{System definition.}
Consider a point robot with unit mass traveling on the surface of a unit sphere. The configuration manifold of the robot is $\mathbb{S}^2$, which has a natural embedding $\bar{\varphi}: \mathbb{S}^2\hookrightarrow \mathbb{R}^3$. We consider the atlas $\left\{(U_N, \varphi_N), (U_S, \varphi_S)\right\}$ formed by the north and south pole stereographic projection charts, where:
\begin{itemize}
    \item $U_N = \mathbb{S}^2 \setminus \{(0,0,1)\}$
    \item $\varphi_N : U_N \to \mathbb{R}^2$
    \item $U_S = \mathbb{S}^2 \setminus \{(0,0,-1)\}$
    \item $\varphi_S : U_S \to \mathbb{R}^2$
\end{itemize}
We can define corresponding maps $\bar{\varphi}_N: \mathbb{R}^2\rightarrow \mathbb{R}^3$ and $\bar{\varphi}_S: \mathbb{R}^2\rightarrow \mathbb{R}^3$ from chart coordinates $(y_1, y_2)$ to embedding coordinates $(x_1, x_2, x_3)$:
\begin{equation}
    \bar{\varphi}_N(y_1, y_2) = \left( \frac{2y_1}{\|y\|^2 + 1},\; \frac{2y_2}{\|y\|^2 + 1},\; \frac{\|y\|^2 - 1}{\|y\|^2 + 1} \right),
\end{equation}
where $\|y\|^2 = y_1^2 + y_2^2$. The south pole chart $\varphi_S : U_S \to \mathbb{R}^2$ projects from $S = (0,0,-1)$:
\begin{equation}
    \varphi_S(x_1, x_2, x_3) = \left( \frac{x_1}{1 + x_3},\; \frac{x_2}{1 + x_3} \right),
\end{equation}
with inverse
\begin{equation}
    \bar{\varphi}_S(y_1, y_2) = \left( \frac{2y_1}{\|y\|^2 + 1},\; \frac{2y_2}{\|y\|^2 + 1},\; \frac{1 - \|y\|^2}{\|y\|^2 + 1} \right).
\end{equation}

The metric in both charts is given by $g_{ij} = \frac{4}{(1 + \|y\|^2)^2} \delta_{ij}$, with Christoffel symbols
\begin{equation}
    \Gamma^k_{ij} = \frac{-2}{1 + \|y\|^2} \left( y_i \delta_{jk} + y_j \delta_{ik} - y_k \delta_{ij} \right).
\end{equation}

\paragraph{Configuration Dynamics}
We compare two configuration manifold dynamics, both implemented with fourth-order Runge--Kutta integration schemes:
\begin{enumerate}
    \item \emph{Geometric}: $\ddot{y}^k + \Gamma^k_{ij}\,\dot{y}^i\dot{y}^j = F^k$, using the Christoffel symbols of the round metric.
    \item \emph{Flat}: $\ddot{y}^k = F^k$, ignoring the manifold geometry and treating chart coordinates as~$\mathbb{R}^2$ with $\Gamma = 0$ and $g = \delta_{ij}$.
\end{enumerate}

\paragraph{Task Dynamics}
Beyond the PBDS with embedding task used throughout the main text, the chart invariance experiment in this appendix uses two additional task dynamics variants. We consider three classes of dynamics on~$\mathbb{S}^2$ in total:
\begin{enumerate}
    \item \emph{Direct SMCS} (appendix only): a geodesic attractor potential $\Phi(y) = \tfrac{1}{2}\,d_{\mathbb{S}^2}(y, y_{\mathrm{goal}})^2$ with Riemannian dissipation $\mathcal{F}_D = -b\,g\,\dot{y}$ defines a single SMCS directly on the configuration manifold (no PBDS).
    \item \emph{Trivial PBDS} (appendix only): we define the task map $f = \bar{\varphi}_\alpha\colon \mathbb{R}^2 \to \mathbb{R}^3$ (chart-to-embedding) with \emph{zero} potential and \emph{zero} dissipative force on the embedding. The resulting PBDS task SMCS has no task manifold forces; however, the PBDS closed-form~\eqref{eq:pbds_closed_form} still produces a nonzero coordinate acceleration~$\ddot{\sigma}$ because the Hessian of the task map injects the Christoffel-symbol correction. This isolates the geometric contribution of the task map from any task manifold forcing.
    \item \emph{PBDS with embedding task} (used in main text): following~\cite{bylard2021composable}, we define the task map $f = \bar{\varphi}_\alpha\colon \mathbb{R}^2 \to \mathbb{R}^3$ (chart-to-embedding) and design an attractor potential and dissipative force on the embedding~$\mathbb{R}^3$. The PBDS closed-form~\eqref{eq:pbds_closed_form} yields a coordinate acceleration~$\ddot{\sigma}$.
\end{enumerate}

\paragraph{Safety constraint.}
The robot must stay arclength $r$ from a point $x_o \in \mathbb{S}^2$. Since arclength on the unit sphere equals the angle between embedding vectors, the safety function (Definition~\ref{def:h0}) is
\begin{equation}
    h_0(x) = \arccos(x \cdot x_o) - r, \quad x,\, x_o \in \mathbb{S}^2 \subset \mathbb{R}^3,
    \label{eq:s2_safety_function_app}
\end{equation}
or, in chart coordinates, $h_0(y) = \arccos\!\bigl(\bar{\varphi}_\alpha(y) \cdot x_o\bigr) - r$ with $\alpha \in \{N, S\}$ the active chart. The safe set is $\mathcal{C}_0 = \{x \in \mathbb{S}^2 : h_0(x) \geq 0\}$. We enforce $h_0 \geq 0$ via the ECBF and BCBF formulations of Section~\ref{sec:task_manifold_cbf}; for BCBF, we set $\xi = 0$ and compare two metrics on the chart: the round metric $g_{ij} = \frac{4}{(1+\|y\|^2)^2}\delta_{ij}$ and the flat metric $\delta_{ij}$.

\paragraph{Shared experimental conditions.}
All experiments start from a point on the equator of $\mathbb{S}^2$ (in the overlap of both charts, away from both poles). The chart invariance experiment (Section~\ref{app:scenario_0}) uses nonzero initial velocity so that the Christoffel terms are nonzero from the start. The pullback CBF and action interface experiments (Sections~\ref{app:scenario_1}--\ref{app:scenario_action}) share the same scene: an obstacle of radius $r = 0.5$\,rad centered on the geodesic midpoint between start and goal, with zero initial velocity.

\subsection{Chart Invariance of Geometric vs.\ Flat Configuration Dynamics}
\label{app:scenario_0}

We first demonstrate that on a curved configuration manifold, ignoring the geometry produces chart-dependent and incorrect dynamics, but that the PBDS task map alone recovers the correct geometric behavior even with zero task manifold forces. 

The run configurations for the chart invariance experiments are listed in Table~\ref{tab:s2_chart_invariance_configs}.

\begin{table}[t]
\centering
\caption{Chart invariance run configurations: free motion ($F=0$), nonzero initial velocity. N/S: north/south pole chart.}
\label{tab:s2_chart_invariance_configs}
\small
\setlength{\tabcolsep}{5pt}
\begin{tabular}{@{}llll@{}}
\toprule
Run & Config dynamics & Task dynamics \\
\midrule
(a) & Geometric (N) & Direct SMCS \\
(b) & Flat (N)      & Direct SMCS \\
(c) & Geometric (S) & Direct SMCS \\
(d) & Flat (S)      & Direct SMCS \\
(e) & Flat (N)        & Trivial PBDS \\
(f) & Flat (S)        & Trivial PBDS \\
\bottomrule
\end{tabular}
\end{table}

As shown in Figure~\ref{fig:scenario_0_app}, the geometric trajectories (a) and~(c) follow great circles and agree on $\mathbb{S}^2$ up to numerical precision. The flat trajectories (b) and~(d), which ignore the Christoffel symbols, diverge from each other and from the geodesic. Runs~(e) and~(f) use flat dynamics with the trivial PBDS: zero potential and zero dissipation on the embedding, so the only source of coordinate acceleration is the Hessian of the chart-to-embedding task map.
Despite having flat configuration dynamics, the PBDS task dynamics provide the correct geometric correction, and both (e) and (f) match the geometric runs~(a) and~(c). (e) and (f) demonstrate that the task dynamics governs the PBDS behaviors, with no regard to the configuration manifold dynamics.
\begin{figure}[t]
    \centering
    \includegraphics[width=\columnwidth]{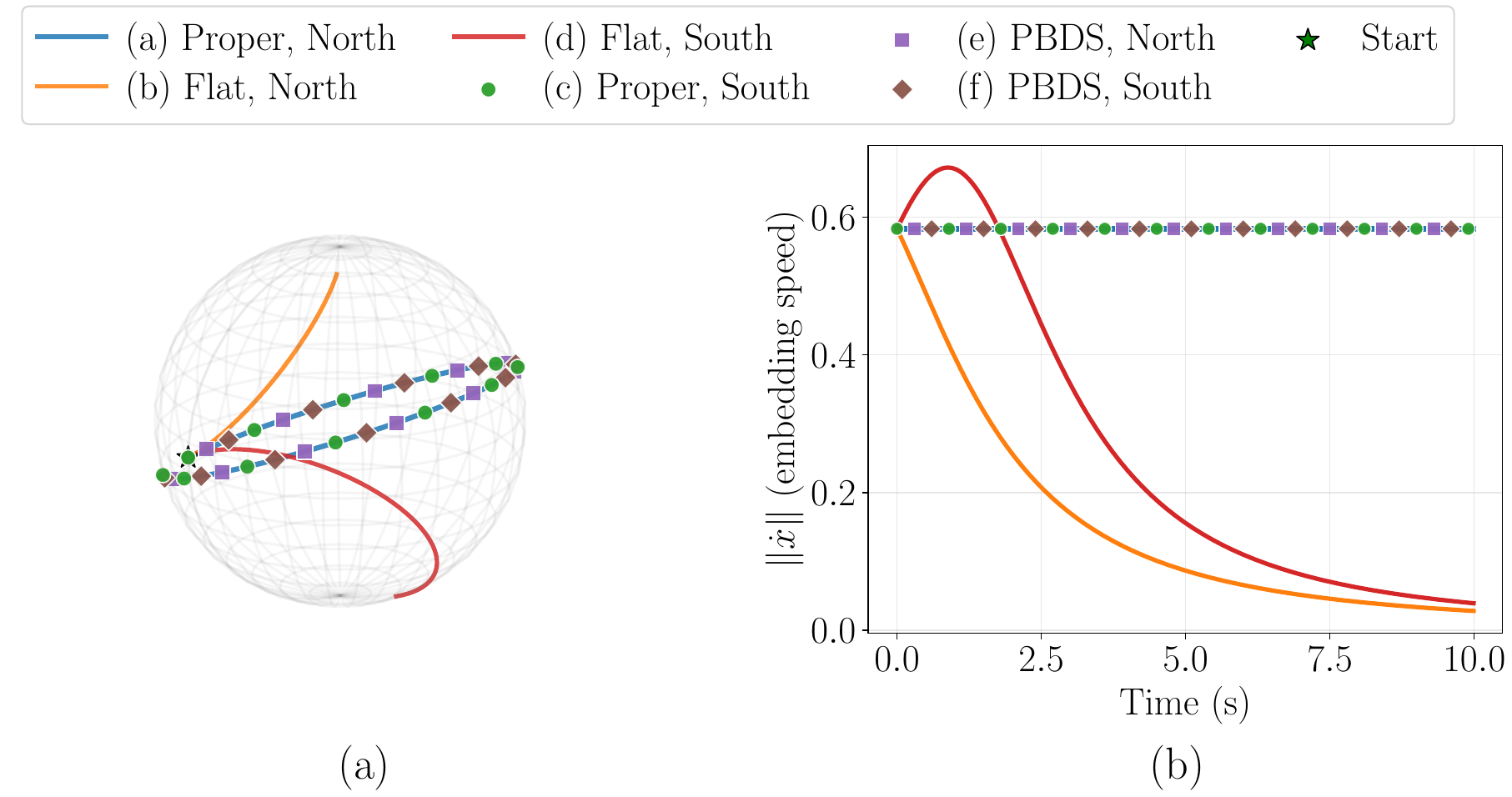}
    \caption{Free motion on $\mathbb{S}^2$. (a)~Geometric (a,\,c) follow great circles and overlap; flat (b,\,d) diverge from the geodesic and from each other. Trivial PBDS (e,\,f) with zero task manifold forces and flat integration overlay the geometric trajectories, showing that the task map's Hessian alone injects the Christoffel correction. Overlapping curves are drawn with staggered marker overlays so each is individually visible. (b)~Geometric and trivial PBDS maintain constant embedding speed (geodesic property); flat does not.}
    \label{fig:scenario_0_app}
\end{figure}

\subsection{Autonomous \methodname{} on $\mathbb{S}^2$}
\label{app:scenario_1}

We validate Autonomous \methodname{} (without task manifold actions) on the $\mathbb{S}^2$ double integrator to test for metric dependence, chart switching, and safety recovery. All experiments use the obstacle avoidance safety function~\eqref{eq:s2_safety_function_app}.

\paragraph{Metric dependence and chart switching.}
The ECBF constraint involves only coordinate partial derivatives of $h_0$ and the coordinate acceleration, with no metric tensors or Christoffel symbols (Remark~\ref{rem:ecbf_metric_independence}), making it metric-independent. The two BCBF runs produce visibly different avoidance trajectories, since the BCBF depends on the metric via $\operatorname{grad} h_0$, $\|e_x\|_N$, and $\nabla_{\dot{x}}\tilde{\xi}$; the BCBF also stays farther from the unsafe set boundary $h_0=0$ due to the required ``safe velocity field.''

Additionally, we verify chart switching by varying the chart used for the configuration space integration. This is the safety-augmented analogue of the chart switching experiment in~\cite[Fig.~5]{bylard2021composable}. The task SMCS on $\mathbb{R}^3$ is unchanged.
Run~(iv) matches the fixed-chart reference~(i) on~$\mathbb{S}^2$, confirming that the PBDS QP solution and safety constraint produce consistent embedding space behavior across chart transitions and that \methodname{} can be deployed on atlases with multiple charts.

\paragraph{Recovery from outside the safe set.}
To test recovery, we run both an ECBF and a BCBF trajectory starting from inside the unsafe set. Both formulations drive the system out of the unsafe region and converge to the goal. However, the ECBF constrains only $h_0 \geq 0$ and recovers along the most direct path, while the BCBF's backstepping barrier $h = h_0 - \tfrac{\varepsilon}{2}\lVert \dot{x} - \tilde{\xi} \rVert_N^2$ additionally penalizes velocity that deviates from the safe velocity field~$\tilde{\xi}$. This velocity-dependent margin causes the BCBF recovery to maintain a larger clearance from the obstacle boundary once $h_0 \geq 0$ is reached, consistent with the greater conservatism of the BCBF observed in the safe-start runs.

\subsection{Steered \methodname{} on $\mathbb{S}^2$}
\label{app:scenario_action}

We demonstrate the action interface (Section~\ref{sec:task_manifold_actions}) on the $\mathbb{S}^2$ double integrator. We use PBDS embedding dynamics with an ECBF safety constraint and an identity control task map $f_l = \mathrm{id}\colon \mathbb{R}^2 \to \mathbb{R}^2$ on the active chart, so the action $u \in \mathbb{R}^2$ is a residual chart-space acceleration. All actions are defined geometrically in the embedding space and mapped to chart coordinates via the Jacobian pseudoinverse at each timestep, making them chart-invariant. We place the obstacle between the start and the goal, so that the autonomous system ($u = 0$) faces two valid avoidance paths on each side of the obstacle. We run five trajectories~(viii)--(xii) (Table~\ref{tab:s2_run_configs}), all shown in Figure~\ref{fig:cbf_combined}(c):
\begin{enumerate}[label=(\roman*), start=8, leftmargin=*]
    \item \emph{Autonomous} ($u = 0$, gray): the system breaks symmetry via a small positional offset and converges to one side of the obstacle.
    \item \emph{$+u_{\perp}$} (blue): a tangential action perpendicular to the start--goal geodesic with $\|u_{\perp}\| = 1$, steering the robot to the same side as the autonomous bias.
    \item \emph{$-u_{\perp}$} (orange): flipping the sign steers the robot to the opposite side, resolving the topological ambiguity that the autonomous dynamics alone cannot.
    \item \emph{$u_{\mathrm{unsafe}}$} (green): a large action ($\|u\| = 10$) continuously pointing toward the obstacle center. \methodname{} prevents the constraint violation, so the actual acceleration deviates from the command near the obstacle boundary. Safety is maintained ($h_0(t) \geq 0$) despite the adversarial action.
    \item \emph{$-u_{\perp}$ on the south pole chart} (dashed purple): to verify chart invariance, we repeat~(x) on the south pole chart. The overlay with the north pole chart run confirms that the steered QP solution is independent of chart choice.
\end{enumerate}

\section{Evaluation: Simulated 7-DOF Robot Arm}
\label{app:7dof_experiment_details}

\subsection{System Setup}

Consider a 7-DOF Franka Emika Panda arm with joint limits, simulated in MuJoCo~\cite{todorov2012mujoco}. In joint coordinates we write $\sigma = (\sigma^1,\ldots,\sigma^7)$, so the configuration manifold
$M = \prod_{i=1}^{7}(\sigma^i_-,\, \sigma^i_+)$ is an open subset of $\mathbb{R}^7$,
equipped with the flat product metric so that ${}^M\!\Gamma = 0$ in joint
coordinates. The behavior comprises three PBDS tasks (items~1--3), two pullback CBF safety tasks (items~4--5), and a action task (item~6):
\begin{enumerate}
    \item An \emph{orientation attractor} task with scalar task map
          $f\colon M \to \mathbb{R}$,
          $f(\sigma) = \lVert \tilde{q}_{\mathrm{ee}}(\sigma) - q_{\mathrm{goal}} \rVert_2$,
          the chord distance in the $\mathbb{R}^4$ quaternion embedding between
          the goal and the end effector unit quaternion, with
          $\tilde{q}_{\mathrm{ee}} = \operatorname{sign}(q_{\mathrm{ee}}^\top q_{\mathrm{goal}})\,q_{\mathrm{ee}}$
          resolving the antipodal $\pm q$ ambiguity in the universal double cover $\mathbb{S}^3 \to \mathrm{SO}(3)$. The task uses the
          quadratic potential $\Phi(d) = d^2$ and the Euclidean metric on the
          scalar output; a companion damping task applies linear dissipation
          with identity map and Euclidean metric on the $\mathbb{R}^4$
          quaternion.

    \item A \emph{position attractor} task (when active) with task map
          $f\colon M \to \mathbb{R}^3$, with a quadratic
          potential attracting toward a goal position, with the Euclidean metric;
          a companion damping task applies linear dissipation with identity map
          and Euclidean metric on $\mathbb{R}^3$.

    \item A \emph{joint space damping} task with $f = \mathrm{id}\colon M \to \mathbb{R}^7$.

    \item \emph{Joint limit ECBF} tasks with task maps
          $f_k\colon M \to \mathbb{R}$, $k = 1,\ldots,7$, given by coordinate
          projections, each with a safety function
          $h_{0,k}$ enforced via the pullback ECBF (Section~\ref{sec:task_manifold_cbf}).

    \item \emph{Workspace obstacle ECBF} tasks (one per collision body) with per-body safety functions
          $h_{0,j}(\sigma) = d_{\mathrm{geom}}(g_j,\, g_{\mathrm{obs}})$,
          where $d_{\mathrm{geom}}$ is the signed distance between a robot link collision body~$g_j$ and the obstacle sphere~$g_{\mathrm{obs}}$, computed via MuJoCo's \texttt{mj\_geomDistance}.

    \item A \emph{steered action} with scalar control task map
          $f\colon M \to \mathbb{R}\colon \sigma \mapsto \sigma^1$, projecting onto
          joint 1, so the action $u \in \mathbb{R}$ is a residual
          joint 1 acceleration.
\end{enumerate}
All experiments use the same shared scenario unless otherwise noted: the arm starts at its home configuration and must reach a goal end effector orientation obtained from the home orientation by an XYZ Euler rotation of $(45^\circ, 60^\circ, 90^\circ)$, with a spherical obstacle (radius $0.10$\,m) placed in the sweep path of links~4--6. Each run proceeds until the system converges (task error and velocity below tolerance) or reaches a maximum duration of $15$\,s ($20$\,s for the run with action input).

\section{Hardware Setup and Task Specification Details}
\label{app:hw_setup}
This appendix expands on the hardware setup (Section~\ref{sec:hardware_setup}) and the manifold/task specification (Section~\ref{sec:dex_hw_manifolds}) used for the dexterous grasping and in-hand reorientation experiments.

\subsection{Control and Perception}
The Franka Panda arm is controlled at 20\,Hz via the Deoxys framework~\cite{zhu2023viola}: the pre-grasp approach uses the Deoxys joint position controller, and during PBDS execution the streamed joint targets are tracked by the Deoxys joint impedance controller with a min-jerk position interpolator. The Allegro hand is commanded over ZMQ with two control modes: joint-level PD control for free-space positioning (e.g., opening to a pre-grasp configuration) and Cartesian fingertip impedance control with gravity compensation for compliant grasping, with force closure regulated upstream by the PBDS controller.

For the grasping task, runtime perception is provided by an Intel RealSense D435 camera. Object segmentation uses the Segment Anything Model (SAM)~\cite{kirillov2023segment}, and 6-DOF pose tracking is provided by FoundationPose~\cite{wen2024foundationpose}, streamed at 10\,Hz to the controller via a Redis pub/sub interface with exponential smoothing to reduce pose jitter. During grasp execution, streaming perception continuously updates the object pose in the MuJoCo~\cite{todorov2012mujoco} simulation at each control step, enabling the PBDS controller to react to object motion.

For object modeling, we scan each object using KIRI Engine~\cite{kiri_engine} with a LiDAR-equipped iPhone 14 Pro.

\subsection{Task Specification}
\label{app:hw_task_spec}
The grasping and finger-gaiting behaviors share the same overall structure but differ in a small number of policy-specific CBFs and in several gain choices, noted inline below.
\begin{enumerate}
    \item A \emph{joint space damping} task with task map
          $f_D = \mathrm{id}\colon M \to \mathbb{R}^m$, equipped with a
          dissipative force $\mathcal{F}_D = -B\,\dot\sigma$ and the flat metric. For grasping, the damping matrix is diagonal with separate entries, $B_{\mathrm{arm}} = 20\,I_7$ and $B_{\mathrm{hand}} = 0.1\,I_{16}$. For in-hand reorientation, the arm is fixed and $B_{\mathrm{hand}} = 2.0\,I_{16}$.

    \item \emph{Joint limit ECBF} tasks with task maps
          $f_k\colon M \to \mathbb{R}$, $k = 1,\ldots,m$, given by the coordinate
          projections $f_k(\sigma) = \sigma^k$, each encoding the upper and lower joint limits. For grasping, the arm joints use softened gains $(\kappa_1, \kappa_2) = (2, 5)$ (their larger ranges and tighter acceleration limits make $(10, 50)$ QP-infeasible), while the hand joints use $(10, 50)$; for in-hand reorientation, all $16$ hand joints use $(10, 50)$.

    \item \emph{Force closure ECBF} tasks with task map
          $f_{\mathrm{fc},S}\colon M \to \mathbb{R}$ that computes the scaled
          $l^*$ metric~\cite{li2023frogger} over a subset $S \subseteq \{1,2,3,4\}$ of fingertip contacts:
          \[
              h_{0,S}(\sigma) = l^*_S(\sigma) \cdot |S| \cdot n_b - l^*_{\min},
          \]
          where $n_b = 3$ is the number of basis wrenches per contact under the hard-contact friction model with coefficient $\mu = 0.3$, $l^*_{\min} = 0.01$, and $(\kappa_1, \kappa_2) = (2, 4)$.
          Both the full-grasp variant $S = \{1,2,3,4\}$ and the four per-finger-excluded variants $S = \{1,2,3,4\} \setminus \{i\}$ are instantiated; the FSM enables whichever is consistent with the currently contacting fingers.

    \item \emph{Fingertip lift-distance ECBF} tasks with task maps
          $f_{\mathrm{lift},i}\colon M \to \mathbb{R}$, $i = 1,\ldots,4$, given by the signed distance of fingertip~$i$ to the object surface. The safety function is $h_{0,i} = d_i - d_{\mathrm{lift}}$ with $d_{\mathrm{lift}} = 10$\,mm, keeping each stationary fingertip above the surface during the transit of another finger. Gains are $(\kappa_1, \kappa_2) = (12, 7)$ for grasping and $(10, 10)$ for in-hand.

    \item \emph{Finger--finger distance ECBF} tasks with task maps
          $f_{ij}\colon M \to \mathbb{R}$, for each pair $(i,j)$ of distinct fingers, given by the Euclidean distance $d_{ij}$ between fingertip centers. Two barriers are defined per pair: a quadratic \emph{spacing} barrier $h_0 = -(d_{ij} - d_{\mathrm{ref},ij})^2 + \epsilon^2$ with $\epsilon = 1.5$\,mm that keeps each pair within a tolerance of its reference spacing $d_{\mathrm{ref},ij}$, and a linear \emph{minimum-separation} barrier $h_0 = d_{ij} - d_{\min}$ with $d_{\min} = 40$\,mm that prevents fingertip collisions. Gains are $(\kappa_1, \kappa_2) = (12, 7)$ for grasping and $(10, 10)$ for in-hand.

    \item \emph{Table avoidance ECBF} tasks with task maps
          $f_{\mathrm{tab},i}\colon M \to \mathbb{R}$, $i = 1,\ldots,4$, given by the world frame height of each fingertip. The safety function is $h_{0,i} = z_i - z_{\mathrm{table}} - p$ with $p = 10$\,mm fingertip padding, enforced with $(\kappa_1, \kappa_2) = (10, 10)$.

    \item \emph{Link--object ECBF} tasks (grasping only) penalizing interference between non-fingertip hand geometry and the object, comprising:
          a per-finger \emph{fingertip penetration} CBF with $h_{0,i} = d_i + 3\,\mathrm{mm}$ (up to $3$\,mm tip ingress tolerated) and gains $(12, 7)$;
          a \emph{palm} CBF with $h_0 = d_{\mathrm{palm}} - 5\,\mathrm{mm}$ and gains $(12, 7)$;
          per-finger \emph{proximal phalanx} CBFs with $h_{0,i} = d_{\mathrm{prox},i}$ and gains $(4, 5)$; and
          per-finger \emph{medial phalanx} CBFs with $h_{0,i} = d_{\mathrm{med},i} + 5\,\mathrm{mm}$ and gains $(12, 7)$. The sign of the additive constant indicates how much ingress is permitted before the CBF activates.

    \item \emph{Distal link--object ECBF} tasks (in-hand only) with task maps
          $f_{\mathrm{dist},i}\colon M \to \mathbb{R}$, $i = 1,\ldots,4$, given by the signed distance between the distal phalanx of finger~$i$ and the object surface, with $h_{0,i} = d_{\mathrm{dist},i} + 20\,\mathrm{mm}$ (allowing up to $20$\,mm ingress) and soft gains $(\kappa_1, \kappa_2) = (1, 2)$. This replaces the per-link CBFs of item~7, which are deactivated for the finger-gaiting policy.

    \item \emph{Fingertip to object distance actions}. For each fingertip we expose a control task map $f_{\mathrm{lift},i}$ carrying a PD controller on the scheduled fingertip-to-object distance target. The grasping policy additionally exposes an \emph{average-fingertip-position} action task (combined arm--hand only) that aligns the centroid of the four fingertips with the object's geometric center, and a \emph{palm position} action task on the $\mathbb{R}^3$ end effector position. When all action inputs are zero, the system reduces to the autonomous PBDS defined by items~1--8.
\end{enumerate}
For in-hand reorientation, the gaited fingertip's position is controlled in the frame of the trifinger plane spanned by the three in-contact fingertips. For each excluded finger $i$, we define a $10$-dimensional \emph{trifinger plane manifold} $P_{\setminus i}$, whose coordinates are the centroid ($3$D) and orientation quaternion ($4$D) of the plane spanned by the three non-lifted fingertips, together with the position of finger~$i$ relative to that plane ($3$D). The gaited-fingertip action task drives the last $3$ coordinates of $P_{\setminus i}$ toward a planner-supplied target via PD control.

Task weights are modulated dynamically by a finite-state machine (FSM) that coordinates the grasp and gaiting phases (approach, contact, lift, traverse, drop). The FSM selects the active force closure variant $f_{\mathrm{fc},S}$ as fingers enter or leave contact, and toggles $f_{\mathrm{lift},i}$ depending on whether finger~$i$ is stationary or in transit.

\section{Dexterous Grasping Trial Protocol}
\label{app:grasping_protocol}
This appendix expands on the hardware grasping experiments in Section~\ref{sec:dexterous_grasping}.
Each 4-finger grasping trial proceeds as follows:
\begin{enumerate}
    \item The stable rest pose of each object is determined and permuted across 3 table locations.
    \item FoundationPose~\cite{wen2024foundationpose} estimates the object's 6-DOF pose via the Redis streaming interface.
    \item Candidate wrist poses are generated by sampling 18 yaw angles (every $20^\circ$) around the vertical axis. For each yaw, a top-down pose (palm facing downward) is computed above the object's bounding-box center. Three standoff distances (0, 7.5, 15\,mm) along the approach direction are evaluated, yielding up to 54 candidates per object.
    \item Each candidate is filtered through a five-stage pipeline:
          (i)~\emph{object width check}: the object extent along the thumb-to-index aperture axis must not exceed $1.5$ times the open-hand span;
          (ii)~\emph{arm inverse kinematics (IK)}: a MuJoCo-based IK solver verifies reachability with a 2\,cm position tolerance, $5^\circ$ rotation tolerance, and a 5\% joint limit margin;
          (iii)~\emph{palm--object collision}: the palm axis-aligned bounding box is checked against the object with a 1\,cm clearance margin;
          (iv)~\emph{hand--object collision}: full hand--object contact detection via MuJoCo rejects poses with penetration; and
          (v)~\emph{aperture verification}: the object center must lie between the thumb and opposing fingers, and in front of the palm.
          Valid candidates are ranked by a quality metric that favors tight width fit and small standoff distance, and the top-ranked candidate is selected for hardware execution.
    \item Using the latest perception update, candidate generation and ranking are repeated at the start of each hardware attempt, so the chosen pose reflects the current object estimate.
    \item On hardware, the arm first moves to its home configuration and the hand opens to a pre-grasp pose (using a two-phase open: fingertip impedance control for coarse motion, then joint PD for fine positioning). The arm then executes a two-stage approach: first moving to a waypoint 15\,cm above the grasp pose, then descending to the final wrist pose via joint-position control through Deoxys~\cite{zhu2023viola}.
    \item The simulation and hardware joint configurations are synchronized, after which \methodname{} takes over the full $23$-DOF arm--hand system in MuJoCo~\cite{todorov2012mujoco}. Two action tasks are composed at each control step:
          (i)~per-finger \emph{fingertip-to-object distance} tasks that PD-servo each fingertip toward the object surface with a clipped acceleration command, and
          (ii)~an \emph{average-fingertip centering} task on the wrist that drives the centroid of the four fingertips toward the object's mesh geometric center, using adaptive gains that taper as the fingertips close in and that freeze the wrist once all four fingertips are within $1$\,cm of the surface.
          The force closure ECBF (Appendix~\ref{app:hw_task_spec}) is enabled once all active fingertips are within $5$\,cm of the object, so that the final squeeze converges to a certified force-closed grasp. The action tasks and ECBFs are resolved by the PBDS QP into a joint-acceleration command that is integrated through inverse dynamics, with the arm portion streamed back to Deoxys and the hand portion streamed to the Allegro. Object pose is continuously updated from the streaming perception pipeline; once the hand is near the object, visual occlusions become severe, so we hold the last reliable pose observation for the remainder of the grasp execution. A trial is recorded as a \emph{success} if the object is stably lifted with all four fingers in contact, a \emph{partial success} if the object is stably lifted but one finger is not in contact with the object surface, and a \emph{failure} otherwise.
\end{enumerate}

\paragraph{3-finger variant.}
For the 3-finger ablation, the active force closure CBF is switched to the per-finger-excluded variant $f_{\mathrm{fc}, S \setminus \{i\}}$ (Appendix~\ref{app:hw_task_spec}, item~3), so that $l^*$ is computed over the three in-contact fingers. The excluded finger is held clear of the object by (i)~retargeting its fingertip-to-object distance action to a fixed $50$\,mm clearance under the same PD law, and (ii)~enabling its fingertip lift-distance CBF (Appendix~\ref{app:hw_task_spec}, item~4) to enforce the $10$\,mm minimum fingertip-to-object clearance. Besides changing the scalar action target and the active force closure variant, no other policy modification is made.

\begin{table}[ht]
    \centering
    \footnotesize
    \renewcommand{\arraystretch}{0.95}
    \caption{4-finger grasping results across 20 household objects. Each object is tested over 6 trials (2 per table location).}
    \label{tab:grasping_results}
    \begin{tabular}{lc@{\hskip 2em}lc}
        \toprule
        \textbf{Object}                      & \textbf{Succ.}            & \textbf{Object} & \textbf{Succ.}         \\
        \midrule
        Ball                                 & 6                         & Cylinder        & 5                      \\
        Bottle                               & 6                         & Flapjack        & 6$^\dagger$            \\
        Camera (large)                       & 6                         & Meat            & 6                      \\
        Camera (small)                       & 6                         & Mustard         & 5$^\dagger$            \\
        Cereal                               & 6                         & Onion           & 5                      \\
        Cracker                              & 6                         & Prism           & 6                      \\
        Chicken                              & 6                         & Ramen           & 5                      \\
        Chips                                & 6$^\dagger$               & Soup            & 4                      \\
        Cocoa                                & 6                         & Wafer           & 6                      \\
        Cube                                 & 6                         & YCB Mustard     & 3$^\dagger$            \\
        \midrule
        \multicolumn{3}{l}{\textbf{Overall}} & \textbf{111/120 (92.5\%)}                                            \\
        \bottomrule
        \multicolumn{4}{l}{\footnotesize $^\dagger$Includes 1 partial success (finger not in contact with object).} \\
    \end{tabular}
\end{table}

\begin{table}[ht]
    \centering
    \footnotesize
    \renewcommand{\arraystretch}{0.95}
    \caption{3-finger grasping results with one finger excluded, tested across 3 table locations per configuration. Folded into Figure~\ref{fig:hw_3finger_examples} in the main text; reproduced here for the per-object breakdown.}
    \label{tab:grasping_3finger}
    \begin{tabular}{lcccc|c}
        \toprule
        \textbf{Object} & \textbf{Thumb} & \textbf{Index} & \textbf{Middle} & \textbf{Ring} & \textbf{Total}       \\
        \midrule
        Cylinder        & 3              & 3              & 3               & 3             & 12                   \\
        Flapjack        & 3              & 3              & 3               & 3             & 12                   \\
        Camera (large)  & 1              & 3              & 3               & 3             & 10                   \\
        \midrule
        \textbf{Total}  & \textbf{7}     & \textbf{9}     & \textbf{9}      & \textbf{9}    & \textbf{34 (94.4\%)} \\
        \bottomrule
    \end{tabular}
\end{table}

\section{In-Hand Reorientation Planner and State Machine}
\label{app:ihr_planner}
This appendix expands on the offline simulation planner and the lift-and-drop state machine described in Section~\ref{sec:in_hand_reorientation}.

\subsection{Depth-First Search Planner}
Each node in the search tree represents a state with all $4$ fingers in contact with the object, and each edge represents moving one finger to a different contact location. The search is rooted at an initial grasp computed by reusing the grasping pipeline of Section~\ref{sec:dexterous_grasping}: we apply the 4-finger grasp computation to a $60$\,mm-diameter, $85$\,mm-tall cylinder, with the hand reoriented so that the palm faces downward over the object. The resulting hand configuration is opened slightly so that each fingertip is $5$\,mm from the object surface.

Among nodes at the same depth, the priority queue orders by cumulative $z$-axis rotation progress,
\begin{equation}
    \Delta\psi_{\text{total}} = \sum_{k=1}^{N_s} \Delta\psi_k,
\end{equation}
where $\Delta\psi_k$ is the incremental yaw change at step $k$ along the path from the root; this accumulation properly handles rotations exceeding $\pm 180^\circ$. To prevent over-reliance on a single finger, we additionally constrain the move counts across fingers to remain balanced,
\begin{equation}
    \max_{i,j} |c_i - c_j| \leq \Delta c_{\max},
\end{equation}
where $c_i$ is the number of times finger $i$ has been moved; empirically $\Delta c_{\max} = 1$ yields the best rotation progress. The search terminates once a maximum node count is reached, returning the path to the node with highest rotation progress.

To prevent the search from getting stuck with a never-movable finger, the planner maintains a cyclic sequence of ``next movable'' fingers, requiring that at each tree depth, the ``next movable'' finger be movable in any accepted candidate's resulting state. The cycle is fixed by hand side and target rotation direction: for the right hand and counterclockwise rotation we use INDEX$\to$MIDDLE$\to$RING$\to$THUMB; for clockwise rotation we use RING$\to$MIDDLE$\to$INDEX$\to$THUMB. At a node of depth $d$, the next movable finger $f^{\text{next}}_d$ is $\textsf{order}[(s_0 + d) \bmod 4]$, where $s_0$ is the index in $\textsf{order}$ of the first movable finger at the root. The specific cyclic order is not load-bearing; any cycle that visits all four fingers serves the same purpose, namely keeping every finger periodically responsible for gating acceptance.

At each node, the planner identifies which fingers may be lifted without violating force closure, and additionally excludes the finger moved at the parent node so that no finger is moved on two consecutive steps. For each remaining movable finger $i$, candidate actions are generated by sampling:
\begin{enumerate}
    \item Step sizes of $\delta \in \{10, 20, 30, 40\}$\,mm on the world XY plane along the object surface, and
    \item Aperture angles $\phi \in \{30^\circ, 50^\circ, 70^\circ, 90^\circ\}$ controlling the finger's approach direction in the plane perpendicular to the object axis.
\end{enumerate}

\subsection{Lift-and-Drop State Machine}
Each candidate action is forward-simulated using \methodname{} and evaluated by a four-phase state machine that governs a single lift-and-drop primitive: \texttt{LIFTING}, \texttt{TRAVERSING}, \texttt{DROPPING}, and \texttt{ADJUSTING}. In \texttt{LIFTING}, the moving finger is pulled clear of the object; once the fingertip has cleared by a small lift margin, the machine advances to \texttt{TRAVERSING}, during which the finger is steered toward the target contact location. Upon aligning with the target in the plane orthogonal to the approach direction, the machine enters \texttt{DROPPING}, which closes the finger onto the object until re-contact is detected, and finally \texttt{ADJUSTING}, which waits for the hand to come to rest. Throughout all four phases, the three in-contact fingers are held in place by the PBDS controller through the force closure ECBF on the three-finger support set, which keeps them force-closed with the object, and pairwise fingertip spacing ECBFs over pairs of in-contact fingers preserve their relative geometry. During \texttt{TRAVERSING} and \texttt{DROPPING}, the force closure ECBF excluding the next movable finger $f^{\text{next}}_d$ (defined above) is additionally pre-enabled, so that the candidate's resulting state retains movability for that finger.

State transitions are triggered as follows:
\begin{itemize}
    \item \texttt{LIFTING} $\rightarrow$ \texttt{TRAVERSING}: the moving finger's fingertip-to-object distance exceeds $0.007$\,m, which is $70\%$ of the nominal $0.01$\,m lift clearance.
    \item \texttt{TRAVERSING} $\rightarrow$ \texttt{DROPPING}: the component of the fingertip tracking error orthogonal to the approach direction falls below $0.01$\,m.
    \item \texttt{DROPPING} $\rightarrow$ \texttt{ADJUSTING}: the re-contacting fingertip-to-object distance drops below the contact threshold of $0.002$\,m.
\end{itemize}

A transition is accepted only if the machine reaches \texttt{ADJUSTING} and, in that state, simultaneously satisfies all of the following: the moving finger is aligned with its target contact (orthogonal tracking error below $0.01$\,m), the hand has come to rest (joint-velocity norm below $0.2$\,rad/s), all four fingers are in contact with the object (each finger satisfying both the fingertip and fingertip-center object distances below $0.002$\,m), and the force closure ECBF excluding $f^{\text{next}}_d$ remains non-negative. We additionally require the object tilt at this resulting state to stay below $\theta_{\text{tilt}}^{\max} = 10^\circ$, evaluated from the object orientation relative to the search root. Any loss of contact at an in-contact finger during the rollout aborts execution and rejects the transition; failure to meet the \texttt{ADJUSTING} success conditions, or a tilt exceeding $\theta_{\text{tilt}}^{\max}$ at the resulting state, likewise rejects the transition.

        \printbibliography[title={Appendix References}]
\end{refsection}

\end{document}